\theoremstyle{plain}
\newtheorem{theorem}{Theorem}[section]
\newtheorem{lemma}[theorem]{Lemma}
\newtheorem{corollary}[theorem]{Corollary}
\theoremstyle{definition}
\newtheorem{definition}[theorem]{Definition}
\newtheorem{assumption}[theorem]{Assumption}
\theoremstyle{remark}
\icmltitlerunning{Descend or Rewind? Stochastic Gradient Descent Unlearning}
\begin{document}

\twocolumn[
  \icmltitle{Descend or Rewind? Stochastic Gradient Descent Unlearning}



  \icmlsetsymbol{equal}{*}

  \begin{icmlauthorlist}
    \icmlauthor{Siqiao Mu}{yyy}
    \icmlauthor{Diego Klabjan}{xxx}
  \end{icmlauthorlist}

  \icmlaffiliation{yyy}{Department of Engineering Sciences and Applied Mathematics, Northwestern University, Evanston, IL, United States}
  \icmlaffiliation{xxx}{Department of Industrial Engineering and Management Sciences, Evanston, IL, United States}

  \icmlcorrespondingauthor{Siqiao Mu}{siqiaomu2026@u.northwestern.edu}

  \icmlkeywords{Machine Learning, ICML, machine unlearning, differential privacy, stochastic gradient descent, gradient descent, first-order methods}

  \vskip 0.3in
]



\printAffiliationsAndNotice{}  

\begin{abstract}
Machine unlearning algorithms aim to remove the impact of selected training data from a model without the computational expenses of retraining from scratch. Two such algorithms are ``Descent-to-Delete" (D2D) and ``Rewind-to-Delete" (R2D), full-batch gradient descent algorithms that are easy to implement and satisfy provable unlearning guarantees. In particular, the stochastic version of D2D is widely implemented as the ``finetuning" unlearning baseline, despite lacking theoretical backing on nonconvex functions. In this work, we prove $(\varepsilon, \delta)$ certified unlearning guarantees for  \textit{stochastic} R2D and D2D for strongly convex, convex, and nonconvex loss functions, by analyzing unlearning through the lens of disturbed or biased gradient systems, which may be contracting, semi-contracting, or expansive respectively. Our argument relies on optimally coupling the random behavior of the unlearning and retraining trajectories, resulting in a sensitivity bound that holds \textit{in expectation} that yields $(\varepsilon, \delta)$ unlearning. We determine that D2D can yield tighter guarantees for strongly convex functions, but R2D is more appropriate for convex and nonconvex functions. Finally, we compare the algorithms empirically, demonstrating the strengths and weaknesses of each approach.
\end{abstract}

\section{Introduction}

Machine unlearning algorithms aim to remove the influence of specific data from a trained model without retraining from scratch. First introduced in \cite{caoOGMUL}, machine unlearning has attracted significant attention in recent years, driven by heightened concerns over user privacy, data quality, and the energy expenditures of training massive deep learning models such as large language models (LLMs). Regulatory pressures also play a role: provisions in the European Union’s General Data Protection
Regulation (GDPR), the California Consumer Privacy Act (CCPA), and the Canadian Consumer
Privacy Protection Act (CPPA), protect a user's ``right to be forgotten" by requiring that individuals be able to request removal of their personal information, whether stored in databases or retained by models \cite{sekharirememberwhatyouwant}. Since it is impractical to retrain from scratch every time a user requests removal of their data, we desire cost-saving methods for ``forgetting" information from models after training.

To mathematically characterize the extent of unlearning, a rich line of work investigates $(\varepsilon, \delta)$-certified unlearning algorithms, which are theoretically guaranteed to return model weights that are probabilistically indistinguishable to the weights obtained from retraining from scratch \cite{Guo, ginartmakingAI}. This is typically achieved by bounding the distance between the weights after unlearning and after retraining, and  injecting appropriately calibrated Gaussian noise, a technique from differential privacy (DP) known as the Gaussian mechanism \cite{dwork2014algorithmic}. Such provable guarantees can be a powerful alternative to empirical metrics of unlearning, which can be unreliable or misleading \cite{tu2025amia, zhangmiabad}.

However, many existing certified unlearning algorithms have limited practicality for modern deep learning settings. The vast majority are either second-order methods that require access to the  Hessian (or Hessian vector products) \cite{zhang2024towards, qiao2024efficientgeneralizablecertifiedunlearning, sekharirememberwhatyouwant, suriyakumar2022algorithms, Guo} or first-order methods that require computing the full gradient \cite{neel21a, 
chien2024langevin, mu2025rewindtodeletecertifiedmachineunlearning}. Both settings are computationally intractable for large-scale models, which are typically trained with \textit{stochastic gradient descent} (SGD) algorithms. While two recent works are able to show certified unlearning with stochastic gradients, they either prove a weaker version of certified unlearning \cite{koloskova2025certified} or only apply to convex functions \cite{chien2024stochastic}. To reflect realistic practices, we desire SGD-based unlearning algorithms for \textit{nonconvex} functions.

With this goal in mind, we revisit two existing first-order certified unlearning algorithms: Descent-to-Delete (D2D) \cite{neel21a}, designed for (strongly) convex loss functions, and Rewind-to-Delete (R2D) \cite{mu2025rewindtodeletecertifiedmachineunlearning}, designed for nonconvex functions. Both algorithms perform learning via full-batch gradient descent (GD). Upon unlearning, both algorithms perform additional GD steps on the loss function of the retained dataset, but D2D ``descends" from the final trained model, whereas R2D ``rewinds" to an earlier saved checkpoint during training before performing the unlearning steps.  Notably, even though D2D is only theoretically supported for GD on  strongly convex functions, its stochastic version is the basis for the ``finetuning" unlearning baseline method, which is implemented in \textit{virtually all} unlearning papers despite the fact that it underperforms on deep neural networks. The authors of \cite{mu2025rewindtodeletecertifiedmachineunlearning} argue that rewinding, instead of descending, is a more appropriate baseline method for nonconvex settings, but the  analysis in \cite{mu2025rewindtodeletecertifiedmachineunlearning} does not apply to stochastic gradients. Therefore, whether the stochastic versions of these two methods can provably unlearn is an important open question.

In this work, we prove that SGD versions of D2D and R2D, denoted as SGD-D2D and SGD-R2D respectively, do achieve $(\varepsilon, \delta)$ certified unlearning, and we derive their privacy-utility-complexity tradeoffs under looser assumptions than the original works. We first examine SGD-R2D with projection, for which we can achieve clean guarantees with minimal assumptions on strongly convex, convex, and nonconvex functions. On an unbounded domain, we can also prove certified unlearning for SGD-R2D for strongly convex, convex, and nonconvex functions, assuming that the second moment of the noise is relatively bounded and the loss is finite at initialization. Finally, we prove $(\varepsilon, \delta)$ certified unlearning for SGD-D2D on strongly convex functions.

To analyze SGD-R2D, we leverage the contractive properties of gradient systems to carefully track the divergence between the training trajectory and the retraining trajectory. By characterizing SGD on the original loss function as \textit{biased} or \textit{disturbed} SGD on the loss function of the retained data samples, we can achieve noise bounds that reflect the contractive, semi-contractive, and expansive properties of strongly convex, convex, and nonconvex functions respectively. We can also treat SGD-D2D as biased SGD, leveraging  additional favorable properties of strongly convex functions that allow the bias to be ``folded into" the classic convergence analysis, as long as the proportion of unlearned data is small enough. This approach is completely different from the original proof in \cite{neel21a}, which considers a highly constrained setting where the function is both strongly convex and Lipschitz continuous.

Our approach also relies on a key coupling argument; to achieve a minimal bound on the distance between unlearning and retraining, we can control the coupling between the two randomized SGD trajectories, yielding a sensitivity bound that holds \textit{in expectation} with respect to their joint distribution. By combining a first or second moment bound with Markov's inequality, we obtain a tail bound that holds with probability $1 - \delta$, achieving $(\varepsilon, 2\delta)$ indistinguishability at a small cost to the dependence on $\delta$. This novel technique allows a more flexible analysis of SGD algorithms, for which deterministic sensitivity bounds, required by the classic Gaussian mechanism,  can be challenging to obtain.

Our work reveals deeper insights into the difference between ``rewinding" and ``descending." On strongly convex functions, D2D yields tighter probabilistic bounds than R2D. However, D2D is not guaranteed to be more efficient than retraining from scratch if the initial point is sufficiently close to the global minimum. In contrast, R2D is \textit{always} more efficient than retraining from scratch. In fact, for strongly convex functions and constant noise, the number of unlearning iterations $K$ is \textit{better than sublinear} in $T$, the number of training iterations; it converges to a constant for large enough $T$, implying a \textit{potentially infinite computational advantage} $T - K$ for R2D unlearning.

We establish privacy-utility-complexity tradeoffs for SGD-R2D and SGD-D2D, which are easy to implement and are ``black-box," in that they only require noise injection at the end and do not require special algorithmic procedures during training. In addition, we perform experiments demonstrating the strengths of each approach, and illustrating that R2D can indeed circumvent the downsides of D2D in nonconvex settings, such as stalling in a stationary point. Our contributions can be summarized as follows,
\begin{itemize}
    \item We prove $(\varepsilon, \delta)$ certified unlearning for SGD-R2D with and without projection on strongly convex, convex, and nonconvex loss functions.
    
    \item We prove $(\varepsilon, \delta)$ certified unlearning for SGD-D2D on strongly convex functions, using a novel proof approach that circumvents the limiting assumptions of the original work.

    \item We conduct experiments comparing SGD-R2D and SGD-D2D on real-world datasets, confirming that  rewinding is more appropriate for nonconvex settings.
    
\end{itemize}

Code is open-sourced at the anonymous GitHub repository \url{https://anonymous.4open.science/r/r2d2-3753/}.

\begin{table*}[th]
\label{table:firstordermethods}
\centering
\caption{Comparison of first-order algorithms for certified unlearning.}
\begin{small}
\begin{tabular}{llll}
\toprule
\textbf{Algorithm}                   & \textbf{Method}                          & \textbf{Noise}                                 & \textbf{Loss Function}                       \\ \midrule
D2D \cite{neel21a}                  & GD w/ regularization          & At the end  & Convex            \\
R2D \cite{mu2025rewindtodeletecertifiedmachineunlearning} & GD                              & At the end  & Nonconvex                           \\
Langevin Unlearning \cite{chien2024langevin} & Projected GD      & At every step                         &  Nonconvex \\
Langevin SGD \cite{chien2024stochastic}         & Projected SGD                   & At every step                         & Convex            \\
PABI \cite{koloskova2025certified}        & Clipped SGD w/ regularization & At every step          & Nonconvex                           \\ 
PSGD-R2D (Theorem \ref{thm:unlearningprojected})         & Projected SGD                             & At the end  &  Nonconvex                   \\ 
SGD-R2D (Theorem \ref{thm:unlearningr2d})         & SGD                             & At the end  &  Nonconvex                   \\ 
SGD-D2D (Theorem \ref{thm:unlearningstronglyconvex})         & SGD                             & At the end  & Strongly convex                     \\\bottomrule
\end{tabular}
\end{small}
\vskip -0.2in
\end{table*}

\section{Related Work}
\label{section:priorwork}

\textbf{Certified unlearning.} The term machine unlearning was first coined in \cite{caoOGMUL} to address deterministic data deletion. The works \cite{ginartmakingAI, Guo} introduce a probabilistic notion of unlearning that uses differential privacy to theoretically certify the level of deletion. While there exists many second-order certified unlearning algorithms \cite{zhang2024towards, qiao2024efficientgeneralizablecertifiedunlearning, sekharirememberwhatyouwant, suriyakumar2022algorithms, basaran2025a} that require computation of the Hessian or Hessian vector-products, this work focuses on first-order methods (Table 1), which only require access to the gradient and are far more tractable for large-scale problems in terms of computation and storage. Existing first-order methods, detailed in Table 1, include full gradient methods as well as two SGD methods which we now discuss in detail.

First, \cite{chien2024stochastic}  analyzes noisy projected stochastic gradient descent and obtains theoretical guarantees on strongly convex and convex functions, by leveraging the uniqueness of the limiting distribution of the training process. However, their algorithm does not apply to nonconvex functions. Second, \cite{koloskova2025certified} proposes performing SGD with clipping, regularization, and noise on the retain set during unlearning, which advantageously does not require the function to be Lipschitz smooth or convex. However, their algorithm only satisfies a \textit{weaker} ``post-processing" definition of unlearning, which ensures that the model obtained from unlearning one subset is indistinguishable from the model obtained from unlearning another \cite{allouah2025the, sekharirememberwhatyouwant, basaran2025a}. In fact, since the analysis in \cite{koloskova2025certified} relies on privacy amplification by iteration (PABI), it likely cannot be extended to obtain the stronger result in this paper, which achieves indistinguishability between the unlearned and retrained models. 
Both \cite{chien2024stochastic} and \cite{koloskova2025certified} require noise injection at every iteration. In contrast, the D2D and R2D frameworks are ``black-box" in that they only require  noise once after training and once after unlearning. Therefore, they can be applied to models pretrained without any special procedures.

Certified unlearning algorithms have also been developed for specific settings, such as linear and logistic models \cite{Guo, izzo21a}, graph neural networks \cite{chien2022certifiedgraphunlearning}, minimax problems \cite{liu2023certified},  federated learning \cite{pmlr-v238-fraboni24a}, adaptive unlearning requests \cite{gupta2021adaptive, chourasia2023forget}, and online learning \cite{hu2025onlinelearningunlearning}. For additional literature review, see Appendix \ref{section:morerelatedwork}.

\section{Algorithm}

Let $\mathcal{D} = \{z_1,..., z_n \}$ be a training dataset of $n$ data points drawn  from the data distribution $\mathcal{Z}$. Let $A : \mathcal{Z}^n \to \mathbb{R}^d$ be a randomized learning algorithm that trains on $\mathcal{D}$ and outputs a model with weight parameters $\theta \in \mathbb{R}^d$. Typically, the goal of a learning algorithm is to minimize $\mathcal{L}_\mathcal{D}(\theta)$, the empirical loss on $\mathcal{D}$, defined as follows
\begin{equation}
\label{eq:finitesum}
    \mathcal{L}_{\mathcal{D}}(\theta) = \frac{1}{n} \sum_{i = 1}^n \ell(z_i ; \theta),
\end{equation}
where $\ell(z_i ; \theta)$ represents the loss on the sample $z_i$. A standard approach for minimizing $\mathcal{L}_{\mathcal{D}}(\theta)$ is gradient descent:
$$\theta_{t} = \theta_{t-1} - \eta \nabla \mathcal{L}_{\mathcal{D}}(\theta_{t-1}).$$
However, computing the full gradient can be computationally intractable for large models and datasets, motivating SGD algorithms where at each iteration $t$ we uniformly sample $\mathcal{B}_t$, a mini-batch of size $b$ \textit{with replacement} from $\mathcal{D}$ and construct a random gradient estimator $g_{\mathcal{B}_t}(\theta)$ as follows:
$$g_{\mathcal{B}_t}(\theta) =  \frac{1}{b} \sum_{i \in \mathcal{B}_t} \nabla \ell(z_i ; \theta),$$
where $\mathbb{E}[g_{\mathcal{B}_t}(\theta)] = \nabla \mathcal{L}_{\mathcal{D}}(\theta)$. Vanilla SGD algorithms update $\theta$ with this gradient estimator:
\begin{equation}
    \label{eq:SGD}
    \theta_{t} = \theta_{t-1}- \eta g_{\mathcal{B}_t}(\theta_{t-1}).
\end{equation}
Finally, we project the iterates onto a nonempty and convex set $\mathcal{C} \subseteq \mathbb{R}^d$. When $\mathcal{C} = \mathbb{R}^{d}$, this is vanilla SGD, and when $\mathcal{C}$ is closed, this is projected SGD (PSGD). We update $\theta$ as follows,
\begin{align}
\label{eq:PSGD}
    \theta_{t} &= \Pi_{\mathcal{C}}(\theta_{t-1}- \eta g_{\mathcal{B}_t}(\theta_{t-1})),
\end{align}
where $\Pi_{\mathcal{C}}(x)$ denotes the (unique) projection of $\theta$ onto $\mathcal{C}$,
\begin{equation}
\label{eq:projection}
    \Pi_{\mathcal{C}}(x) = \arg \min_{x' \in \mathcal{C}} \lVert x - x' \rVert.
\end{equation}

Let $Z \subset \mathcal{D}$ denote a subset of size $m$ we would like to unlearn, which we call the unlearned set, and let $\mathcal{D}'  = \mathcal{D} \backslash Z$ denote the retained set.
We desire an unlearning algorithm $U$ that removes the influence of  $Z$ from the output of the learning algorithm $A(\mathcal{D})$, such that the model parameters obtained from $U(A(\mathcal{D}), \mathcal{D}, Z)$ are probabilistically \textit{indistinguishable} from the output of $A(\mathcal{D'})$. To formalize this requirement, we borrow from DP the concept of $(\varepsilon, \delta)$-indistinguishability, which requires that the marginal distributions of two algorithm outputs resemble one another. 

Throughout, for a fixed dataset $\mathcal{D}$ and unlearning set $Z$, all probabilities and expectations are taken with respect to the internal randomness of the learning and unlearning algorithms (mini-batch index sampling and additive Gaussian noise). We assume that these random choices are generated from a fixed distribution that does not depend on the data values in $\mathcal{D}$.

\begin{definition}

\cite{dwork2014algorithmic, neel21a} Let $X$ and $Y$ be outputs of randomized algorithms. We say $X$ and $Y$ are $(\varepsilon, \delta)$-indistinguishable if for all $S \subseteq \Omega$,
\begin{align*}
    \mathbb{P}[X \in S] \leq& e^{\varepsilon} \mathbb{P}[Y \in S] + \delta,\\
    \mathbb{P}[Y \in S] \leq& e^{\varepsilon} \mathbb{P}[X \in S] + \delta.
\end{align*}
    
\end{definition}
In the context of DP, $X$ and $Y$ are the learning algorithm outputs on neighboring datasets that differ in a single sample, $\varepsilon$ is the privacy loss or budget, and $\delta$ is the probability that the privacy guarantees might be violated. 

\begin{definition}
\label{def:unlearning}
Let $A$ be a randomized learning algorithm and $U$ a randomized unlearning algorithm. Then $U$ is an $(\varepsilon, \delta)$ certified unlearning algorithm for $A$ if for all $Z \subset \mathcal{D}$, $U(A(\mathcal{D}), \mathcal{D}, Z)$ and $A(\mathcal{D} \backslash Z )$ are  $(\varepsilon, \delta)$-indistinguishable with respect to the algorithmic randomization.
    
\end{definition}

Now we describe the R2D and D2D unlearning frameworks. For both, the learning algorithm involves $T$ training iterations on $\mathcal{L}_{\mathcal{D}}$, and the unlearning algorithm involves $K$ iterations on $\mathcal{L}_{\mathcal{D}'}$. The difference lies in the initialization of the unlearning algorithm: for R2D, the unlearning algorithm is initialized at the $T-K$th iterate of the learning trajectory, whereas for D2D, it is initialized at the $T$th or last iterate of the learning trajectory. Finally, Gaussian noise is added at the end to achieve $(\varepsilon, \delta)$-indistinguishability. These frameworks are easily combined with various gradient methods, including SGD and PSGD.

\begin{algorithm}[H]
\caption{SGD-R2D Learning Algorithm}\label{alg:learnr2d}
\begin{algorithmic}
\REQUIRE dataset $\mathcal{D}$, initial point $\theta_0$, domain $\mathcal{C}$ 

\FOR{t = 1, 2, ..., T}
\STATE Uniformly sample with replacement  $\mathcal{B}_t 
\sim \mathcal{D}$  
$\theta_t = \Pi_{\mathcal{C}}(\theta_{t-1} - \eta g_{\mathcal{B}_t}(\theta_{t-1}))$ 
\ENDFOR
\STATE Save checkpoint $\theta_{T-K}$
\STATE Use $\tilde{\theta} = \theta_T + \xi$, where $\xi \sim \mathcal{N}(0, \sigma^2)$, for inference
\STATE Upon unlearning, execute Algorithm \ref{alg:unlearnr2d}

\end{algorithmic}
\end{algorithm}

\begin{algorithm}[H]
\caption{SGD-R2D Unlearning Algorithm}\label{alg:unlearnr2d}
\begin{algorithmic}
\REQUIRE dataset $\mathcal{D'}$, model checkpoint $\theta_{T-K}$, domain $\mathcal{C}$
\STATE $\theta''_0 = \theta_{T-K}$
\FOR{t = 1, ..., K}
\STATE Uniformly sample with replacement  $\mathcal{B}'_t 
\sim \mathcal{D}'$  

\STATE $\theta''_t = \Pi_{\mathcal{C}}(\theta''_{t-1} - \eta g_{\mathcal{B}'_t}(\theta''_{t-1}))$ 
\ENDFOR
\STATE  Use $\tilde{\theta} = \theta''_K + \xi'$, where $\xi' \sim \mathcal{N}(0, \sigma^2)$, for inference

\end{algorithmic}
\end{algorithm}
\vspace{-0.1in}
\begin{algorithm}[H]
\caption{SGD-D2D Learning Algorithm}\label{alg:learn1}
\begin{algorithmic}
\REQUIRE dataset $\mathcal{D}$, initial point $\theta_0 $ 

\FOR{t = 1, 2, ..., T}
\STATE Uniformly sample with replacement  $\mathcal{B}_t \sim \mathcal{D}$
\STATE $\theta_t = \theta_{t-1} - \eta g_{\mathcal{B}_t}(\theta_{t-1})$ 
\ENDFOR
\STATE Use $\tilde{\theta} = \theta_T + \xi$, where $\xi \sim \mathcal{N}(0, \sigma^2)$, for inference
\STATE Upon unlearning, execute Algorithm \ref{alg:unlearn1}.

\end{algorithmic}
\end{algorithm}
\vspace{-0.1in}
\begin{algorithm}[H]
\caption{SGD-D2D Unlearning Algorithm}\label{alg:unlearn1}
\begin{algorithmic}
\REQUIRE dataset $\mathcal{D'}$, model checkpoint $\theta_{T}$
\STATE $\theta''_0 = \theta_{T}$
\FOR{t = 1, ..., K}
\STATE  Uniformly sample with replacement  $\mathcal{B}'_t \sim \mathcal{D}'$
\STATE $\theta''_t = \theta''_{t-1} - \eta g_{\mathcal{B}'_t}(\theta''_{t-1})$ 
\ENDFOR
\STATE  Use $\tilde{\theta} = \theta''_K + \xi'$, where $\xi' \sim \mathcal{N}(0, \sigma^2)$, for inference

\end{algorithmic}
\end{algorithm}

\section{Analyses}

We first discuss general assumptions required for this work. We assume that the loss function $\ell$ is differentiable and Lipschitz smooth (Assumption \ref{assump:lipschitz}), standard requirements for the analysis of SGD algorithms.

\begin{assumption}
\label{assump:lipschitz}
    For all $z \in \mathcal{Z}$, the function $\ell(z ; \theta)$ is differentiable in $\theta$ and Lipschitz smooth in $\theta$ with constant $L > 0$ such that for any $\theta_1, \theta_2 \in \mathbb{R}^d$,
$$\lVert \nabla \ell(z ; \theta_1) - \nabla \ell(z ; \theta_2) \rVert \leq L \lVert \theta_1 - \theta_2 \rVert.$$
\end{assumption}
\noindent In addition to nonconvex functions, we analyze the cases of convex and strongly convex functions, defined below in Assumptions \ref{assump:convex} and \ref{assump:stronglyconvex}.

\begin{assumption}
\label{assump:convex}
    For all $z \in \mathcal{Z}$, the function $\ell(z; \theta)$ is convex in $\theta$ such that for any $\theta_1, \theta_2 \in \mathbb{R}^d$,
    \begin{align}
        \ell(z; \theta_1) \geq \ell(z; \theta_2) + \langle \nabla \ell(z; \theta_2) , \theta_1 - \theta_2 \rangle. 
    \end{align}
\end{assumption}

\begin{assumption}
\label{assump:stronglyconvex}
For all $z \in \mathcal{Z}$, the function $\ell(z; \theta)$ is $\mu$-strongly convex such that for any $\theta_1, \theta_2 \in \mathbb{R}^d$,
    \begin{equation*}
        \ell(z; \theta_1) \geq \ell(z; \theta_2) + \nabla \ell(z; \theta_2)^T (\theta_1 - \theta_2) + \frac{\mu}{2} \lVert \theta_2 - \theta_1 \rVert^2.
    \end{equation*}
\vspace{-0.3in}
\end{assumption}
Now we broadly describe our proof approach. Our analysis relies on carefully tracking the SGD trajectories on $\mathcal{L}_{\mathcal{D}}$ and $\mathcal{L}_{\mathcal{D}'}$ and bounding the outputs of training on $\mathcal{L}_{\mathcal{D}'}$ and unlearning on $\mathcal{L}_{\mathcal{D}}$. Specifically, we have that $\{\theta_t\}_{t=0}^T$ and $\{\theta'_t\}_{t=0}^T$ represent the learning iterates on $\mathcal{L}_\mathcal{D}$ and $\mathcal{L}_\mathcal{D'}$ respectively, starting from $\theta'_0 = \theta_0$. In addition, $\{\theta''_t\}_{t=0}^{K}$ represents the unlearning iterates on $\mathcal{L}_{\mathcal{D}'}$, where for D2D we have $\theta''_0 = \theta_{T}$ and for R2D we have $\theta''_0 = \theta_{T-K}$. The goal is to bound the distance between $\theta'_T$ and $\theta''_K$. 

As highlighted in the introduction, our analysis relies on a key coupling argument leading to a sensitivity bound that holds \textit{in expectation}. We couple the randomization (the ``coin flips") of the learning and unlearning algorithms to minimize the distances between $\theta_t$ and $\theta'_t$, as well as $\theta'_t$ and $\theta''_t$, by choosing the batches sampled at each time step so that they coincide as much as possible. This approach is similar to existing proofs in differential privacy \cite{abadiDP}. However, these works evaluate the deviation on the full datasets $\mathcal{D}$ and $\mathcal{D}'$ and combine privacy amplification by subsampling with noise at every step to achieve DP. In contrast, to maintain the output perturbation structure (which is amenable to black-box unlearning), we evaluate the sensitivity over the coupling over all steps, obtaining a sensitivity bound on $\lVert \theta'_T - \theta''_K \rVert$ that holds in expectation with respect to the chosen joint distribution. This is combined with Markov's inequality to yield a tail bound of probability $1 - \delta$, producing an $(\varepsilon, 2 \delta)$ guarantee.

\begin{lemma}
\label{lemma:indistinguishfirstsecond}
    Let $x$ and $y$ be random variables over some domain $\Omega$, and let $\xi, \xi'$ be  draws from the Gaussian distribution $\mathcal{N}(0, \sigma^2)$. Then for $0 < \varepsilon \leq 1$ and $\delta > 0$, we have the following.
    \begin{enumerate}
        \item If the second moment of the distance between $x$ and $y$ is bounded as $
            \mathbb{E}[\lVert x - y \rVert^2] \leq \Sigma^2$, then
$X = x + \xi$, $Y =y + \xi'$ are $(\varepsilon, 2 \delta)$-indistinguishable if
$$\sigma = \frac{\Sigma}{\varepsilon} \sqrt{\frac{2 \log (1.25/\delta)}{\delta}}.$$
    \item If the first moment of the distance between $x$ and $y$ is bounded as $\mathbb{E}[\lVert x - y \rVert] \leq \Sigma$,
then $X = x + \xi$, $Y =y + \xi'$ are $(\varepsilon, 2 \delta)$-indistinguishable if
            $$\sigma = \frac{\Sigma \sqrt{2 \log (1.25/\delta)}}{\varepsilon \delta}.$$
    \end{enumerate}
\end{lemma}
\noindent\textit{Proof.} See Appendix \ref{sec:relaxedGM_append}.

Lemma \ref{lemma:indistinguishfirstsecond} maintains differential privacy because the algorithmic randomness (for SGD, the mini-batch indices sampled  uniformly from $\{1,..., n \}$) does not depend on the actual data values. Like classic DP methods, we require that the bound $\Sigma$ also does not encode  information about the specific dataset,  thereby preventing data leakage. We emphasize that our approach differs from $(\varepsilon, \delta, \gamma)$-\textit{random differential privacy} \cite{hallRDP, rubinstein17a}, which considers whether $(\varepsilon, \delta)$-privacy holds with probability $1 - \gamma$ \textit{with respect to the sampling of the dataset} from a well-behaved data distribution. 

Our analysis of the R2D framework relies on characterizing unlearning as \textit{biased} or \textit{disturbed} SGD on a gradient system. The SGD iterates on $\mathcal{L}_{\mathcal{D}}$ are biased SGD iterates on $\mathcal{L}_{\mathcal{D}'}$, where the \textit{unlearning bias} is the difference between the gradients on $\mathcal{L}_{\mathcal{D}}$ and $\mathcal{L}_{\mathcal{D}'}$.
Upon unlearning, we perform \textit{unbiased} SGD on $\mathcal{L}_{\mathcal{D}'}$. Our results on strongly convex, convex, and nonconvex functions align with nonlinear contraction theory \cite{kozachkovgeneralization, sontag2021remarks} that predicts that $\theta_t$ and $\theta'_t$ will stay close for perturbed contracting systems (strongly convex functions), diverge linearly on perturbed semi-contracting systems (convex functions), and diverge exponentially otherwise (nonconvex functions). Like in \cite{mu2025rewindtodeletecertifiedmachineunlearning}, rewinding reverses the accumulation of these disturbances, drawing the unlearning trajectory closer to the retraining trajectory. Moreover, rewinding allows us to completely eliminate the impact of noise during unlearning by coupling. However, because the contraction analysis only holds with respect to a Riemannian metric, we can only achieve a first-moment sensitivity bound.

In contrast, the analysis of D2D relies on additional ``tricks" available for  the analysis of biased gradient descent on strongly convex functions. By relying on the existence and attractivity of the global minimum of strongly convex functions, we can leverage the faster convergence of biased SGD algorithms to achieve a tighter second-moment bound that has a better dependence on $\delta$.

\subsection{PSGD-R2D: Rewinding with Projected SGD}
\label{sec:PSGD_main}
We first consider projected SGD algorithms, for which we can achieve clean unlearning guarantees with minimal assumptions. In particular, in this setting the gradient of the loss function is uniformly bounded over $\mathcal{C}$. This allows us to uniformly bound the unlearning bias without a bounded gradient assumption or clipping.

\begin{theorem} 
\label{thm:unlearningprojected}
Suppose that the loss function $\ell$ satisfies Assumptions \ref{assump:lipschitz}, and within the closed, bounded, and convex set $\mathcal{C} \subset \mathbb{R}^d$, the gradient of $\ell$ is uniformly bounded by some constant $G \geq 0$ such that for all $z \in \mathcal{Z}$ and $\theta \in \mathcal{C}$, $\lVert \nabla \ell(z; \theta) \rVert \leq G$.
We implement PSGD-R2D (Algorithms 1 and 2) with 
$\sigma = \frac{\Sigma \sqrt{2 \log (1.25/\delta)}}{\varepsilon\delta}.$ 
Then for $0 < \varepsilon \leq 1$ and $\delta > 0$, PSGD-R2D is an $(\varepsilon, 2\delta)$-unlearning algorithm for the following choices of $\Sigma$:
\begin{itemize}
    \item For general functions, we have 
    \begin{equation*}
        \Sigma = \frac{2 G m ((1 + \eta L)^T - (1 + \eta L)^K)}{n L}.
    \end{equation*}
    \item If $\ell$ is convex (satisfies Assumption \ref{assump:convex}), then for $\eta \leq \frac{2}{L}$ we have 
    \begin{equation*}
        \Sigma = \frac{2 \eta G m (T- K)}{n}.
    \end{equation*}
    \item If $\ell$ is $\mu$-strongly convex (satisfies Assumption \ref{assump:stronglyconvex}), then for $\eta \leq \frac{\mu}{L^2}$ and $\gamma = \sqrt{1 - \eta \mu}$ we have

    \begin{equation*}
        \Sigma = \frac{2 \eta G m (\gamma ^K - \gamma ^T)}{n \mu }.
    \end{equation*}
\end{itemize}

\end{theorem}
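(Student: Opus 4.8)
The plan is to reduce the theorem to a single first-moment estimate and then invoke the relaxed Gaussian mechanism. Since Definition \ref{def:unlearning} constrains only the marginal laws of the real output $\theta''_K$ and the retrain output $\theta'_T$, I am free to analyze them under any convenient joint coupling of the mini-batch randomness. My target is a bound of the form $\mathbb{E}\lVert \theta'_T - \theta''_K\rVert \leq \Sigma$, where $\Sigma$ is one of the three claimed quantities. Markov's inequality then converts this into $\lVert \theta'_T - \theta''_K\rVert \leq \Sigma/\delta$ with probability at least $1-\delta$, and feeding the high-probability radius $\Sigma/\delta$ into the relaxed Gaussian mechanism with $\sigma \geq \frac{(\Sigma/\delta)\sqrt{2\log(1.25/\delta)}}{\epsilon} = \frac{\Sigma\sqrt{2\log(1.25/\delta)}}{\epsilon\delta}$ yields $(\epsilon, 2\delta)$-indistinguishability. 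This is exactly the stated noise level, so everything reduces to proving the three expectation bounds.

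For the coupling I would split the horizon at the checkpoint $T-K$. Over the first $T-K$ steps I couple the learning batches $\mathcal{B}_t \sim \mathcal{D}$ with the retraining batches $\mathcal{B}'_t \sim \mathcal{D}'$ sample-by-sample, so each drawn index agrees unless it falls in the unlearned set $Z$ (probability $m/n$ per draw), in which case the retraining run redraws from $\mathcal{D}'$. Over the final $K$ steps I couple the unlearning batches $\mathcal{B}'_s$ with the retraining batches $\mathcal{B}'_{T-K+s}$ to be identical, which is legitimate because both are i.i.d. uniform on $\mathcal{D}'$. The analysis then proceeds in two phases along the gap $\Delta_t := \lVert \theta_t - \theta'_t\rVert$, noting $\Delta_0 = 0$.

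In phase one I control $\mathbb{E}[\Delta_t]$ for $t \leq T-K$. Using non-expansiveness of $\Pi_{\mathcal{C}}$, I write $g_{\mathcal{B}_t}(\theta_{t-1}) - g_{\mathcal{B}'_t}(\theta'_{t-1})$ as a same-batch term $g_{\mathcal{B}_t}(\theta_{t-1}) - g_{\mathcal{B}_t}(\theta'_{t-1})$ plus a same-point unlearning-bias term $g_{\mathcal{B}_t}(\theta'_{t-1}) - g_{\mathcal{B}'_t}(\theta'_{t-1})$. Since $g_B$ is the gradient of the average loss over $B$, it inherits $L$-smoothness and (strong) convexity, so the same-batch term contracts or expands the gap by a per-step factor $\rho$ equal to $1+\eta L$ in general (triangle inequality and smoothness), to $1$ under convexity (co-coercivity of $I - \eta g_B$ for $\eta \leq 2/L$), and to $\gamma = \sqrt{1-\eta\mu}$ under strong convexity (for $\eta \leq \mu/L^2$, using $1 - 2\eta\mu + \eta^2 L^2 \leq 1-\eta\mu$). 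The bias term has expected norm at most $\frac{2mG}{n}$, since under the coupling the two batches differ only on $Z$-samples and the gradient is uniformly bounded by $G$ on $\mathcal{C}$ (this is precisely where projection is essential). This yields the recursion $\mathbb{E}[\Delta_t] \leq \rho\,\mathbb{E}[\Delta_{t-1}] + \eta\frac{2mG}{n}$, which unrolls to a geometric sum.

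In phase two the last $K$ retraining and unlearning steps share identical batches of the same loss $\mathcal{L}_{\mathcal{D}'}$, so both runs are driven by the same random map and differ only in their initializations $\theta'_{T-K}$ and $\theta''_0 = \theta_{T-K}$; each step scales their gap by $\rho$, giving $\mathbb{E}\lVert \theta'_T - \theta''_K\rVert \leq \rho^K \mathbb{E}[\Delta_{T-K}]$. Multiplying the phase-one sum by $\rho^K$ produces $\eta\frac{2mG}{n}\sum_{j=K}^{T-1}\rho^j$, which evaluates to the three claimed expressions: $\rho = 1+\eta L$ gives $\frac{2Gm((1+\eta L)^T - (1+\eta L)^K)}{nL}$, $\rho = 1$ gives $\frac{2\eta Gm(T-K)}{n}$, and $\rho = \gamma$ gives the strongly convex bound once the series $\frac{\gamma^K - \gamma^T}{1-\gamma}$ is rewritten using $1-\gamma^2 = \eta\mu$. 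I expect the main obstacle to be the rigorous justification of the per-step factor $\rho$ and the expected-bias bound for the \emph{stochastic, projected} map: one must confirm that co-coercivity and the strong-convexity contraction survive mini-batch sampling and the projection, and that $\frac{2mG}{n}$ controls the bias in expectation at the random iterate $\theta'_{t-1}$ under the sample-level coupling, all without any global gradient bound beyond the uniform bound on $\mathcal{C}$.
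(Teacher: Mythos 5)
Your proposal is correct and takes essentially the same route as the paper: the same sample-level coupling of $\mathcal{B}_t$ and $\mathcal{B}'_t$ over the first $T-K$ steps giving a per-step expected bias of $2\eta G m/n$ plus a per-step factor $\gamma$ from the (averaged, projected) gradient map (Lemmas \ref{lemma:nonexpansive} and \ref{lemma:psgdidkman}), the same identical-batch coupling of the retraining and unlearning runs over the last $K$ steps (Lemma \ref{lemma:sameloss}), and the same Markov-plus-relaxed-Gaussian-mechanism reduction from a first-moment bound (Lemma \ref{lemma:first_moment}). The only point worth flagging is the final algebra in the strongly convex case, where the geometric sum gives $\frac{2\eta Gm(\gamma^K-\gamma^T)}{n(1-\gamma)}$ and the passage to the stated $\frac{2\eta Gm(\gamma^K-\gamma^T)}{n\mu}$ via $1-\gamma^2=\eta\mu$ is not an exact match in constants; this looseness is present in the paper itself and is not specific to your argument.
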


Theorem \ref{thm:unlearningprojected} highlights the privacy-utility-complexity tradeoff of PSGD-R2D, through the relationship between $\Sigma$, $\varepsilon$ and $K$. We observe that for all settings, $\Sigma$ decreases to zero as $K$ increases from zero to $T$, and it decays exponentially for strongly convex functions and linearly for convex functions. For general nonconvex functions, $\Sigma$ still decreases to zero as $K$ goes to $T$, but at a slower rate. This reflects the contractive, semi-contractive, and expansive properties of gradient algorithms on strongly convex, convex, and nonconvex functions respectively.

For strongly convex functions, we achieve $K$ that is \textit{better than sublinear} in $T$ for constant noise; in fact, it converges to a constant for large $T$, implying that $T - K$, the computational advantage of unlearning, is potentially \textit{infinite}. 

\begin{corollary}
\label{cor:sublinear}
    If $\ell$ is $\mu$-strongly convex (satisfies Assumption \ref{assump:stronglyconvex}), then $\Sigma$ is uniformly upper bounded as follows.
    \begin{equation}
        \Sigma \leq \frac{2 \eta G m (1 - \gamma^T)}{n \mu}.
    \end{equation}
    In addition, for constant $\Sigma$ we have
    \begin{equation*}
        K = \frac{\log( \frac{n \mu \Sigma}{2 m \eta G} + \gamma ^T ) }{\log(\gamma)} \leq \frac{\log( \frac{n \mu \Sigma}{2 m \eta G}) }{\log(\gamma)} .
    \end{equation*}
\end{corollary}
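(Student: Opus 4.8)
The plan is to treat this as a direct consequence of the strongly convex expression for $\Sigma$ established in Theorem~\ref{thm:unlearningprojected}, namely $\Sigma = \frac{2\eta G m(\gamma^K - \gamma^T)}{n\mu}$ with $\gamma = \sqrt{1-\eta\mu}$. The entire argument reduces to elementary monotonicity and algebra, so I expect no substantive difficulty; the only point requiring genuine care is verifying that $\gamma \in (0,1)$, since the sign of $\log\gamma$ controls the direction of the final inequality.

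First I would pin down $0 < \gamma < 1$, as both bounds get reused. Since $\eta,\mu>0$ we have $1-\eta\mu < 1$, giving $\gamma < 1$; and the hypothesis $\eta \le \mu/L^2$ together with $\mu \le L$ (which holds for any $\mu$-strongly convex, $L$-smooth loss) yields $\eta\mu \le \mu^2/L^2 \le 1$, so the radicand is nonnegative and $\gamma$ is real, with $\gamma>0$ strictly away from the degenerate boundary $\eta\mu = 1$.

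For the uniform upper bound I would simply observe that $K \ge 0$ and $0 < \gamma < 1$ force $\gamma^K \le 1$, hence $\gamma^K - \gamma^T \le 1 - \gamma^T$. Substituting this into the $\Sigma$ formula gives $\Sigma \le \frac{2\eta G m(1-\gamma^T)}{n\mu}$, which is the first claim; intuitively this is just the no-rewind case $K=0$. For the expression for $K$, I would invert the $\Sigma$ formula: rearranging $\Sigma = \frac{2\eta G m(\gamma^K - \gamma^T)}{n\mu}$ gives $\gamma^K = \frac{n\mu\Sigma}{2m\eta G} + \gamma^T$, and taking logarithms and dividing by $\log\gamma$ produces the stated equality $K = \frac{\log(\frac{n\mu\Sigma}{2m\eta G} + \gamma^T)}{\log\gamma}$.

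The final inequality is where the sign analysis from the first step becomes essential. Because $\gamma^T > 0$, the argument of the logarithm satisfies $\frac{n\mu\Sigma}{2m\eta G} + \gamma^T \ge \frac{n\mu\Sigma}{2m\eta G}$, and monotonicity of $\log$ preserves this ordering; but since $\log\gamma < 0$, dividing by $\log\gamma$ \emph{reverses} the inequality, yielding $K \le \frac{\log(\frac{n\mu\Sigma}{2m\eta G})}{\log\gamma}$. This sign flip is the one place I would be careful, as forgetting that $\gamma \in (0,1)$ would send the inequality the wrong way.
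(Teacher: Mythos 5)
Your proposal is correct and follows essentially the same route as the paper: substitute the strongly convex expression for $\Sigma$, bound $\gamma^K \le 1$ for the uniform bound, and invert the formula by taking logarithms and dividing by $\log\gamma$ for the expression for $K$. Your explicit verification that $\gamma \in (0,1)$ and that dividing by $\log\gamma < 0$ reverses the inequality is a detail the paper's proof leaves implicit, but the argument is identical in substance.
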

\noindent \textit{Proof.} See Appendix \ref{sec:sublinear}.

We note that $\frac{n \mu \Sigma}{2 \eta G} + \gamma^T \leq 1$, so the equation of $K$ is overall positive. For small $T$, $K$ increases at an approximately linear rate, but as $T$ grows to infinity, $K$ converges to a constant value. This implies that for a given level of of privacy $(\varepsilon, \delta)$ and noise, the number of unlearning iterations required is uniformly upper bounded even if the number of training iterations $T$ is arbitrarily large.

\subsection{SGD-R2D: Rewinding with SGD}
Now we consider rewinding without projection. A key challenge of analyzing SGD unlearning algorithms on an unbounded domain is that the noise and unlearning bias may also be unbounded. We therefore require a reasonable assumption that the second moment of the stochastic gradient is relatively bounded, as in Assumption \ref{assump:bc_assump}.

\begin{assumption}
\label{assump:bc_assump}
For all datasets $\tilde{\mathcal{D}} \sim \mathcal{Z}$, let $g_{\mathcal{B}}(\theta)$ represent the gradient estimator constructed from a batch $\mathcal{B} \sim \tilde{\mathcal{D}} $ uniformly  sampled with replacement. Then the second moment of the gradient estimator is relatively bounded with constants $B, C \geq 0$ such that for all $\theta \in \mathbb{R}^d$,
    \begin{equation}
        \mathbb{E}[\lVert g_{\mathcal{B}} (\theta) \rVert^2] \leq B \lVert \nabla \mathcal{L}_{\tilde{\mathcal{D}} }(\theta) \rVert^2 + C.
    \end{equation}
\end{assumption}

\noindent For finite-sum problems (\ref{eq:finitesum}), Assumption \ref{assump:bc_assump} is satisfied by strongly convex loss functions and functions that satisfy the Polyak--\L{}ojasiewicz (PL) inequality, as established in \cite{khaled2020bettertheorysgdnonconvex, garrigos2024handbookconvergencetheoremsstochastic, gower2021sgdstructurednonconvexfunctions}. In particular, for batch sampling with replacement, the constant $B$ can be \textit{explicitly} computed from $L$ and the batch size $b$, and the constant $C$ can be estimated from how close the model is to interpolation, where perfect interpolation (or overparametrization) implies $C = 0$.

To handle the unlearning bias, we observe that it can be bounded by the $\chi^2$-divergence between the empirical distributions of $\mathcal{D}$ and $\mathcal{D}'$, denoted as $P_{\mathcal{D}}$ and $P_{\mathcal{D}'}$ respectively (Lemma \ref{lemma:unlearningbias}). This conveniently allows us to exploit existing conditions on the noise like Assumption \ref{assump:bc_assump}. 

Finally, when we leverage the above rationale, we note that the accumulated bias and noise effects during training can be upper bounded by the loss at initialization. To achieve a dataset-independent result, we require the following assumption that the loss is finite at initialization for all $z \in \mathcal{Z}$.

\begin{assumption}
    \label{assump:boundedell} For all $z \in \mathcal{Z}$ and $\theta \in \mathbb{R}^d$, $\ell(z; \theta) \geq 0$. Moreover, the loss function is bounded at initialization $\theta_0$ by some constant $\ell_{\theta_0}$, such that for all $z \in \mathcal{Z}$, we have $\ell(z ; \theta_0) \leq \ell_{\theta_0}$. 
\end{assumption}
\noindent Assumption \ref{assump:boundedell} is satisfied if the underlying data distribution $\mathcal{Z}$ is closed and bounded.

With the above stipulations, we are able to leverage a similar proof structure as in Section \ref{sec:PSGD_main} to achieve provable SGD-R2D unlearning on unbounded domains as follows.

\begin{theorem} 
\label{thm:unlearningr2d}
Suppose that the loss function $\ell$ satisfies Assumptions \ref{assump:lipschitz}, \ref{assump:bc_assump}, and \ref{assump:boundedell} and we implement SGD-R2D (Algorithms 1 and 2) with $\mathcal{C} = \mathbb{R}^d$ and 
$\sigma = \frac{\Sigma \sqrt{2 \log (1.25/\delta)}}{\varepsilon\delta}.$ 
Then for $0 < \varepsilon \leq 1$ and $\delta > 0$, SGD-R2D is an $(\varepsilon, 2\delta)$-unlearning algorithm for the following $\Sigma$:
\begin{itemize}
    \item For general functions, we have 
    \begin{equation*}
        \Sigma = O(((1 + \eta L)^T - (1 + \eta L)^K)(T - K)^{1/2}).
    \end{equation*}
    \item If $\ell$ is convex (satisfies Assumption \ref{assump:convex}), then for $\eta \leq \frac{2}{L}$ we have 
    \begin{equation*}
        \Sigma = O((T-K)^{3/2}).
    \end{equation*}
    \item If $\ell$ is $\mu$-strongly convex (satisfies Assumption \ref{assump:stronglyconvex}), then for $\eta \leq \frac{\mu}{L^2}$ and $\gamma = \sqrt{1 - \eta \mu}$ we have

    \begin{equation*}
        \Sigma = O((\gamma ^K - \gamma ^T)(T - K)^{1/2}).
    \end{equation*}
\end{itemize}
For all statements, the $O(\cdot)$ notation hides dependencies on $B$, $C$, $\eta$, $L$, $\ell_{\theta_0}$, $m$, $n$, and $\mu$.
\end{theorem}
\noindent \textit{Proof.} See Appendix \ref{sec:SGDR2D_append}.

\subsection{D2D: Descending with SGD}

Finally, we prove that SGD-D2D achieves certified unlearning. We first show that during training, the biased SGD iterates $\{\theta_t \}$ converge to a neighborhood of $\theta^{*'}$, the global minimum of $\mathcal{L}_{\mathcal{D}'}$, and we then show that the unbiased SGD iterates $\{ \theta''_t \}$ converge closer to $\theta^{*'}$. We note that this differs  from the original D2D proof, where Lipschitz \textit{continuity} and strong convexity are combined to show that $\theta^{*'}$ is close to the global minimum of $\mathcal{L}_{\mathcal{D}}$, a property that does not hold outside of this highly constrained setting.

\begin{theorem}
\label{thm:unlearningstronglyconvex}
Suppose that $\ell$ satisfies Assumptions \ref{assump:lipschitz}, \ref{assump:stronglyconvex}, \ref{assump:bc_assump}, and \ref{assump:boundedell}. Let $\eta \leq  \frac{1}{B L}$ and $\frac{m}{n} < \frac{1}{6B + 1}$. Let 
$$T = K + \frac{\log(\ell_{\theta_0}) - \log(\frac{5C}{4 B \mu})}{\log(\frac{1}{1 - \eta \mu/2})}.$$
Then for $0 < \varepsilon \leq 1$ and $\delta > 0$, SGD-D2D (Algorithms \ref{alg:learn1} and \ref{alg:unlearn1}) is an $(\varepsilon, 2 \delta)$-certified unlearning algorithm with 
    $\sigma = \frac{\Sigma}{\varepsilon} \sqrt{\frac{2 \log (1.25/\delta)}{\delta}},$
    where
    $$\Sigma^2 =  \frac{5C}{\mu^2 B}((1 - \frac{\eta \mu}{2})^{2K} + 2(1 - \frac{\eta \mu}{2})^{K}) +  \frac{4 L \eta C}{\mu^2} .$$
\end{theorem}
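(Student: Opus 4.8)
The plan is to reduce everything to a second-moment estimate and then invoke the relaxed Gaussian mechanism. Since unlearning only compares the \emph{marginals} of the retraining output $\theta'_T$ and the unlearning output $\theta''_K$, it suffices to show $\mathbb{E}\!\left[\lVert \theta'_T - \theta''_K\rVert^2\right] \le \Sigma^2$ over the internal randomness. Markov's inequality then gives $\lVert \theta'_T - \theta''_K\rVert \le \Sigma/\sqrt{\delta}$ with probability $1-\delta$, and feeding the high-probability radius $\Sigma/\sqrt{\delta}$ into the relaxed Gaussian mechanism produces exactly the stated $\sigma = \tfrac{\Sigma}{\epsilon}\sqrt{2\log(1.25/\delta)/\delta}$ and $(\epsilon,2\delta)$-indistinguishability; the extra $1/\sqrt{\delta}$ is precisely the price of replacing a worst-case sensitivity by a moment bound. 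The common reference point throughout is $\theta^{*'}$, the unique minimizer of $\mathcal{L}_{\mathcal{D}'}$.

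First I would recast training as \emph{biased} SGD on $\mathcal{L}_{\mathcal{D}'}$: the conditional mean of the update direction is $\nabla\mathcal{L}_{\mathcal{D}}(\theta_t) = \nabla\mathcal{L}_{\mathcal{D}'}(\theta_t) + b_t$ with disturbance $b_t := \nabla\mathcal{L}_{\mathcal{D}}(\theta_t)-\nabla\mathcal{L}_{\mathcal{D}'}(\theta_t)$. Using Lemma~\ref{lemma:unlearningbias} together with $\chi^2(P_{\mathcal{D}'}\,\|\,P_{\mathcal{D}}) = m/(n-m)$ and Assumption~\ref{assump:bc_assump}, I would bound $\lVert b_t\rVert^2 \le \tfrac{m}{n-m}\big(B\lVert\nabla\mathcal{L}_{\mathcal{D}}(\theta_t)\rVert^2 + C\big)$. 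Running the descent lemma for the $L$-smooth $\mathcal{L}_{\mathcal{D}'}$ (Assumption~\ref{assump:lipschitz}), taking conditional expectations, controlling the second moment through Assumption~\ref{assump:bc_assump}, and absorbing the cross term $\langle\nabla\mathcal{L}_{\mathcal{D}'},b_t\rangle$ with a carefully weighted Young's inequality, the conditions $\eta\le \tfrac{1}{BL}$ and $\tfrac{m}{n}<\tfrac{1}{6B+1}$ keep the multiplicative part of the disturbance strictly below the descent coefficient. The PL inequality implied by Assumption~\ref{assump:stronglyconvex} then yields $\mathbb{E}[\mathcal{L}_{\mathcal{D}'}(\theta_{t+1})-\mathcal{L}_{\mathcal{D}'}(\theta^{*'})] \le (1-\tfrac{\eta\mu}{2})\,\mathbb{E}[\mathcal{L}_{\mathcal{D}'}(\theta_t)-\mathcal{L}_{\mathcal{D}'}(\theta^{*'})] + O(\eta C)$. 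Unrolling from $\mathcal{L}_{\mathcal{D}'}(\theta_0)-\mathcal{L}_{\mathcal{D}'}(\theta^{*'})\le \ell_{\theta_0}$ (Assumption~\ref{assump:boundedell}), the stipulated $T$ is exactly what drives the transient $(1-\tfrac{\eta\mu}{2})^{T-K}\ell_{\theta_0}$ down to the neighborhood radius $\tfrac{5C}{4B\mu}$, so $\theta_T$ lands within a dataset-independent neighborhood of $\theta^{*'}$, giving $\mathbb{E}\lVert\theta_T-\theta^{*'}\rVert^2 \lesssim \tfrac{5C}{\mu^2 B}$ by strong convexity.

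Next I would treat the two unbiased trajectories. Both unlearning (from $\theta''_0=\theta_T$, $K$ steps) and retraining (from $\theta_0$, $T$ steps) are \emph{unbiased} SGD on $\mathcal{L}_{\mathcal{D}'}$, so the standard strongly-convex recursion under $\eta\le\tfrac{1}{BL}$ gives $\mathbb{E}\lVert\theta''_K-\theta^{*'}\rVert^2 \le (1-\eta\mu)^K\,\mathbb{E}\lVert\theta_T-\theta^{*'}\rVert^2 + O(\tfrac{\eta C}{\mu})$ and, since $T$ is large, $\mathbb{E}\lVert\theta'_T-\theta^{*'}\rVert^2 \lesssim \tfrac{L\eta C}{\mu^2}$. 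Combining through the $L^2$ triangle inequality $\sqrt{\mathbb{E}\lVert\theta''_K-\theta'_T\rVert^2} \le \sqrt{\mathbb{E}\lVert\theta''_K-\theta^{*'}\rVert^2} + \sqrt{\mathbb{E}\lVert\theta'_T-\theta^{*'}\rVert^2}$ and squaring, the transient-squared, cross, and floor contributions separate; loosening $(1-\eta\mu)^K \le (1-\tfrac{\eta\mu}{2})^{2K}$ supplies the $2K$ exponent, and the fact that the training neighborhood and the SGD floor are of the same order ($\sim \tfrac{C}{\mu^2 B}$ under $\eta\le\tfrac{1}{BL}$) lets the cross term collapse into the coefficient $\tfrac{5C}{\mu^2 B}$. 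This reproduces $\Sigma^2 = \tfrac{5C}{\mu^2 B}\big((1-\tfrac{\eta\mu}{2})^{2K} + 2(1-\tfrac{\eta\mu}{2})^{K}\big) + \tfrac{4L\eta C}{\mu^2}$, where coupling shared mini-batches wherever the trajectories overlap is what legitimately cancels the noise contributions.

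The main obstacle is the biased-training step. Because the disturbance enters \emph{multiplicatively} through the $B\lVert\nabla\mathcal{L}_{\mathcal{D}}\rVert^2$ term of Assumption~\ref{assump:bc_assump}, a crude bound lets it cancel the descent outright; the coupled roles of $\eta\le\tfrac{1}{BL}$ and $\tfrac{m}{n}<\tfrac{1}{6B+1}$ are precisely to force the disturbance's multiplicative share below the descent coefficient while its additive ($C$-proportional) share merely inflates the neighborhood radius to $\tfrac{5C}{4B\mu}$. Tuning the Young's-inequality weights so that the cross term and the second-moment term together leave a clean $(1-\tfrac{\eta\mu}{2})$ factor, and then translating the resulting $\lVert\nabla\mathcal{L}_{\mathcal{D}}\rVert^2$ control back into a function-gap recursion for $\mathcal{L}_{\mathcal{D}'}$ via PL, is the delicate bookkeeping on which the theorem rests.
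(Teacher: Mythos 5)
Your proposal is correct and follows essentially the same route as the paper: a biased-SGD descent lemma on $\mathcal{L}_{\mathcal{D}'}$ with the relative bias bound activated by $\tfrac{m}{n}<\tfrac{1}{6B+1}$ and PL giving linear convergence of $\theta_T$ to a $\tfrac{5C}{4B\mu}$-neighborhood, standard unbiased recursions for $\theta'_T$ and $\theta''_K$, a triangle inequality through $\theta^{*'}$ with quadratic growth, and the second-moment relaxed Gaussian mechanism. The only cosmetic inaccuracies are that the paper's D2D argument needs no mini-batch coupling (the noise is not cancelled but survives as the $\tfrac{4L\eta C}{\mu^2}$ floor), and that the transient of the retraining trajectory $\theta'_T$ is not simply "large $T$"-negligible but is what supplies one of the two $(1-\tfrac{\eta\mu}{2})^{K}$ terms in $\Sigma^2$.
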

\noindent \textit{Proof.} See Appendix \ref{sec:SGD-D2D-append}.

Theorem \ref{thm:unlearningstronglyconvex} shows that SGD-D2D can achieve certified unlearning as long as the proportion of unlearned data is small enough, which allows the unlearning bias to be ``folded into" the standard unbiased SGD analysis, yielding linear convergence to the global minimum. The noise decays \textit{exponentially} with the number of unlearning iterates $K$. This mirrors the original D2D result in \cite{neel21a}, but the stochastic setting yields an $O(\eta)$ constant term at the end which cannot be eliminated. Moreover, without the bounded gradient/Lipschitz smoothness assumption of \cite{neel21a}, the computation advantage of unlearning, $T - K$, depends on the initialization; in fact, if $\theta_0$ happens to be close enough to the optimum of $\mathcal{L}_{\mathcal{D}'}$, then unlearning may not be more efficient than retraining from scratch.

\section{Experiments}

\begin{figure*}[ht]
  \centering
\includegraphics[width=0.47\linewidth]{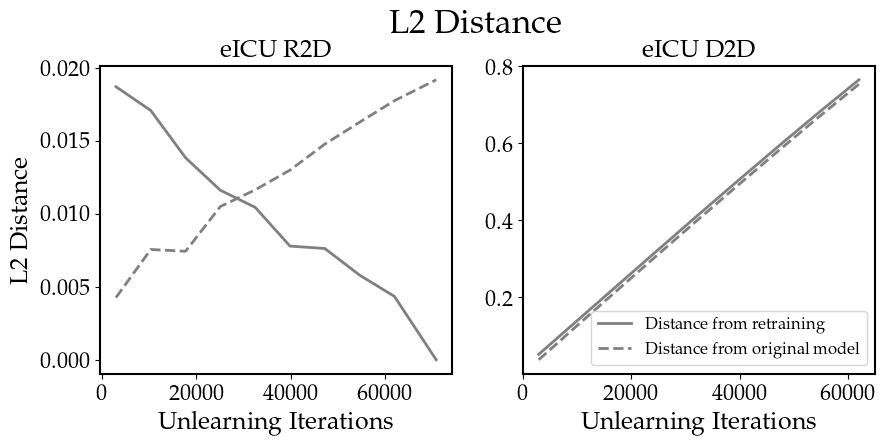}
\includegraphics[width=0.47\linewidth]{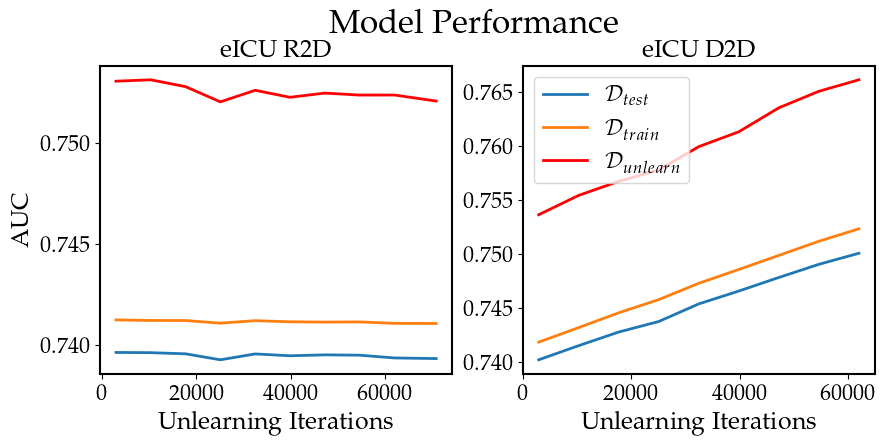}

\includegraphics[width=0.47\linewidth]{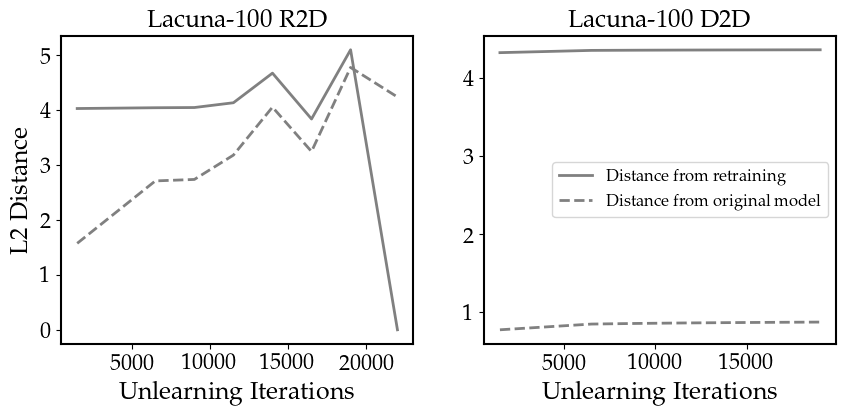}
\includegraphics[width=0.47\linewidth]{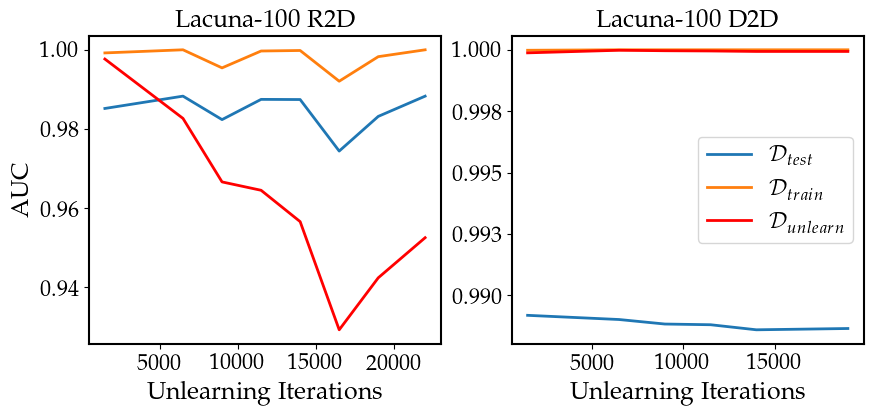}
  \caption{Comparison of unlearning capabilities of noiseless R2D and D2D. The eICU results are in the top row and the Lacuna-100 results are in the bottom row. The left two columns display the L2 distance from unlearning and the original model as the number of unlearning iterations increases. The right two columns display the model performance on the unlearned, retained, and test sets, denoted as $\mathcal{D}_{unlearn}$, $\mathcal{D}_{train}$, and $\mathcal{D}_{test}$ respectively.}
  \label{fig:unlearning_nonoise}
\vspace{-0.1in}
\end{figure*}

We follow the experimental setup in \cite{mu2025rewindtodeletecertifiedmachineunlearning} applied to PSGD instead of GD. For all experiments, we train a binary classifier with SGD and the cross-entropy loss function, and we unlearn a subset of the data from the trained model. For small-scale experiments, we use the eICU dataset \cite{Pollard_Johnson_Raffa_Celi_Mark_Badawi_2018}, a large multi-center intensive care
unit (ICU) tabular dataset, and we train a multilayer perceptron (MLP) with three hidden layers to predict whether patients stay in the hospital for longer or shorter than a week, using their intake data. For large-sclae experiments, we consider the Lacuna-100 dataset \cite{golatkar1}, constructed from the VGGFace2 dataset \cite{Cao18} and the MAAD-Face annotations \cite{terh} as described in \cite{mu2025rewindtodeletecertifiedmachineunlearning}, and we train a ResNet-18 model \cite{He_2016_CVPR} to perform binary gender classification. In contrast to prior work, our experiments focus on examining the difference between descending vs. rewinding. We considering varying $K$, or number of unlearning iterations, and we evaluate the effects of D2D and R2D at the same level of computation and batch size.  For additional experimental details, see Appendix \ref{sec:experiments}.

We evaluate unlearning efficacy using three metrics. First, we compute the L2 distance in parameter space between unlearning and retraining, where better unlearning minimizes this distance. Second, we consider the model performance on the unlearned dataset $\mathcal{D}_{unlearn}$ before and after unlearning. A  decrease in performance suggests that the model is losing information about the unlearned samples. Finally, we apply membership inference attacks (MIAs) which attempt to distinguish between unlearned data and data that has never been in the training dataset. A less successful attack, indicated by a lower Area Under the Receiver Operating Characteristic Curve (AUC), indicates that unlearning is more successful. We utilize both the classic MIA \cite{ShokriSS16} that considers the model outputs after unlearning as well as the more sophisticated unlearning MIA (MIA-U) \cite{chen_mia} which compares the model outputs before and after unlearning.

Figure \ref{fig:unlearning_nonoise} compares the first two unlearning metrics, L2 distance and model performance, for R2D and D2D. Our two experimental settings illustrate varying practical outcomes of the unlearning methods. For eICU, R2D has a reliable outcome: the unlearned model moves away from the original model and towards the retrained model. This is reflected in the L2 distance as well as the model performance on $\mathcal{D}_{unlearn}$, which decreases with increased rewinding even as the performance on the retained and test sets remain constant. In contrast, D2D causes the model to progress away from both the original and retrained models. The model performance on all data, including $\mathcal{D}_{unlearn}$, improves, suggesting that the optimization algorithm has found a new descent direction on the new loss function $\mathcal{L}_{\mathcal{D}'}$. For Lacuna-100, rewinding still reliably moves the model away from the original trained model, but the L2 distance from the retrained model does not change significantly. This may be because in the stochastic setting, R2D is only theoretically guaranteed to reduce the distance in expectation. However, R2D does still reduce the model performance on $\mathcal{D}_{unlearn}$ while keeping the performance on other sets constant. In contrast, D2D displays very little movement in parameter space or in model performance, suggesting that it is stalled at a stationary point at initialization. Moreover, a large number of descent steps risks overfitting, as shown in the decrease in performance on $\mathcal{D}_{test}$.

Figure \ref{fig:mia} displays the results of the MIAs. While the output of the MIAs can be highly nonlinear (especially for non-i.i.d. data), in general we observe that more unlearning decreases the attack success, especially for rewinding. Moreover, adding a small amount of noise tends to reduce the success of the attack. Ultimately, we find that descent-based algorithms (including D2D, \cite{chien2024stochastic}, and \cite{koloskova2023revisitinggradientclippingstochastic}) can either improve the model performance on all sets, which may obfuscate whether unlearning has occurred, or it may stall at a stationary point. On the other hand, R2D has a more reliable unlearning effect, but may not get any performance boost from unlearning.

For additional results and experimental details, including the privacy-utility-complexity tradeoff of the derived guarantees, comparisons with baseline certified unlearning methods, and anonymous GitHub repository, see Appendix \ref{sec:experiments}.

\begin{figure}[b!]
    \centering
    \includegraphics[width=0.97\linewidth]{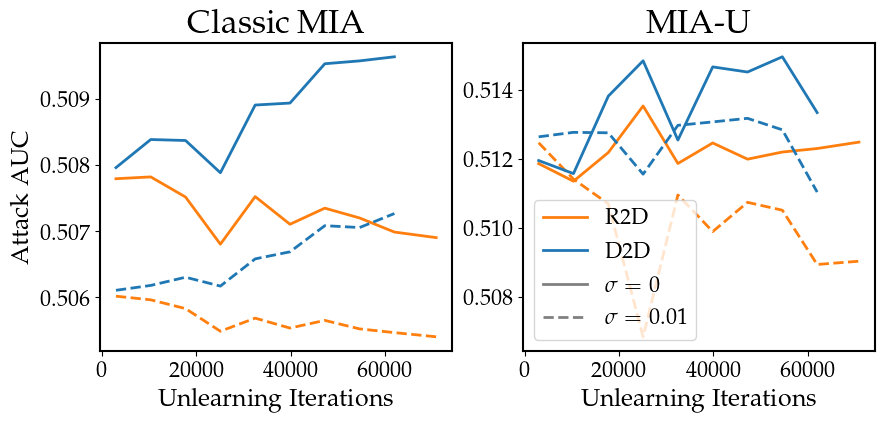}
    \includegraphics[width=0.97\linewidth]{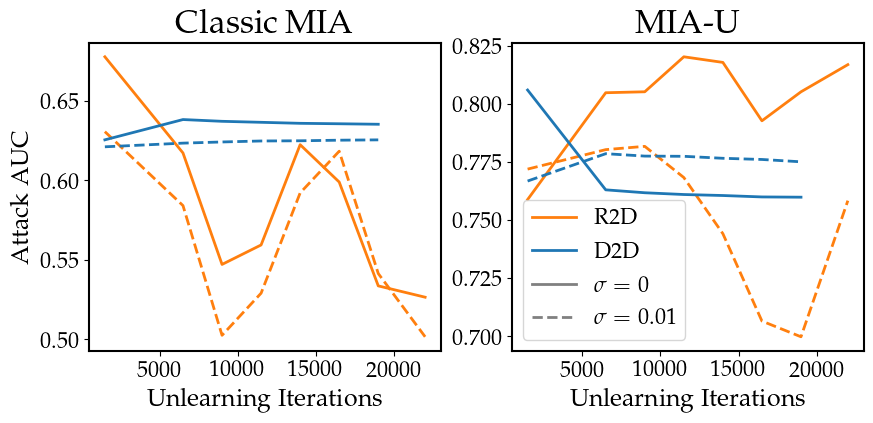}
    \caption{Comparison of MIA success for R2D and D2D. The left column displays the results of the classic MIA, and the right column displays the results of the unlearning MIA.}
    \label{fig:mia}
\end{figure}

\section{Conclusion}

In this work, we prove certified unlearning guarantees for SGD-R2D and SGD-D2D. Our analysis demonstrates that we can achieve a tighter bound for SGD-D2D on strongly convex functions, whereas SGD-R2D, which relies on the underlying contractivity of the gradient system, has a looser first-moment bound. However, SGD-R2D is always more computationally efficient than retraining. Experiments demonstrate the practical effects of each method.

\pagebreak

\section*{Impact Statement}

This paper presents work whose goal is to advance the field of Machine
Learning. There are many potential societal consequences of our work, none
which we feel must be specifically highlighted here.

\bibliography{example_paper}
\bibliographystyle{icml2026}

\newpage
\appendix
\onecolumn
\appendix

\section{Proofs}

\subsection{Overview}

In Appendix \ref{sec:helpers}, we list some helper lemmas that are standard results in probability and optimization that are useful for this work. In Appendix \ref{sec:relaxedGM_append}, we prove the theorems governing the first or second moment sensitivity bounds, which are used for all later proofs. In Appendix \ref{sec:contraction_append}, we establish the contractive, semi-contractive, and expansive properties of gradient algorithms on strongly convex, convex, and nonconvex functions respectively. These results, which reflect well-known behaviors of gradient systems, are essential for proving unlearning for PSGD-R2D in Appendix \ref{sec:PSGD-R2D_append} and SGD-R2D in Appendix \ref{sec:SGDR2D_append}. Finally, in Appendix \ref{sec:SGD-D2D-append}, we prove unlearning for SGD-D2D, under the same unbounded domain conditions of the analysis of SGD-R2D in the previous section.

\subsection{Helper lemmas}
\label{sec:helpers}

The following lemmas are standard results in probability and optimization which we utilize to obtain the results in this paper.

\begin{lemma}
Let $\mathcal{C} \subset \mathbb{R}^d$ be a nonempty, closed, and convex set. For each $\theta \in \mathbb{R}^d$, let $\Pi_{\mathcal{C}}(\theta)$ denote the (unique) projection of $\theta$ onto $\mathcal{C}$ as defined in (\ref{eq:projection}). Then the projection is nonexpansive such that for all $\theta, \theta' \in \mathbb{R}^d$,
\begin{equation}
\label{eq:projection_nonexpansive}
    \lVert \Pi_{\mathcal{C}}(\theta) - \Pi_{\mathcal{C}}(\theta') \rVert \leq \lVert \theta - \theta' \rVert. 
\end{equation}
\end{lemma}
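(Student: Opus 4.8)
The plan is to establish the standard variational (obtuse-angle) characterization of the Euclidean projection onto a closed convex set, and then deduce nonexpansiveness from it by a short inner-product manipulation. The key object is the first-order optimality condition for the minimization problem defining $\Pi_{\mathcal{C}}$ in (\ref{eq:projection}).

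First, I would record the variational inequality: for any $\theta \in \mathbb{R}^d$ with projection $p = \Pi_{\mathcal{C}}(\theta)$, we have $\langle \theta - p, x - p \rangle \leq 0$ for all $x \in \mathcal{C}$. This holds because $p$ minimizes the strictly convex objective $x' \mapsto \frac{1}{2}\lVert \theta - x' \rVert^2$ over the convex set $\mathcal{C}$; for any feasible $x$, the chord $(1-s)p + sx$ stays in $\mathcal{C}$ for $s \in [0,1]$, so the objective cannot decrease as $s$ increases from $0$, and differentiating at $s = 0$ yields the inequality. Existence and uniqueness of $p$ are guaranteed by $\mathcal{C}$ being nonempty, closed, and convex, together with strict convexity and coercivity of the squared distance.

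Next, I would apply this variational inequality at both points, writing $p = \Pi_{\mathcal{C}}(\theta)$ and $p' = \Pi_{\mathcal{C}}(\theta')$, and choosing $x = p'$ in the inequality for $\theta$ and $x = p$ in the inequality for $\theta'$. Adding the two and collecting terms gives $\lVert p - p' \rVert^2 \leq \langle \theta - \theta', p - p' \rangle$. Applying Cauchy--Schwarz to the right-hand side yields $\lVert p - p' \rVert^2 \leq \lVert \theta - \theta' \rVert \, \lVert p - p' \rVert$; dividing through by $\lVert p - p' \rVert$ (the case $p = p'$ being trivial) gives exactly (\ref{eq:projection_nonexpansive}).

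There is no serious obstacle here, as this is a classical fact; the only point requiring care is the justification of the variational inequality from first-order optimality, since $\mathcal{C}$ need not have nonempty interior and $p$ may lie on its boundary. The chord/directional-derivative argument above handles this cleanly without any interiority or smoothness assumptions, relying only on convexity of $\mathcal{C}$ and the feasibility of the segments $(1-s)p + sx$.
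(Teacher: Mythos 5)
Your proof is correct: the variational inequality $\langle \theta - p, x - p \rangle \leq 0$ for all $x \in \mathcal{C}$ is properly justified via the chord/directional-derivative argument, and the summation plus Cauchy--Schwarz step cleanly yields firm nonexpansiveness (in fact the stronger firm-nonexpansiveness inequality $\lVert p - p' \rVert^2 \leq \langle \theta - \theta', p - p' \rangle$). The paper states this lemma without proof as a standard result, so there is no in-paper argument to compare against; your write-up is the classical textbook derivation and fills the omission completely.
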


\begin{lemma}
\label{lemma:amgm}
    For constants $a \geq 0$, $b \geq 0$, we have
\begin{align*}
\sqrt{a + b} \leq \sqrt{a} + \sqrt{b} &\leq \sqrt{2(a + b)} ,\\
\sqrt{a} + \sqrt{b} + \sqrt{c} &\leq\sqrt{3(a + b + c)}.
\end{align*}
\end{lemma}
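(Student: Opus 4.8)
The plan is to establish all three inequalities by squaring, which is valid because every term is nonnegative and the square-root function is monotone on $[0,\infty)$. Since each side of each inequality is nonnegative, an inequality $P \le Q$ with $P, Q \ge 0$ holds if and only if $P^2 \le Q^2$, so it suffices to verify the squared versions.

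First I would handle the two-term chain. For the left inequality $\sqrt{a+b} \le \sqrt{a} + \sqrt{b}$, squaring the right-hand side gives $(\sqrt{a}+\sqrt{b})^2 = a + b + 2\sqrt{ab}$, so the claim reduces to $a + b \le a + b + 2\sqrt{ab}$, i.e. $0 \le 2\sqrt{ab}$, which is immediate from $a,b \ge 0$. For the right inequality $\sqrt{a} + \sqrt{b} \le \sqrt{2(a+b)}$, squaring both sides reduces it to $a + b + 2\sqrt{ab} \le 2(a+b)$, i.e. $2\sqrt{ab} \le a + b$. This last statement is just the AM--GM inequality, equivalent to $(\sqrt{a} - \sqrt{b})^2 \ge 0$, which always holds.

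For the three-term inequality $\sqrt{a} + \sqrt{b} + \sqrt{c} \le \sqrt{3(a+b+c)}$, I would invoke Cauchy--Schwarz with the vectors $(1,1,1)$ and $(\sqrt{a}, \sqrt{b}, \sqrt{c})$, giving $(\sqrt{a} + \sqrt{b} + \sqrt{c})^2 \le (1^2 + 1^2 + 1^2)(a + b + c) = 3(a+b+c)$; taking square roots of both nonnegative sides yields the claim. Equivalently, one can appeal to the concavity of $\sqrt{\cdot}$ (Jensen's inequality applied to the uniform average of $a,b,c$), but the Cauchy--Schwarz route is the cleanest.

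There is essentially no obstacle here: the only point requiring care is confirming that every quantity being squared is nonnegative, so that squaring neither reverses nor introduces spurious inequalities. Once that is noted, each step collapses to either the trivial bound $\sqrt{ab} \ge 0$, the AM--GM inequality, or a single application of Cauchy--Schwarz.
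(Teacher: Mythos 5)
Your proof is correct and uses the same core idea as the paper, which simply expands $(\sqrt{a}+\sqrt{b})^2 = a + 2\sqrt{ab} + b \le 2(a+b)$ via AM--GM and leaves the remaining cases implicit. Your version is more complete, since you also verify the left inequality and handle the three-term case explicitly (via Cauchy--Schwarz, which is the natural generalization of the paper's squaring argument).
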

\begin{proof}
    We can use the AM-GM inequality as follows.
    \begin{equation*}
        (\sqrt{a} + \sqrt{b})^2 = a + 2 \sqrt{ab} + b \leq 2(a + b).
    \end{equation*}
\end{proof}

\begin{lemma}
    Suppose $X$, $Y$ are independent random variables over the same domain and $\mathbb{E}[X] = 0$. Then 
    $$\mathbb{E}[\lVert X - Y \rVert^2] = \mathbb{E}[\lVert X \rVert^2] + \mathbb{E}[\lVert Y \rVert^2].$$
\end{lemma}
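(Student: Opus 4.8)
The plan is to expand the squared norm of the difference and then use independence together with the zero-mean hypothesis to annihilate the resulting cross term. First I would write the pointwise algebraic identity for the squared Euclidean norm,
\[
\lVert X - Y \rVert^2 = \lVert X \rVert^2 - 2 \langle X, Y \rangle + \lVert Y \rVert^2,
\]
which holds for every realization of $(X,Y)$. Taking expectations and applying linearity (valid under the implicit assumption that $X$ and $Y$ have finite second moments, so each term is integrable) gives
\[
\mathbb{E}[\lVert X - Y \rVert^2] = \mathbb{E}[\lVert X \rVert^2] - 2\,\mathbb{E}[\langle X, Y \rangle] + \mathbb{E}[\lVert Y \rVert^2].
\]

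The crux is to show that the cross term $\mathbb{E}[\langle X, Y \rangle]$ vanishes. Writing the inner product coordinate-wise as $\langle X, Y \rangle = \sum_{i=1}^d X_i Y_i$ and invoking independence of $X$ and $Y$, each scalar product factors as $\mathbb{E}[X_i Y_i] = \mathbb{E}[X_i]\,\mathbb{E}[Y_i]$, so that $\mathbb{E}[\langle X, Y \rangle] = \langle \mathbb{E}[X], \mathbb{E}[Y] \rangle$. Since $\mathbb{E}[X] = 0$ by hypothesis, this cross term is zero, and substituting back yields the claimed identity.

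The only genuine subtlety, and it is a minor one, is the justification that the expectation of the product factors, which is precisely the statement that independence implies uncorrelatedness in each coordinate; this is standard and requires only that the relevant first moments exist so that Fubini applies on the product measure. There is no substantive obstacle here: once the cross term is eliminated, the result is immediate. I would note in passing that the hypothesis is one-sided (only $X$ need be centered, which is all that is used), and that the lemma is exactly the Pythagorean identity for $L^2$ random vectors that we invoke to decouple independent noise contributions from the remaining terms elsewhere in the analysis.
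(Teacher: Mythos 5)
Your proof is correct and is the standard argument; the paper states this lemma without proof as a standard fact, and your expansion of the squared norm plus the vanishing of the cross term $\mathbb{E}[\langle X, Y\rangle] = \langle \mathbb{E}[X], \mathbb{E}[Y]\rangle = 0$ via independence and $\mathbb{E}[X]=0$ is exactly the intended justification. Your remarks on finite second moments and the one-sidedness of the centering hypothesis are accurate but not points of divergence from the paper.
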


\begin{lemma}
\label{lemma:lipschitzsmooth}
    Suppose for all $z \in \mathcal{D}$, $\ell(z ; \theta)$ is $L$-Lipschitz smooth for all $\theta \in \mathbb{R}^d$. Then $\mathcal{L}_{\mathcal{D}}(\theta) = \frac{1}{n} \sum_{i = 1}^n \ell(z ; \theta) $ is aSlso $L$-Lipschitz smooth in $\theta$.
\end{lemma}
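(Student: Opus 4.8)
The statement is a routine consequence of two facts: the gradient of a finite average is the average of the gradients, and the triangle inequality respects averaging. The plan is to prove the defining inequality for $L$-Lipschitz smoothness of $\mathcal{L}_{\mathcal{D}}$ directly, namely that $\lVert \nabla \mathcal{L}_{\mathcal{D}}(\theta_1) - \nabla \mathcal{L}_{\mathcal{D}}(\theta_2) \rVert \leq L \lVert \theta_1 - \theta_2 \rVert$ for arbitrary $\theta_1, \theta_2 \in \mathbb{R}^d$, and then invoke the per-sample hypothesis term by term.

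First I would use linearity of differentiation over the finite sum (\ref{eq:finitesum}) to write $\nabla \mathcal{L}_{\mathcal{D}}(\theta) = \frac{1}{n} \sum_{i=1}^n \nabla \ell(z_i ; \theta)$; this exchange of gradient and sum is immediate since the sum is finite and each $\ell(z_i ; \cdot)$ is differentiable by Assumption \ref{assump:lipschitz}. Next I would form the difference $\nabla \mathcal{L}_{\mathcal{D}}(\theta_1) - \nabla \mathcal{L}_{\mathcal{D}}(\theta_2) = \frac{1}{n} \sum_{i=1}^n \bigl( \nabla \ell(z_i ; \theta_1) - \nabla \ell(z_i ; \theta_2) \bigr)$ and apply the triangle inequality to pull the norm inside the average, giving a bound by $\frac{1}{n} \sum_{i=1}^n \lVert \nabla \ell(z_i ; \theta_1) - \nabla \ell(z_i ; \theta_2) \rVert$.

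Finally, I would apply the hypothesis that each $\ell(z_i ; \cdot)$ is $L$-Lipschitz smooth, bounding every summand by $L \lVert \theta_1 - \theta_2 \rVert$. Since there are $n$ identical terms and they are scaled by $\frac{1}{n}$, the factors cancel and the bound collapses to exactly $L \lVert \theta_1 - \theta_2 \rVert$, which is the claim. There is no genuine obstacle here; the only point that might warrant a word is the interchange of the gradient with the sum, but for a finite sum this is elementary, so the entire argument is a two-line calculation.
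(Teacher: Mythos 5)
Your argument is correct and is exactly the standard proof the paper leaves implicit (the lemma is stated without proof among the helper lemmas): differentiate the finite sum term by term, apply the triangle inequality to the averaged difference of gradients, and bound each summand by $L \lVert \theta_1 - \theta_2 \rVert$. Nothing is missing.
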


\begin{lemma}
    \label{lemma:towerproperty} (Tower property of expectation)
    Let $(\Omega,\mathcal{F},\mathbb{P})$ be a probability space and
 $\{\mathcal{F}_t\}_{t\in T}$ a filtration with $\mathcal{F}_s\subseteq\mathcal{F}_t\subseteq\mathcal{F}$ for $s\leq t$.
 If the random variable $X$ is integrable ($\mathbb{E}[|X|] < \infty$), then for all $s\le t$:
\[
\mathbb{E}[\mathbb{E}[X \mid \mathcal{F}_t]
\,|\,\mathcal{F}_s]=
\mathbb{E}[X \mid \mathcal{F}_s]
\quad\text{a.s.}
\]
\end{lemma}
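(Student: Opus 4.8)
The plan is to prove the identity directly from the characterizing property of conditional expectation: for any sub-$\sigma$-algebra $\mathcal{G} \subseteq \mathcal{F}$, the conditional expectation $\mathbb{E}[X \mid \mathcal{G}]$ is the a.s.-unique $\mathcal{G}$-measurable, integrable random variable $W$ satisfying $\int_A W \, d\mathbb{P} = \int_A X \, d\mathbb{P}$ for every $A \in \mathcal{G}$. Since the right-hand side $\mathbb{E}[X \mid \mathcal{F}_s]$ is pinned down by exactly this property, it suffices to show that the left-hand side $\mathbb{E}[\mathbb{E}[X \mid \mathcal{F}_t] \mid \mathcal{F}_s]$ is $\mathcal{F}_s$-measurable and integrable, and that it integrates to the same value as $X$ over every $A \in \mathcal{F}_s$; uniqueness then forces the two sides to agree a.s.

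First I would set $Y := \mathbb{E}[X \mid \mathcal{F}_t]$, which is $\mathcal{F}_t$-measurable and integrable (integrability follows from $\mathbb{E}|Y| \le \mathbb{E}|X| < \infty$ via the conditional Jensen or triangle inequality). Then $\mathbb{E}[Y \mid \mathcal{F}_s]$ is $\mathcal{F}_s$-measurable and integrable by construction. Fixing an arbitrary $A \in \mathcal{F}_s$, the defining property of $\mathbb{E}[Y \mid \mathcal{F}_s]$ gives
\[
\int_A \mathbb{E}[Y \mid \mathcal{F}_s] \, d\mathbb{P} = \int_A Y \, d\mathbb{P} = \int_A \mathbb{E}[X \mid \mathcal{F}_t] \, d\mathbb{P}.
\]
Next I would invoke the filtration nesting $\mathcal{F}_s \subseteq \mathcal{F}_t$: because $A \in \mathcal{F}_s$ we also have $A \in \mathcal{F}_t$, so the defining property of $\mathbb{E}[X \mid \mathcal{F}_t]$ applies to this same $A$ and yields $\int_A \mathbb{E}[X \mid \mathcal{F}_t] \, d\mathbb{P} = \int_A X \, d\mathbb{P}$. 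Chaining the equalities gives $\int_A \mathbb{E}[Y \mid \mathcal{F}_s] \, d\mathbb{P} = \int_A X \, d\mathbb{P}$ for every $A \in \mathcal{F}_s$, which is exactly the characterizing identity for $\mathbb{E}[X \mid \mathcal{F}_s]$.

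The conceptual crux --- and the only place the hypotheses are genuinely needed --- is the inclusion $A \in \mathcal{F}_s \subseteq \mathcal{F}_t$, which lets the integral identity for the coarser conditioning be read off the finer one on every set of the smaller $\sigma$-algebra. Everything else is bookkeeping: measurability and integrability are immediate from the construction, and the conclusion follows from the a.s.-uniqueness of conditional expectation. I do not anticipate a serious obstacle here; the only care required is to recognize that all equalities are statements about $\mathbb{P}$-integrals over arbitrary $A \in \mathcal{F}_s$, so that the resulting equality of the two conditional expectations holds almost surely rather than pointwise, and that the hypothesis $\mathbb{E}[|X|] < \infty$ is what guarantees every conditional expectation in sight is well defined and finite a.s.
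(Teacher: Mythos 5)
Your proof is correct and is the standard textbook argument: verify that $\mathbb{E}[\mathbb{E}[X\mid\mathcal{F}_t]\mid\mathcal{F}_s]$ is $\mathcal{F}_s$-measurable, integrable, and integrates to $\int_A X\,d\mathbb{P}$ over every $A\in\mathcal{F}_s$ (using $\mathcal{F}_s\subseteq\mathcal{F}_t$ for the middle step), then invoke a.s.-uniqueness of conditional expectation. The paper states this lemma as a standard helper result and gives no proof of its own, so there is nothing to compare against; your argument fills that in correctly and with the right attention to where integrability and the filtration nesting are actually used.
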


The following lemmas pertain to functions that satisfy the Polyak--\L{}ojasiewicz (PL) inequality. PL functions can be nonconvex, and $\mu$-strongly convex functions are also $\mu$-PL.

\begin{definition}
\label{def:PL}
    Let $\mathcal{L} : \mathbb{R}^d \to \mathbb{R}$ be a differentiable function and denote $\mathcal{L}^\star := \inf_{\theta \in \mathbb{R}^d} \mathcal{L}(\theta).$
    We say that $\mathcal{L}$ satisfies the \emph{Polyak--\L{}ojasiewicz (PL) inequality} with parameter $\mu > 0$ if
    \begin{equation}
        \label{eq:PL-inequality}
        \frac{1}{2}\,\|\nabla \mathcal{L}(\theta)\|^2 
        \;\ge\; 
        \mu \bigl( \mathcal{L}(\theta) - \mathcal{L}^\star \bigr)
        \qquad \text{for all } \theta \in \mathbb{R}^d.
    \end{equation}

\end{definition}

\begin{lemma} \cite{karimiPL}
\label{lemma:qg}
Suppose a function $\mathcal{L}: \mathbb{R}^d \to \mathbb{R}$ satisfies the PL inequality (Definition \ref{def:PL}) with parameter $\mu > 0$. Let $\mathcal{X} = \arg \min_{\theta \in \Theta} \mathcal{L}(\theta)$ represent the solution set of $\mathcal{L}$, and let $\theta^* = Proj_{\mathcal{X}}(\theta) = \arg \min _{x \in \mathcal{X}} \lVert x - \theta\rVert $. Then $\mathcal{L}$ satisfies the quadratic growth condition such that 
\begin{equation}
\label{eq:qg}
\mathcal{L}(\theta) - \mathcal{L}(\theta^*) \geq \frac{\mu}{2} \lVert \theta^* - \theta \rVert^2 \qquad \text{for all } \theta \in \mathbb{R}^d.
\end{equation}
\end{lemma}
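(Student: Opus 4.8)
The plan is to derive the quadratic growth (QG) condition from the PL inequality by a continuous-time gradient-flow argument, following the classical route of \cite{karimiPL}. First I would dispose of the trivial case: if $\theta \in \mathcal{X}$, then $\theta^* = \theta$ and both sides of (\ref{eq:qg}) vanish, so assume $\mathcal{L}(\theta) > \mathcal{L}^\star$. The core idea is to run the gradient flow $\dot{\theta}(t) = -\nabla \mathcal{L}(\theta(t))$ with $\theta(0) = \theta$ and to bound the \emph{total arc length} of this curve in terms of the initial suboptimality $\mathcal{L}(\theta) - \mathcal{L}^\star$. Since $\ell$, and hence $\mathcal{L}$, is $L$-Lipschitz smooth (Assumption \ref{assump:lipschitz} and Lemma \ref{lemma:lipschitzsmooth}), the vector field $-\nabla \mathcal{L}$ is Lipschitz, so the flow has a unique solution; a Gr\"onwall bound on $\lVert \theta(t) \rVert$ rules out finite-time blow-up and gives existence for all $t \ge 0$.

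Next I would establish two estimates along the flow. Differentiating gives $\frac{d}{dt}\bigl(\mathcal{L}(\theta(t)) - \mathcal{L}^\star\bigr) = -\lVert \nabla \mathcal{L}(\theta(t)) \rVert^2$, and substituting the PL inequality (\ref{eq:PL-inequality}) yields $\frac{d}{dt}\bigl(\mathcal{L}(\theta(t)) - \mathcal{L}^\star\bigr) \le -2\mu\bigl(\mathcal{L}(\theta(t)) - \mathcal{L}^\star\bigr)$, so $\mathcal{L}(\theta(t)) - \mathcal{L}^\star \le e^{-2\mu t}\bigl(\mathcal{L}(\theta) - \mathcal{L}^\star\bigr) \to 0$. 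Conversely, $L$-smoothness supplies the matching lower bound $\lVert \nabla \mathcal{L}(\theta(t)) \rVert^2 \le 2L\bigl(\mathcal{L}(\theta(t)) - \mathcal{L}^\star\bigr)$, which forces $\mathcal{L}(\theta(t)) - \mathcal{L}^\star$ to stay strictly positive for every finite $t$. Hence $h(t) := \sqrt{\mathcal{L}(\theta(t)) - \mathcal{L}^\star}$ is well-defined and differentiable with $h'(t) = -\lVert \nabla \mathcal{L}(\theta(t)) \rVert^2 / (2 h(t))$. Applying PL in the form $\lVert \nabla \mathcal{L}(\theta(t)) \rVert \ge \sqrt{2\mu}\, h(t)$ produces the crucial pointwise inequality $\lVert \nabla \mathcal{L}(\theta(t)) \rVert \le -\sqrt{2/\mu}\, h'(t)$.

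The conclusion then follows by integration. The arc length obeys $\int_0^\infty \lVert \dot{\theta}(t) \rVert \, dt = \int_0^\infty \lVert \nabla \mathcal{L}(\theta(t)) \rVert \, dt \le \sqrt{2/\mu}\,\bigl(h(0) - \lim_{t\to\infty} h(t)\bigr) = \sqrt{2/\mu}\,\sqrt{\mathcal{L}(\theta) - \mathcal{L}^\star}$, using $h(\infty) = 0$. Finiteness of this length makes the curve Cauchy, so it converges to some $\theta_\infty$; by continuity $\mathcal{L}(\theta_\infty) = \mathcal{L}^\star$, hence $\theta_\infty \in \mathcal{X}$. Since $\theta^* = \mathrm{Proj}_{\mathcal{X}}(\theta)$ is the nearest point of $\mathcal{X}$ to $\theta$, and the chord length is bounded by the arc length, I get $\lVert \theta - \theta^* \rVert \le \lVert \theta - \theta_\infty \rVert \le \sqrt{2/\mu}\,\sqrt{\mathcal{L}(\theta) - \mathcal{L}^\star}$. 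Squaring and using $\mathcal{L}(\theta^*) = \mathcal{L}^\star$ rearranges to exactly (\ref{eq:qg}).

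The main obstacle is analytic rather than algebraic: pinning down the gradient-flow machinery, namely global existence, the strict positivity of $h(t)$ that legitimizes the chain rule for $h'$, and above all the implication \emph{finite arc length $\Rightarrow$ convergence to a minimizer $\theta_\infty \in \mathcal{X}$}. The smoothness lower bound is what keeps $h(t) > 0$ at every finite time (giving $h(t) \ge h(0)e^{-Lt}$), so differentiability of $h$ never breaks down; once these regularity points are secured, chaining PL into the arc-length bound is immediate.
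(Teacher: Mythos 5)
The paper does not actually prove this lemma---it is imported with the citation \cite{karimiPL}---and your gradient-flow arc-length argument (running $\dot{\theta}(t) = -\nabla \mathcal{L}(\theta(t))$, setting $h(t)=\sqrt{\mathcal{L}(\theta(t))-\mathcal{L}^\star}$, and bounding the path length $\int_0^\infty \lVert \nabla \mathcal{L}(\theta(t))\rVert\,dt \le \sqrt{2/\mu}\,\sqrt{\mathcal{L}(\theta)-\mathcal{L}^\star}$ so that the limit $\theta_\infty \in \mathcal{X}$ certifies $\lVert \theta - \theta^* \rVert \le \sqrt{2/\mu}\,\sqrt{\mathcal{L}(\theta)-\mathcal{L}^\star}$) is precisely the proof given in that cited reference, and it is correct, including the careful regularity points (global well-posedness, strict positivity of $h$ via $\lVert \nabla \mathcal{L} \rVert^2 \le 2L(\mathcal{L}-\mathcal{L}^\star)$, and finite length implying convergence to a minimizer). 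The one caveat worth flagging is that these regularity steps use $L$-Lipschitz smoothness, which is not among the lemma's stated hypotheses (only PL and differentiability), but it is a standing assumption both here (Assumption \ref{assump:lipschitz} with Lemma \ref{lemma:lipschitzsmooth}) and in \cite{karimiPL}, so nothing breaks in context.
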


\begin{lemma}
\label{lemma:muL}
    Let $\mathcal{L}(\theta)$ be a $L$-Lipschitz smooth, $\mu$-PL function. Then $\mu \leq L$. 
\end{lemma}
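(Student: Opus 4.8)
The plan is to combine the PL inequality with the standard gradient-norm upper bound implied by $L$-Lipschitz smoothness, and then cancel the common suboptimality factor. Both ingredients bound $\|\nabla \mathcal{L}(\theta)\|^2$ against the suboptimality gap $\mathcal{L}(\theta) - \mathcal{L}^\star$ from opposite sides, so chaining them should immediately yield $\mu \le L$.

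First I would establish the smoothness-side inequality. Writing $\theta^+ = \theta - \frac{1}{L}\nabla\mathcal{L}(\theta)$ for a single gradient step, the descent lemma (the quadratic upper bound that follows from $L$-smoothness) gives $\mathcal{L}(\theta^+) \le \mathcal{L}(\theta) - \frac{1}{2L}\|\nabla\mathcal{L}(\theta)\|^2$. Since $\mathcal{L}^\star = \inf_{\theta}\mathcal{L}(\theta) \le \mathcal{L}(\theta^+)$, rearranging yields $\tfrac{1}{2}\|\nabla\mathcal{L}(\theta)\|^2 \le L\bigl(\mathcal{L}(\theta) - \mathcal{L}^\star\bigr)$ for every $\theta$. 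Next I would invoke the PL inequality (Definition \ref{def:PL}) in the form $\mu\bigl(\mathcal{L}(\theta) - \mathcal{L}^\star\bigr) \le \tfrac{1}{2}\|\nabla\mathcal{L}(\theta)\|^2$ and chain the two bounds to obtain $\mu\bigl(\mathcal{L}(\theta) - \mathcal{L}^\star\bigr) \le L\bigl(\mathcal{L}(\theta) - \mathcal{L}^\star\bigr)$ for all $\theta$. To finish, I would pick any $\theta$ with $\mathcal{L}(\theta) > \mathcal{L}^\star$ and divide through by the strictly positive factor $\mathcal{L}(\theta) - \mathcal{L}^\star$ to conclude $\mu \le L$.

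The only delicate point, and the main obstacle, is the degenerate case where $\mathcal{L}(\theta) = \mathcal{L}^\star$ for all $\theta$, i.e. $\mathcal{L}$ is constant and $\nabla\mathcal{L} \equiv 0$; then both the PL and the smoothness inequalities hold trivially with both sides equal to zero and carry no information relating $\mu$ and $L$. I would handle this by observing that the PL parameter is only meaningful for a non-constant objective (for a constant function the inequality is satisfied by every $\mu$), so the claim is understood for non-constant $\mathcal{L}$, for which a point with strictly positive suboptimality is guaranteed to exist.

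An alternative route that reuses results already in the excerpt is to apply the quadratic growth bound of Lemma \ref{lemma:qg}, namely $\mathcal{L}(\theta) - \mathcal{L}(\theta^*) \ge \tfrac{\mu}{2}\|\theta - \theta^*\|^2$, together with the smoothness upper bound $\mathcal{L}(\theta) - \mathcal{L}(\theta^*) \le \tfrac{L}{2}\|\theta - \theta^*\|^2$ obtained by expanding around a minimizer $\theta^*$, where $\nabla\mathcal{L}(\theta^*) = 0$. Cancelling $\|\theta - \theta^*\|^2$ for any $\theta \ne \theta^*$ again gives $\mu \le L$; this variant only additionally requires that the solution set be nonempty, which is exactly the standing hypothesis of Lemma \ref{lemma:qg}.
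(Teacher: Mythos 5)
Your primary argument is correct but takes a genuinely different route from the paper. The paper proves the lemma exactly as in your ``alternative route'': it takes a minimizer $\theta^*$, uses $L$-smoothness with $\nabla\mathcal{L}(\theta^*)=0$ to get $\mathcal{L}(\theta)-\mathcal{L}(\theta^*) \le \tfrac{L}{2}\|\theta-\theta^*\|^2$, combines this with the quadratic growth bound of Lemma \ref{lemma:qg}, and cancels $\|\theta-\theta^*\|^2$. Your main proof instead sandwiches $\tfrac{1}{2}\|\nabla\mathcal{L}(\theta)\|^2$ between $\mu(\mathcal{L}(\theta)-\mathcal{L}^\star)$ (PL) and $L(\mathcal{L}(\theta)-\mathcal{L}^\star)$ (descent lemma) and cancels the suboptimality gap. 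The trade-offs: your route needs only the descent lemma and the infimum $\mathcal{L}^\star$, not an attained minimizer or Lemma \ref{lemma:qg}, so it is slightly more elementary and more general; in exchange you must (and do) dispose of the degenerate case where $\mathcal{L}$ is constant, a case the paper's proof also silently skips (its cancellation likewise requires some $\theta \ne \theta^*$ with nonzero distance, and Lemma \ref{lemma:qg} presupposes a nonempty solution set). Both arguments are sound; yours is a clean alternative and your closing paragraph recovers the paper's own proof.
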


\begin{proof}
    Let $\theta^*$ represent the global minima of $\mathcal{L}$. Then by Lipschitz smoothness and the fact that $\nabla \mathcal{L}(\theta^*) = 0$, we have for any $\theta \in \mathbb{R}^d$,
    $$\mathcal{L}(\theta) - \mathcal{L}(\theta^*) \leq \frac{L}{2} \lVert \theta - \theta^* \rVert^2.$$
    Then by quadratic growth (\ref{eq:qg}),
    $$\frac{\mu}{2} \lVert \theta - \theta^* \rVert^2 \leq \frac{L}{2}\lVert \theta - \theta^* \rVert^2,$$
    $$\mu \leq L.$$
\end{proof}

\subsection{Proof of Lemma \ref{lemma:indistinguishfirstsecond}}
\label{sec:relaxedGM_append}

\begin{lemma}
\label{lemma:second_moment}
    Let $x$ and $y$ be random variables over some domain $\Omega$, and suppose
    $$\mathbb{E}[\lVert x - y \rVert^2] \leq \Sigma^2.$$
    Let $\xi, \xi'$ be draws from the Gaussian distribution $\mathcal{N}(0, \sigma^2)$. Then $X = x + \xi$, $Y =y + \xi'$ are $(\varepsilon, 2 \delta)$-indistinguishable if
$$\sigma = \frac{\Sigma}{\varepsilon} \sqrt{\frac{2 \log (1.25/\delta)}{\delta}}.$$
\end{lemma}

\begin{proof}
    By Markov's inequality, for any $t > 0$ we have the tail bound
    $$\mathbb{P}[\lVert x - y \rVert > t] \leq \frac{\Sigma^2}{t^2},$$
    so for $0 < \delta < 1,$
    $$\mathbb{P}[\lVert x - y \rVert > \frac{\Sigma }{\sqrt{\delta}}] \leq \delta.$$
    Let $E$ represent the event that $\lVert x - y \rVert \leq \frac{\Sigma }{\sqrt{\delta}}$, such that $\mathbb{P}[E] > 1 - \delta$ and $\mathbb{P}[\neg E] \leq \delta$. Then when $E$ occurs, the standard Gaussian mechanism holds with sensitivity $\frac{\Sigma}{\sqrt{\delta}}$, and we have for $\sigma = \frac{\Sigma \sqrt{2 \log(1.25/\delta)}}{\sqrt{\delta} \varepsilon }$
    \begin{align*}
        \mathbb{P}[X \in S | E] \leq e^{\varepsilon} \mathbb{P}[Y \in S | E] + \delta. 
    \end{align*}
    Without loss of generality, we have for $S \subset \Omega$, 
    \begin{align*}
        \mathbb{P}[X \in S] =& \mathbb{P}[X \in S \,|\, E] \mathbb{P}[E] + \mathbb{P}[X \in S \,|\, \neg E] \mathbb{P}[\neg E]\\
        \leq& \mathbb{P}[X \in S \,|\, E] \mathbb{P}[E] + \delta\\
        \leq& (e^{\varepsilon} \mathbb{P}[Y \in S \,|\, E] + \delta)\mathbb{P}[E] + \delta \\
        \leq& e^{\varepsilon} \mathbb{P}[Y \in S \,|\, E]\mathbb{P}[E] + \delta + \delta \\
        =& e^{\varepsilon} \mathbb{P}[ E \,|\, Y \in S]\mathbb{P}[Y \in S] + 2 \delta \\
        \leq& e^{\varepsilon} \mathbb{P}[Y \in S] + 2\delta.
    \end{align*}
\end{proof}

We note that independence is not  required for Lemma \ref{lemma:second_moment} to hold, as the bounds are true for arbitrary joint distribution between $x$, $y$, $X$ and $Y$. 


\begin{lemma}
\label{lemma:first_moment}
    Let $x$ and $y$ be random variables over some domain $\Omega$, and suppose
    $$\mathbb{E}[\lVert x - y \rVert] \leq \Sigma.$$
    Let $\xi, \xi'$ be draws from the Gaussian distribution $\mathcal{N}(0, \sigma^2)$. Then $X = x + \xi$, $Y =y + \xi'$ are $(\varepsilon, 2 \delta)$-indistinguishable if
$$\sigma = \frac{\Sigma \sqrt{2 \log (1.25/\delta)}}{\varepsilon \delta}.$$
\end{lemma}

\begin{proof}
    By Markov's inequality, for any $t > 0$ we obtain the tail bound
    $$\mathbb{P}[\lVert x - y \rVert > t] \leq \frac{\Sigma}{t},$$
    so for $0 < \delta < 1$
    $$\mathbb{P}[\lVert x - y \rVert > \frac{\Sigma }{\delta}] \leq \delta.$$
\end{proof}

\subsection{Contraction of gradient systems}
\label{sec:contraction_append}

\begin{lemma}
\label{lemma:nonexpansive} 
    Suppose $\ell$ satisfies Assumption \ref{assump:lipschitz}. For some data sample $z \sim \mathcal{Z}$, we define the gradient descent map as 
    \begin{equation*}
        T_{\eta}^{z} (\theta) = \theta - \eta \nabla \ell (z; \theta).
    \end{equation*}
\begin{enumerate}
    \item For general $L$-smooth function $\ell$, we have for all $z \in \mathcal{Z}$
    \begin{equation}
    \label{eq:part1}
\lVert T^z_{\eta} (\theta) - T^z_{\eta} (\theta') \rVert \leq (1 + \eta L) \lVert \theta - \theta' \rVert.
\end{equation}
\item If $\ell$ is convex and $\eta \leq \frac{2}{L}$, we have that the map is nonexpansive such that for all $z \in \mathcal{Z}$
\begin{equation}
\label{eq:part2}
\lVert T^z_{\eta} (\theta) - T^z_{\eta} (\theta') \rVert \leq \lVert \theta - \theta' \rVert.
\end{equation}
\item If $\ell$ is $\mu$-strongly convex and $\eta \leq \frac{\mu}{L^2}$, we have that the map is contractive such that for all $z \in \mathcal{Z}$
\begin{equation}
\label{eq:part3}
\lVert T^z_{\eta} (\theta) - T^z_{\eta} (\theta') \rVert^2 \leq (1 - \mu \eta ) \lVert \theta - \theta' \rVert^2.
\end{equation}

\item Finally, let $\tilde{\mathcal{D}} \sim \mathcal{Z}$ represent a dataset of arbitrary size, and define the map
\begin{equation*}
T^{\tilde{\mathcal{D}}}_\eta(\theta) = \theta - \eta \frac{1}{|\tilde{\mathcal{D}|}} \sum_{z \in \tilde{\mathcal{D}}} \nabla \ell(z ; \theta).
\end{equation*}
If there exists $\gamma > 0$ such that $\lVert T^z_{\eta} (\theta) - T^z_{\eta} (\theta') \rVert \leq \gamma \lVert \theta - \theta' \rVert$ for all $z \in \mathcal{Z}$, then for all $\tilde{\mathcal{D}}$ we have
\begin{equation}
\label{eq:part4}
    \lVert T^{\tilde{\mathcal{D}}}_{\eta} (\theta) - T^{\tilde{\mathcal{D}}}_{\eta} (\theta') \rVert \leq \gamma \lVert \theta - \theta' \rVert.
\end{equation} 
\end{enumerate}

\end{lemma}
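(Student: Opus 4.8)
The plan is to treat all four parts uniformly by analyzing the displacement of the gradient descent map, writing $\Delta := \theta - \theta'$ and $g := \nabla \ell(z;\theta) - \nabla \ell(z;\theta')$ for brevity, so that $T^z_\eta(\theta) - T^z_\eta(\theta') = \Delta - \eta g$. Part 1 is immediate: by the triangle inequality $\lVert \Delta - \eta g \rVert \leq \lVert \Delta \rVert + \eta \lVert g \rVert$, and Assumption \ref{assump:lipschitz} gives $\lVert g \rVert \leq L \lVert \Delta \rVert$, yielding the factor $(1 + \eta L)$. This requires no convexity and serves as the crude bound for the nonconvex regime.

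For Parts 2 and 3, I would instead expand the square $\lVert \Delta - \eta g \rVert^2 = \lVert \Delta \rVert^2 - 2\eta \langle \Delta, g \rangle + \eta^2 \lVert g \rVert^2$ and control the cross term using the appropriate convexity property. In the convex case (Part 2), the key ingredient is co-coercivity of the gradient of an $L$-smooth convex function, namely $\langle \Delta, g \rangle \geq \frac{1}{L}\lVert g \rVert^2$ (which follows from Assumptions \ref{assump:lipschitz} and \ref{assump:convex}); substituting gives $\lVert \Delta - \eta g \rVert^2 \leq \lVert \Delta \rVert^2 + \eta\left(\eta - \frac{2}{L}\right)\lVert g \rVert^2$, and the coefficient $\eta - \frac{2}{L}$ is nonpositive exactly when $\eta \leq \frac{2}{L}$, delivering nonexpansiveness. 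In the strongly convex case (Part 3), I would combine the lower bound $\langle \Delta, g \rangle \geq \mu \lVert \Delta \rVert^2$ from Assumption \ref{assump:stronglyconvex} with the upper bound $\lVert g \rVert^2 \leq L^2 \lVert \Delta \rVert^2$ from Lipschitz smoothness, giving $\lVert \Delta - \eta g \rVert^2 \leq (1 - 2\eta\mu + \eta^2 L^2)\lVert \Delta \rVert^2$.

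The one place that requires genuine care is verifying that the step-size restriction $\eta \leq \frac{\mu}{L^2}$ collapses the quadratic $1 - 2\eta\mu + \eta^2 L^2$ into the claimed contraction factor $1 - \eta\mu$. This reduces to checking $\eta^2 L^2 \leq \eta\mu$, i.e. $\eta L^2 \leq \mu$, which is precisely the hypothesis. I expect this bookkeeping, rather than any conceptual difficulty, to be the main (minor) obstacle, since one must match the stated factor $1 - \eta\mu$ rather than optimize the bound; note also that the $\sqrt{1-\eta\mu} = \gamma$ contraction rate used elsewhere in the paper is obtained by taking square roots here.

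Finally, Part 4 follows by observing that the batch map factors as an average of single-sample maps, $T^{\tilde{\mathcal{D}}}_\eta(\theta) = \frac{1}{|\tilde{\mathcal{D}}|}\sum_{z \in \tilde{\mathcal{D}}} T^z_\eta(\theta)$, so that the difference $T^{\tilde{\mathcal{D}}}_\eta(\theta) - T^{\tilde{\mathcal{D}}}_\eta(\theta')$ is the average of the per-sample differences. Applying the triangle inequality (equivalently, convexity of the norm) together with the uniform per-sample bound $\lVert T^z_\eta(\theta) - T^z_\eta(\theta') \rVert \leq \gamma \lVert \theta - \theta' \rVert$ term-by-term yields the same factor $\gamma$ for the averaged map. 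This part is purely mechanical and transfers any of the single-sample constants $(1+\eta L)$, $1$, or $\sqrt{1-\eta\mu}$ to the full-batch (or mini-batch) gradient map used by the algorithms.
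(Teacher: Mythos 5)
Your proposal is correct and follows essentially the same route as the paper's proof: triangle inequality plus Lipschitz smoothness for Part 1, expanding the squared norm and invoking co-coercivity for Part 2, combining strong monotonicity of the gradient with the Lipschitz bound $\lVert g \rVert^2 \leq L^2 \lVert \Delta \rVert^2$ for Part 3 (with the same step-size check $\eta L^2 \leq \mu$), and averaging the per-sample maps with the triangle inequality for Part 4. No gaps.
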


\begin{proof}
To prove the first part (\ref{eq:part1}), we have
\begin{align*}
    \lVert T^z_{\eta} (\theta) - T_{\eta} (\theta') \rVert = & \lVert \theta - \eta \nabla \ell(z;\theta)  - \theta' + \eta \nabla \ell(z;\theta')\rVert \\
    \leq & \lVert \theta - \theta' \rVert + \lVert \eta \nabla \ell(z;\theta) - \eta \nabla \ell(z;\theta') \rVert \\
    \overset{\text{Assumption \ref{assump:lipschitz}}}{\leq} &  \lVert \theta - \theta' \rVert + \eta L \lVert \theta - \theta' \rVert.
\end{align*}

\noindent To prove the second part (\ref{eq:part2}), we use co-coercivity of convex and $L$-smooth functions such that for $\theta, \theta' \in \Theta$
\begin{equation*}
    \frac{1}{L} \lVert \nabla \ell(z;\theta) - \nabla \ell(z;\theta') \rVert^2 \leq \langle \nabla \ell(z;\theta) - \nabla \ell(z;\theta'), \theta - \theta' \rangle . 
\end{equation*}
We have
    \begin{align*}
    \lVert \theta - \eta \nabla \ell(z;\theta) - \theta' + \eta \nabla \ell(z;\theta') \rVert^2
    =& \lVert \theta  - \theta'  \rVert^2 - 2 \eta \langle\theta  - \theta', \nabla \ell(z;\theta) - \nabla \ell(z;\theta')\rangle + \eta^2 \lVert\nabla \ell(z;\theta) -\nabla \ell(z;\theta') \rVert^2\\
    \leq& \lVert \theta  - \theta'  \rVert^2 - 2 \eta \frac{1}{L}\lVert \nabla \ell(z;\theta) -\nabla \ell(z;\theta')\rVert^2 + \eta^2  \lVert\nabla \ell(z;\theta) - \nabla \ell(z;\theta') \rVert^2 \\
    = & \lVert \theta  - \theta'  \rVert^2 - (2 \frac{\eta}{L} - \eta^2) \lVert \nabla \ell(z;\theta) - \nabla \ell(z;\theta')\rVert^2\\
    \leq &  \lVert \theta  - \theta'  \rVert^2.
\end{align*}
To prove the third part (\ref{eq:part3}), if we know $\ell$ is $\mu$-strongly convex, we know that 
\begin{equation*}
    \langle\theta  - \theta', \nabla \ell(z;\theta) - \nabla \ell(z;\theta')\rangle \geq \mu \lVert \theta - \theta' \rVert^2.
\end{equation*}
 We have
\begin{align*}
    \lVert \theta - \eta\nabla \ell(z;\theta)  - \theta' + \eta \nabla \ell(z;\theta' )  \rVert^2 \leq& \lVert \theta  - \theta'  \rVert^2 - 2\eta \mu \lVert \theta  - \theta'  \rVert^2 + \eta^2 L^2 \lVert \theta  - \theta'  \rVert^2\\
    \leq & (1 - 2 \eta \mu + \eta^2 L^2 )  \lVert \theta  - \theta'  \rVert^2 \\
    \overset{\eta \leq \frac{\mu}{L^2}}{\leq}& (1 - \eta \mu) \lVert \theta - \theta' \rVert^2. 
\end{align*}

Finally, to prove the last part (\ref{eq:part4}), we have 
\begin{align*}
    \lVert T^{\tilde{\mathcal{D}}}_{\eta} (\theta) - T^{\tilde{\mathcal{D}}}_{\eta} (\theta') \rVert = & \lVert \theta - \eta \frac{1}{|\tilde{\mathcal{D}|}} \sum_{z \in \tilde{\mathcal{D}}} \nabla \ell(z ; \theta) - \theta' + \eta \frac{1}{|\tilde{\mathcal{D}|}} \sum_{z \in \tilde{\mathcal{D}}} \nabla \ell(z ; \theta') \rVert \\
    = & \lVert  \frac{1}{|\tilde{\mathcal{D}|}} \sum_{z \in \tilde{\mathcal{D}}} \left[ \theta -  \nabla \ell(z ; \theta) - \theta' + \nabla \ell(z ; \theta')  \right] \rVert \\
    \leq & \frac{1}{|\tilde{\mathcal{D}|}} \sum_{z \in \tilde{\mathcal{D}}}  \lVert  \theta -  \nabla \ell(z ; \theta) - \theta' + \nabla \ell(z ; \theta') \rVert \\ 
    = & \frac{1}{|\tilde{\mathcal{D}|}} \sum_{z \in \tilde{\mathcal{D}}}  \lVert  T^z_\eta (\theta) - T^z_\eta (\theta')\rVert \leq \gamma \lVert \theta - \theta' \rVert. 
\end{align*}

\end{proof}

\begin{lemma}
\label{lemma:sameloss}
Suppose there exists $\gamma > 0$ such that $\lVert T^z_{\eta} (\theta) - T^z_{\eta} (\theta') \rVert \leq \gamma \lVert \theta - \theta' \rVert$ for all $z \in \mathcal{Z}$. Suppose $w_t$, $w'_t$ represent PSGD iterates (\ref{eq:PSGD}) on the same loss function $\mathcal{L}_{\mathcal{D}}$, and $\lVert w_0 -  w'_0\rVert \leq D$. Then there exists a coupling of $\{w_t \}_{t \geq 0}$, $\{w'_t\}_{t \geq 0}$ such that for all $t \geq 0$
\begin{equation*}
    \lVert w_t - w'_t \rVert \leq \gamma^t D.
\end{equation*}
Moreover, the result also applies for SGD without projection.
\end{lemma}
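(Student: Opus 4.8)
The plan is to exploit the fact that $w_t$ and $w'_t$ execute the \emph{same} algorithm on the \emph{same} loss $\mathcal{L}_{\mathcal{D}}$, so their only source of randomness is the minibatch index sampling, which is drawn from a fixed, data-independent distribution. Since the statement merely asserts the existence of \emph{some} coupling, I am free to choose the most favorable one: at every step $t$ I draw a single minibatch $\mathcal{B}_t$ (uniformly with replacement from $\mathcal{D}$) and feed it to \emph{both} trajectories simultaneously. This is a legitimate coupling because each marginal still samples $\mathcal{B}_t$ from the correct distribution; using the same draw for both merely correlates them without altering either marginal law. Under this shared-batch coupling, writing $T^{\mathcal{B}_t}_\eta(\theta) = \theta - \eta\, g_{\mathcal{B}_t}(\theta)$, the PSGD recursion (\ref{eq:PSGD}) reads $w_t = \Pi_{\mathcal{C}}\big(T^{\mathcal{B}_t}_\eta(w_{t-1})\big)$ and $w'_t = \Pi_{\mathcal{C}}\big(T^{\mathcal{B}_t}_\eta(w'_{t-1})\big)$ with \emph{identical} maps at each step.

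Next I would establish the one-step contraction. Since the minibatch estimator $g_{\mathcal{B}_t}$ is exactly an average of per-sample gradients over the batch, the map $T^{\mathcal{B}_t}_\eta$ has precisely the averaged form treated in part (\ref{eq:part4}) of Lemma \ref{lemma:nonexpansive}, applied with $\tilde{\mathcal{D}} = \mathcal{B}_t$. The hypothesis of the present lemma, namely $\lVert T^z_{\eta}(\theta) - T^z_{\eta}(\theta') \rVert \leq \gamma \lVert \theta - \theta' \rVert$ for all $z \in \mathcal{Z}$, is exactly the per-sample bound required there, so part (\ref{eq:part4}) gives
\begin{equation*}
\lVert T^{\mathcal{B}_t}_\eta(w_{t-1}) - T^{\mathcal{B}_t}_\eta(w'_{t-1}) \rVert \leq \gamma \lVert w_{t-1} - w'_{t-1} \rVert.
\end{equation*}
Composing with the nonexpansiveness of the projection (\ref{eq:projection_nonexpansive}) then yields $\lVert w_t - w'_t \rVert \leq \gamma \lVert w_{t-1} - w'_{t-1} \rVert$.

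Finally, I would close the argument by induction on $t$, starting from the base case $\lVert w_0 - w'_0 \rVert \leq D$, to obtain $\lVert w_t - w'_t \rVert \leq \gamma^t D$ for all $t \geq 0$. For SGD without projection one simply drops the $\Pi_{\mathcal{C}}$ step; since the omitted projection was only used through its nonexpansiveness, the identical contraction chain holds verbatim. I do not expect any genuine obstacle here: the only points requiring care are (i) justifying that the shared-batch scheme is a valid coupling of the two marginals, and (ii) noting that part (\ref{eq:part4}) applies to a randomly sampled multiset $\mathcal{B}_t$ just as it does to a fixed dataset, since that bound holds for $\tilde{\mathcal{D}}$ of arbitrary size. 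Both are immediate once stated, and there is no nontrivial calculation to grind through.
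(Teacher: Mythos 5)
Your proposal is correct and matches the paper's proof essentially step for step: the paper also couples the two trajectories by feeding them the same minibatch $\mathcal{B}_t$ at every step, applies the nonexpansiveness of the projection (\ref{eq:projection_nonexpansive}), and then uses the averaged-map contraction of Lemma \ref{lemma:nonexpansive} (your explicit appeal to part (\ref{eq:part4}) is exactly what the paper does inline) before inducting on $t$. No further comment is needed.
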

\begin{proof}

We can couple $\{w_t \}_{t \geq 0}$ and $\{w'_t\}_{t \geq 0}$ by choosing the same sampled mini-batch $\mathcal{B}_t$ at every step, such that we have
\begin{align*}
    \lVert w_t - w'_t \rVert = & \lVert \Pi_{\mathcal{C}}(w_{t-1} - \eta g_{\mathcal{B}_t}(w_{t-1}))  - \Pi_{\mathcal{C}}(w'_{t-1} - \eta g_{\mathcal{B}_t}(w'_{t-1})) \rVert\\
    \overset{(\ref{eq:projection_nonexpansive})}{\leq} & \lVert w_{t-1} - \eta g_{\mathcal{B}_t}(w_{t-1})  - w'_{t-1} + \eta g_{\mathcal{B}_t}(w'_{t-1}) \rVert\\
    = & \lVert w_{t-1} - \eta \frac{1}{|\mathcal{B}_t|}\sum_{z \in \mathcal{B}_t} \nabla \ell(z;w_{t-1})  - w'_{t-1} + \eta \frac{1}{|\mathcal{B}_t|}\sum_{z \in \mathcal{B}_t} \nabla \ell(z;w_{t-1}) \rVert\\
    \leq & \gamma \lVert \theta'_{T-K + t - 1} - \theta''_{t - 1} \rVert.
\end{align*}
    
\end{proof}

\subsection{PSGD-R2D}
\label{sec:PSGD-R2D_append}

In this section, we prove Theorem \ref{thm:unlearningprojected}, which establishes certified unlearning for PSGD-R2D.
We define the iterates $\{\theta_t\}_{t=0}^T$, $\{\theta'_t\}_{t=0}^T$, $\{\theta''_t\}_{t=0}^K$ as follows.
\begin{itemize}
    \item $\{\theta_t\}_{t=0}^T$ represents the PSGD (\ref{eq:PSGD}) learning iterates on $\mathcal{L}_\mathcal{D}$, starting from $\theta_0$, where $\mathcal{B}_t \sim \mathcal{D}$ and $\theta_t$ is updated as follows,
    \begin{equation}
    \label{eq:plearning}
        \theta_t = \Pi_{\mathcal{C}}(\theta_{t-1} - \eta g_{\mathcal{B}_t}(\theta_{t-1})).
    \end{equation}
    \item $\{\theta'_t\}_{t=0}^T$ represents the learning iterates on $\mathcal{L}_\mathcal{D'}$, where $\theta'_0 = \theta_0$, $\mathcal{B}_t' \sim \mathcal{D}'$, and $\theta'_t$ is updated as follows,
    \begin{equation}
    \label{eq:plearningprime}
        \theta'_t = \Pi_{\mathcal{C}}(\theta'_{t-1} - \eta g_{\mathcal{B}'_t}(\theta'_{t-1})).
    \end{equation}
    \item $\{\theta''_t\}_{t=0}^{K}$ represents the unlearning  iterates on $\mathcal{L}_{\mathcal{D}'}$, where $\theta''_0 = \theta_{T-K}$,  $\mathcal{B}'_t \sim \mathcal{D}'$, and $\theta''_t$ is updated as follows, 
    \begin{equation}
    \label{eq:plearningunlearn}
        \theta''_t = \Pi_{\mathcal{C}}(\theta''_{t-1} - \eta g_{\mathcal{B}'_t}(\theta''_{t-1})).
    \end{equation}
\end{itemize}

To prove Theorem \ref{thm:unlearningprojected}, we need to determine $\Sigma$ such that $\mathbb{E}[\lVert \theta'_t - \theta''_K \rVert] \leq \Sigma$. This result can be combined with Lemma \ref{lemma:indistinguishfirstsecond} to yield the $(\varepsilon, \delta)$ guarantee in Theorem \ref{thm:unlearningr2d}. We can determine $\Sigma$ for contracting, semi-contracting, and expanding systems, which correspond to strongly convex, convex, and nonconvex loss functions (Lemma \ref{lemma:nonexpansive}). This yields our final noise guarantees in Theorem \ref{thm:unlearningprojected}. To bound $\Sigma$, we first require the following Lemma \ref{lemma:psgdidkman} that bounds the expected distance between $\theta'_t$ and $\theta_t$.

\begin{lemma}
\label{lemma:psgdidkman}
Suppose that the loss function $\ell$ satisfies Assumptions \ref{assump:lipschitz} and within the closed, bounded, and convex set $\mathcal{C}$, the gradient of $\ell$ is uniformly bounded by some constant $G \geq 0$ such that for all $z \in \mathcal{Z}$ and $\theta \in \mathcal{C}$,
    $\lVert \nabla \ell(z; \theta) \rVert \leq G.$ Moreover, suppose for all $z \in \mathcal{Z}$ and for any $\theta, \theta' \in \mathbb{R}^d$, the gradient descent map $T^{z}_{\eta}(\theta) = \theta - \eta \nabla \ell (z ; \theta)$ satisfies the following property for some $\gamma > 0$,
    \begin{equation}
    \label{eq:shrink1}
        \lVert T^{z}_{\eta}(\theta) - T^{z}_{\eta}(\theta') \rVert \leq \gamma \lVert \theta - \theta' \rVert.
    \end{equation}
Let $\theta_t$ and $\theta'_t$ denote the training and retraining iterates as defined in (\ref{eq:plearning}) and (\ref{eq:plearningprime}) respectively. Then there exists a coupling of the iterates $\{\theta_t \}$, $\{\theta'_t \}$ such that for $t \geq 0$
    \[
\mathbb{E}[\lVert \theta_t - \theta'_t \rVert] \leq  \begin{cases}
\frac{2 \eta G m}{n} \cdot \frac{1 - \gamma^t}{ 1 - \gamma} & \text{if } \gamma \neq 1 \\
\frac{2 \eta G m t}{n} & \text{if } \gamma = 1
\end{cases}
\]
where the expectation is taken with respect to the joint distribution of $\theta_t$ and $\theta'_t$.
\end{lemma}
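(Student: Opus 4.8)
The plan is to construct an explicit coupling of the two batch-sampling processes that makes the trajectories coincide as often as possible, then to run a one-step recursion that separates an \emph{unlearning bias} term from a \emph{contraction} term. The coupling works as follows. At each step $t$ I draw the mini-batch $\mathcal{B}_t \sim \mathcal{D}$ as prescribed (uniformly with replacement, $b$ draws). To build $\mathcal{B}'_t$, I copy every drawn index that already lies in $\mathcal{D}'$, and for every drawn index that lands in the unlearned set $Z$ I replace it with a fresh i.i.d.\ uniform draw from $\mathcal{D}'$. A short marginal computation shows this is a \emph{valid} run of the retraining process: a fixed retained point $z \in \mathcal{D}'$ is selected either directly (probability $\tfrac1n$) or as a replacement for an unlearned draw (probability $\tfrac{m}{n}\cdot\tfrac1{n-m}$), which sums to $\tfrac1{n-m}$, so $\mathcal{B}'_t$ is indeed uniform with replacement on $\mathcal{D}'$. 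Since Definition \ref{def:unlearning} only constrains the marginal law of $\theta'_t$, this coupling is admissible.

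Next I would set up the per-step recursion. Writing $T^{\mathcal{B}}_\eta(\theta) = \theta - \eta g_{\mathcal{B}}(\theta)$ and using nonexpansiveness of the projection (\ref{eq:projection_nonexpansive}), I add and subtract $T^{\mathcal{B}'_t}_\eta(\theta_{t-1})$ to obtain
\begin{align*}
\lVert \theta_t - \theta'_t \rVert
&\leq \lVert T^{\mathcal{B}_t}_\eta(\theta_{t-1}) - T^{\mathcal{B}'_t}_\eta(\theta'_{t-1}) \rVert \\
&\leq \lVert T^{\mathcal{B}'_t}_\eta(\theta_{t-1}) - T^{\mathcal{B}'_t}_\eta(\theta'_{t-1}) \rVert
   + \eta \lVert g_{\mathcal{B}_t}(\theta_{t-1}) - g_{\mathcal{B}'_t}(\theta_{t-1}) \rVert.
\end{align*}
The first term is the \emph{same} batch map $\mathcal{B}'_t$ applied to two points, so by hypothesis (\ref{eq:shrink1}) together with the averaging step (\ref{eq:part4}) of Lemma \ref{lemma:nonexpansive} it is at most $\gamma \lVert \theta_{t-1} - \theta'_{t-1} \rVert$. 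The second term is the unlearning bias evaluated at the common point $\theta_{t-1}$, which is where the coupling pays off: the contributions of the agreeing indices cancel exactly, leaving only the disagreeing indices. Letting $|J|$ be the number of disagreements in $\mathcal{B}_t$ and using the uniform gradient bound $G$ on $\mathcal{C}$ (note $\theta_{t-1} \in \mathcal{C}$ after projection), each disagreeing index contributes at most $\tfrac{2G}{b}$, so this term is bounded pointwise by $\tfrac{2G|J|}{b}$.

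To close the argument I take conditional expectations given the past and invoke the tower property (Lemma \ref{lemma:towerproperty}). Since each of the $b$ draws lands in $Z$ independently with probability $\tfrac{m}{n}$, we have $\mathbb{E}[|J|] = \tfrac{bm}{n}$, hence the conditional expectation of the bias term is at most $\tfrac{2Gm}{n}$, \emph{independent} of $\theta_{t-1}$. Setting $a_t = \mathbb{E}[\lVert \theta_t - \theta'_t \rVert]$ yields the affine recursion $a_t \leq \gamma\, a_{t-1} + \tfrac{2\eta G m}{n}$ with $a_0 = 0$ (as $\theta_0 = \theta'_0$), whose geometric unrolling gives $\tfrac{2\eta G m}{n}\cdot\tfrac{1-\gamma^t}{1-\gamma}$ when $\gamma \neq 1$ and $\tfrac{2\eta G m t}{n}$ when $\gamma = 1$, matching the claim. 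I expect the main obstacle to be purely in the bookkeeping of the coupling: verifying that the replacement rule produces the correct marginal for $\theta'_t$, and justifying that the bias bound can be moved inside the expectation uniformly in the (random, data-dependent) iterate $\theta_{t-1}$ — which is exactly what the uniform gradient bound $G$ on $\mathcal{C}$ secures. The projection and contraction pieces are routine once the decomposition above is in place.
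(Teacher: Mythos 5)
Your proposal is correct and follows essentially the same route as the paper's proof: the identical with-replacement coupling (copy retained draws, resample unlearned draws from $\mathcal{D}'$), the same add-and-subtract decomposition into a contraction term handled by (\ref{eq:shrink1}) and part 4 of Lemma \ref{lemma:nonexpansive} plus a bias term bounded by $2G$ per disagreeing index, and the same affine recursion unrolled into a geometric sum. Your explicit verification that the replacement rule yields the correct uniform marginal on $\mathcal{D}'$ is a detail the paper asserts without computation, but it does not change the argument.
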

\begin{proof}
We describe our coupling of $\{\theta_t\}_{t=0}^T$ and $\{\theta'_t\}_{t=0}^T$. At each step $t$, we sample $b$ data samples uniformly \textit{with replacement} to form batches. Let $\mathcal{B}_t = {z_1, z_2, ..., z_b}$ and $\mathcal{B}'_t = {z'_1, z'_2, ..., z'_b}$ represent batches sampled at time $t$ from $\mathcal{D}$ and $\mathcal{D}'$ respectively. Because the batches are sampled \textit{with replacement}, we can treat each data sample as i.i.d. to one another (and drawing a particular sample does not impact the distribution of the next). Moreover, at each time step $t$, we can choose a favorable coupling of $\mathcal{B}_t$ and $\mathcal{B}'_t$ such that they contain the exact same samples except for when $\mathcal{B}_t$ contains samples from the unlearned set. Specifically, for $z_i \in \mathcal{B}_t$, if $z_i \in \mathcal{D}'$, then $z_i' = z_i$. If $z_i \notin \mathcal{D}'$, then $z'_i$ is simply some other point sampled uniformly from $\mathcal{D}'$. With this coupling, we have
\begin{align*}
    \lVert \theta_t - \theta'_t \rVert = & \lVert \Pi_{\mathcal{C}}(\theta_{t-1} - \eta g_{\mathcal{B}_t}(\theta_{t-1})) - \Pi_{\mathcal{C}}(\theta'_{t-1} - \eta g_{\mathcal{B}'_t}(\theta'_{t-1})) \rVert \\
     \overset{(\ref{eq:projection_nonexpansive})}{\leq} & \lVert \theta_{t-1} - \eta g_{\mathcal{B}_t}(\theta_{t-1})  - \theta'_{t-1} + \eta g_{\mathcal{B}'_t}(\theta'_{t-1})\rVert \\
     = & \lVert \theta_{t-1} - \eta \frac{1}{b} \sum_{i = 1}^b \nabla \ell(z_i ; \theta_{t-1})  - \theta'_{t-1} + \eta \frac{1}{b} \sum_{i = 1}^b \nabla \ell(z'_i ; \theta'_{t-1}) \rVert \\
     \leq & \lVert \theta_{t-1} - \eta \frac{1}{b} \sum_{i = 1}^b \nabla \ell(z'_i ; \theta_{t-1})  - \theta'_{t-1} + \eta \frac{1}{b} \sum_{i = 1}^b \nabla \ell(z'_i ; \theta'_{t-1}) \rVert + \frac{\eta }{b}\lVert \sum_{i = 1}^b (\nabla \ell(z_i ; \theta_{t-1})  - \nabla \ell(z'_i ; \theta_{t-1}) )  \rVert\\
     \overset{\text{(\ref{eq:shrink1}) and Lemma \ref{lemma:nonexpansive}}}{\leq} & \gamma \lVert \theta_{t-1} - \theta'_{t-1} \rVert + \frac{\eta }{b} \sum_{i = 1}^b \lVert\nabla \ell(z_i ; \theta_{t-1})  - \nabla \ell(z'_i ; \theta_{t-1})  \rVert
\end{align*}
Let $d_{i,t} = \nabla \ell(z_i ; \theta_{t-1})  - \nabla \ell(z'_i ; \theta_{t-1})$. Under the coupling described above, we have that $d_{i,t} = 0$ with probability $\frac{n-m}{n}$. When $z_i$ belongs to the unlearned set, which occurs with probability $\frac{m}{n}$, we can bound $d_{i.t}$ as
\begin{equation*}
    \lVert d_{i,t} \rVert \leq 2G,
\end{equation*}
as the gradient is uniformly bounded. Therefore, we can take expectation of both sides of our equation, with respect to the specified coupling of the batch sampling.
\begin{align*}
    \mathbb{E}[\lVert \theta_t - \theta'_t \rVert] \leq & \gamma 
    \mathbb{E}[\lVert \theta_{t-1} - \theta'_{t-1} \rVert] +  \frac{\eta}{b} \sum_{i=1}^b \mathbb{E}[\lVert d_{i,t} \rVert ] \\
    \leq & \gamma 
    \mathbb{E}[\lVert \theta_{t-1} - \theta'_{t-1} \rVert] +  \frac{2 \eta G m}{n} \\
    \leq &  \frac{2 \eta G m}{n} \sum_{\tau = 1}^{t} \gamma^{t- \tau} 
\end{align*}
\[
\mathbb{E}[\lVert \theta_t - \theta'_t \rVert] \leq  \begin{cases}
\frac{2 \eta G m}{n} \cdot \frac{1 - \gamma^t}{ 1 - \gamma} & \text{if } \gamma \neq 1 \\
\frac{2 \eta G m t}{n} & \text{if } \gamma = 1.
\end{cases}
\]
\end{proof}


Lemma \ref{lemma:psgdidkman} allows us to prove Theorem \ref{thm:psgdgamma}, which can be combined with Lemma \ref{lemma:nonexpansive} and Lemma \ref{lemma:indistinguishfirstsecond} to yield the result in Theorem \ref{thm:unlearningprojected}.

\begin{theorem}
\label{thm:psgdgamma}
Suppose that the loss function $\ell$ satisfies Assumptions \ref{assump:lipschitz} and within the closed, bounded, and convex set $\mathcal{C}$, the gradient of $\ell$ is uniformly bounded by some constant $G \geq 0$ such that for all $z \in \mathcal{Z}$ and $\theta \in \mathcal{C}$,
    $\lVert \nabla \ell(z; \theta) \rVert \leq G.$ Moreover, suppose for all $z \in \mathcal{Z}$ and for any $\theta, \theta' \in \mathbb{R}^d$, the gradient descent map $T^{z}_{\eta}(\theta) = \theta - \eta \nabla \ell (z ; \theta)$ satisfies the following property for some $\gamma > 0$,
    \begin{equation}
    \label{eq:shrink2}
        \lVert T^{z}_{\eta}(\theta) - T^{z}_{\eta}(\theta') \rVert \leq \gamma \lVert \theta - \theta' \rVert.
    \end{equation}
Let $\theta_t$, $\theta'_t$ and $\theta''_t$ denote the training, retraining, and unlearning iterates as defined in (\ref{eq:plearning}), (\ref{eq:plearningprime}) and (\ref{eq:plearningunlearn}) respectively. If $\gamma \neq 1$, there exists a coupling of  $\theta_t$, $\theta'_t$ and $\theta''_t$ such that
\begin{equation*}
    \mathbb{E}[\lVert \theta''_K - \theta'_T \rVert ] \leq \frac{2 \eta G m (\gamma^K - \gamma^T)}{n (1 - \gamma)},
\end{equation*}
and if $\gamma = 1$, then we have 
\begin{equation*}
    \mathbb{E}[\lVert \theta''_K - \theta'_T \rVert ] \leq \frac{2 \eta G m (T - K)}{n},
\end{equation*}
where the expectation is taken with respect to the resulting joint distribution of  $\theta_t$, $\theta'_t$ and $\theta''_t$.
\end{theorem}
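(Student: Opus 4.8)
\textit{Proof sketch.} The plan is to exploit the fact that both the unlearning trajectory $\{\theta''_t\}_{t=0}^K$ and the tail of the retraining trajectory $\{\theta'_{T-K+t}\}_{t=0}^K$ are PSGD runs on the \emph{same} loss function $\mathcal{L}_{\mathcal{D}'}$. The contraction hypothesis (\ref{eq:shrink2}) then forces these two runs together at a geometric rate $\gamma^K$, while the gap between their initializations, $\theta''_0 = \theta_{T-K}$ and $\theta'_{T-K}$, is exactly the quantity already controlled by Lemma \ref{lemma:psgdidkman}. I would first record that applying $K$ further updates to $\theta''_0$ and $\theta'_{T-K}$ produces $\theta''_K$ and $\theta'_T$ respectively, which are precisely the two iterates we wish to compare.

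The coupling is the heart of the argument and proceeds in two coordinated layers. First, I retain the coupling of Lemma \ref{lemma:psgdidkman} between the learning batches $\mathcal{B}_t \sim \mathcal{D}$ and the retraining batches $\mathcal{B}'_t \sim \mathcal{D}'$, which fixes the joint law of $(\theta_t, \theta'_t)$ and in particular bounds $\mathbb{E}[\lVert \theta_{T-K} - \theta'_{T-K} \rVert]$. Second, I couple the unlearning mini-batches $\mathcal{B}''_t$ to the retraining mini-batches by setting $\mathcal{B}''_t = \mathcal{B}'_{T-K+t}$ for $t = 1, \dots, K$, so that $\theta''_t$ and $\theta'_{T-K+t}$ are driven by identical batches. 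I would then invoke Lemma \ref{lemma:sameloss} with $w_t = \theta''_t$, $w'_t = \theta'_{T-K+t}$, and $D = \lVert \theta_{T-K} - \theta'_{T-K} \rVert$ to conclude, pointwise along the coupling,
\[
\lVert \theta''_K - \theta'_T \rVert \leq \gamma^K \lVert \theta_{T-K} - \theta'_{T-K} \rVert.
\]
Taking expectations over the joint law and substituting the bound of Lemma \ref{lemma:psgdidkman} at time $t = T-K$ gives, when $\gamma \neq 1$,
\[
\mathbb{E}[\lVert \theta''_K - \theta'_T \rVert] \leq \gamma^K \cdot \frac{2 \eta G m}{n} \cdot \frac{1 - \gamma^{T-K}}{1 - \gamma} = \frac{2 \eta G m (\gamma^K - \gamma^T)}{n (1 - \gamma)},
\]
while the linear-growth branch of Lemma \ref{lemma:psgdidkman} together with $\gamma^K = 1$ yields $\frac{2 \eta G m (T-K)}{n}$ when $\gamma = 1$.

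The step I expect to require the most care is verifying that this two-layer coupling is legitimate, i.e.\ that reassigning $\mathcal{B}''_t = \mathcal{B}'_{T-K+t}$ does not alter the marginal law of the real unlearning output $\theta''_K$. I would argue that the checkpoint $\theta''_0 = \theta_{T-K}$ depends only on $\mathcal{B}_1, \dots, \mathcal{B}_{T-K}$, whereas the reassigned batches are functions of $\mathcal{B}_{T-K+1}, \dots, \mathcal{B}_T$; since the mini-batches are drawn independently across steps, the unlearning batches remain independent of $\theta''_0$ and each $\mathcal{B}''_t$ retains its uniform-with-replacement marginal over $\mathcal{D}'$. Because Definition \ref{def:unlearning} constrains only the marginals of $\theta''_K$ and $\theta'_T$, this re-coupling is harmless, and the resulting first-moment bound feeds directly into the relaxed Gaussian mechanism of Lemma \ref{lemma:indistinguishfirstsecond}.
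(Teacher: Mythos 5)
Your proposal is correct and follows essentially the same route as the paper's proof: bound $\mathbb{E}[\lVert \theta_{T-K} - \theta'_{T-K}\rVert]$ via the coupling of Lemma \ref{lemma:psgdidkman}, then apply the same-loss coupling of Lemma \ref{lemma:sameloss} to contract the two $\mathcal{L}_{\mathcal{D}'}$ trajectories by $\gamma^K$, and combine. Your additional remark verifying that reassigning the unlearning batches preserves the marginal law of $\theta''_K$ is a worthwhile point that the paper leaves implicit.
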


\begin{proof}

Lemma \ref{lemma:psgdidkman} shows that the expected distance between $\theta_t$ and $\theta'_t$ is bounded as a function of $t$. So for  $\gamma \neq 1$, we have
\begin{equation*}
    \mathbb{E}[\lVert \theta_{T-K} - \theta'_{T-K} \rVert]  \leq \frac{2 \eta G m}{n} \cdot \frac{1 - \gamma^{T-K}}{ 1 - \gamma}.
\end{equation*}

From Lemma \ref{lemma:sameloss}, we have that we can choose a coupling of $\{\theta'_t\}_{t=T-K}^T$, $\{\theta''_t\}_{t=0}^K$, such that  
\begin{align*}
    \lVert \theta'_T - \theta''_K \rVert \leq & \gamma^K \lVert \theta'_{T-K} - \theta''_0 \rVert, \\
    \mathbb{E}[\lVert \theta'_T - \theta''_K \rVert] \leq & \gamma^K \mathbb{E}[\lVert \theta'_{T-K} - \theta''_0 \rVert],\\
    = & \gamma^K \mathbb{E}[\lVert \theta'_{T-K} - \theta_{T-K} \rVert],\\
    \leq & \frac{2 \eta G m}{n} \cdot \frac{\gamma^K - \gamma^{T}}{ 1 - \gamma}.
\end{align*}

The same approach can be applied to the $\gamma = 1$ case, finishing our proof.

\end{proof}

Theorem \ref{thm:unlearningprojected} follows directly from Theorem \ref{thm:psgdgamma}, Lemma \ref{lemma:indistinguishfirstsecond} and Lemma \ref{lemma:nonexpansive}.

\subsubsection{Proof of Corollary \ref{cor:sublinear}}
\label{sec:sublinear}
\begin{proof}
    
For fixed $\Sigma$, we have
\begin{align*}
    \frac{n \mu \Sigma}{2 m \eta G}  = & \gamma^K - \gamma ^T, \\
    \frac{n \mu \Sigma}{2 m \eta G} + \gamma^T = &\gamma^K \leq 1, \\
    \log( \frac{n \mu \Sigma}{2 m \eta G} + \gamma^T ) = & K \log(\gamma), \\
    K = \frac{\log( \frac{n \mu \Sigma}{2 m \eta G} + \gamma^T ) }{\log(\gamma)} \leq & \frac{\log( \frac{n \mu \Sigma}{2 m \eta G}) }{\log(\gamma)} .
\end{align*}

\end{proof}

\subsection{SGD-R2D}
\label{sec:SGDR2D_append}
In this section, we prove Theorem \ref{thm:unlearningr2d}, which establishes certified unlearning for SGD-R2D without projection or bounded domain. We establish preliminaries and present Lemmas \ref{lemma:relative_bias_bound} and \ref{lemma:sgdconvergence_nonconvex}. We then use those results to prove Theorem \ref{thm:sgdgamma}, from which Theorem \ref{thm:unlearningr2d} directly follows. In Appendix \ref{sec:relative_bias_bound}, we prove Lemma \ref{lemma:relative_bias_bound}. In Appendix \ref{sec:sgd_convergence_nonconvex}, we prove Lemma \ref{sec:sgd_convergence_nonconvex}.

We define the iterates $\{\theta_t\}_{t=0}^T$, $\{\theta'_t\}_{t=0}^T$, $\{\theta''_t\}_{t=0}^K$ as follows.
\begin{itemize}
    \item $\{\theta_t\}_{t=0}^T$ represents the SGD (\ref{eq:SGD}) learning iterates on $\mathcal{L}_\mathcal{D}$, starting from $\theta_0$, where $\mathcal{B}_t \sim \mathcal{D}$ and $\theta_t$ is updated as follows,
    \begin{equation}
    \label{eq:learning}
        \theta_t = \theta_{t-1} - \eta g_{\mathcal{B}_t}(\theta_{t-1}).
    \end{equation}
    \item $\{\theta'_t\}_{t=0}^T$ represents the SGD learning iterates on $\mathcal{L}_\mathcal{D'}$, where $\theta'_0 = \theta_0$, $\mathcal{B}_t' \sim \mathcal{D}'$ and $\theta'_t$ is updated as follows,
    \begin{equation}
    \label{eq:learningprime}
        \theta'_t = \theta'_{t-1} - \eta g_{\mathcal{B}'_t}(\theta'_{t-1}).
    \end{equation}
    \item $\{\theta''_t\}_{t=0}^{K}$ represents the SGD unlearning  iterates on $\mathcal{L}_{\mathcal{D}'}$, where $\theta''_0 = \theta_{T-K}$,  $\mathcal{B}'_t \sim \mathcal{D}'$, and $\theta''_t$ is updated as follows, 
    \begin{equation}
    \label{eq:learningunlearn}
        \theta''_t = \theta''_{t-1} - \eta g_{\mathcal{B}'_t}(\theta''_{t-1}).
    \end{equation}
\end{itemize}

To prove Theorem \ref{thm:unlearningr2d}, we need to determine $\Sigma$ such that $\mathbb{E}[\lVert \theta'_t - \theta''_K \rVert] \leq \Sigma$. This result can be combined with Lemma \ref{lemma:indistinguishfirstsecond} to yield the $(\varepsilon, \delta)$ guarantee in Theorem \ref{thm:unlearningr2d}. We can determine $\Sigma$ for contracting, semi-contracting, and expanding systems, which correspond to strongly convex, convex, and nonconvex loss functions (Lemma \ref{lemma:nonexpansive}). This yields our final noise guarantees in Theorem \ref{thm:unlearningr2d}.

As discussed in the main body, we require Lemma \ref{lemma:unlearningbias}, which bounds the unlearning bias in terms of the second moment of the stochastic noise.

\begin{lemma}
\label{lemma:unlearningbias}
For all $\theta \in \mathbb{R}^d$, the unlearning bias is bounded as follows,
    \begin{equation}\lVert \nabla \mathcal{L}_{\mathcal{D}}(\theta) - \nabla \mathcal{L}_{\mathcal{D}'}(\theta) \rVert^2 \leq \frac{m}{n - m} \mathbb{E}_{z \sim \mathcal{D}}[\lVert \nabla \ell(\theta ; z) \rVert^2]. 
    \end{equation}
\end{lemma}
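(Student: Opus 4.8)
The plan is to prove the bound through the $\chi^2$-divergence inequality already announced in the main text. The first observation is that both gradients are expectations of the per-sample gradient $\nabla\ell(\,\cdot\,;\theta)$ under the empirical measures: writing $P_{\mathcal D}$ for the uniform distribution on $\mathcal D$ and $P_{\mathcal D'}$ for the uniform distribution on $\mathcal D'$, we have $\nabla\mathcal L_{\mathcal D}(\theta)=\mathbb E_{z\sim P_{\mathcal D}}[\nabla\ell(z;\theta)]$ and $\nabla\mathcal L_{\mathcal D'}(\theta)=\mathbb E_{z\sim P_{\mathcal D'}}[\nabla\ell(z;\theta)]$. Fixing $\theta$ and abbreviating $f(z)=\nabla\ell(z;\theta)$, the quantity to bound is therefore $\lVert\mathbb E_{P_{\mathcal D'}}[f]-\mathbb E_{P_{\mathcal D}}[f]\rVert^2$, the squared distance between the means of the same function under two measures.

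The core step is the general inequality $\lVert\mathbb E_{P}[f]-\mathbb E_{Q}[f]\rVert^2\le \chi^2(P\|Q)\,\mathrm{Var}_{Q}(f)$, which I would establish by a short recentering-and-Cauchy--Schwarz argument. Since $\sum_z(P(z)-Q(z))=0$, I can subtract the $Q$-mean of $f$ for free and write
$$\mathbb E_P[f]-\mathbb E_Q[f]=\sum_z Q(z)\Bigl(\tfrac{P(z)}{Q(z)}-1\Bigr)\bigl(f(z)-\mathbb E_Q[f]\bigr)=\mathbb E_Q\!\Bigl[\bigl(\tfrac{P}{Q}-1\bigr)\bigl(f-\mathbb E_Q[f]\bigr)\Bigr].$$
Taking norms and applying Cauchy--Schwarz gives $\lVert\mathbb E_P[f]-\mathbb E_Q[f]\rVert\le\sqrt{\mathbb E_Q[(\tfrac{P}{Q}-1)^2]}\,\sqrt{\mathbb E_Q[\lVert f-\mathbb E_Q[f]\rVert^2]}$, and squaring yields the claim, since the first factor under the root is exactly $\chi^2(P\|Q)$ and the second is $\mathrm{Var}_Q(f)$.

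With $P=P_{\mathcal D'}$ and $Q=P_{\mathcal D}$, it remains to evaluate the two factors. The variance is controlled crudely by the second moment, $\mathrm{Var}_{P_{\mathcal D}}(f)\le\mathbb E_{z\sim\mathcal D}[\lVert\nabla\ell(z;\theta)\rVert^2]$, which is precisely the quantity on the right-hand side of the lemma. For the divergence, I compute directly: $P_{\mathcal D}$ places mass $1/n$ on each of the $n$ points, while $P_{\mathcal D'}$ places mass $1/(n-m)$ on the $n-m$ retained points and $0$ on the $m$ deleted points, so
$$\chi^2(P_{\mathcal D'}\|P_{\mathcal D})=\sum_{z\in\mathcal D}\frac{P_{\mathcal D'}(z)^2}{P_{\mathcal D}(z)}-1=(n-m)\cdot\frac{n}{(n-m)^2}-1=\frac{n}{n-m}-1=\frac{m}{n-m}.$$
Multiplying the two factors gives the stated bound. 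The one point requiring care is the \emph{direction} of the divergence: $P_{\mathcal D'}$ is supported on a strict subset of the support of $P_{\mathcal D}$, so $\chi^2(P_{\mathcal D'}\|P_{\mathcal D})$ is finite (each deleted point contributes $0$ through the numerator $P_{\mathcal D'}(z)^2=0$), whereas the reversed divergence would be infinite. This forces the choice $Q=P_{\mathcal D}$ in the Cauchy--Schwarz step, which is the only genuinely nonroutine aspect of the argument.
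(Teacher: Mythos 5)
Your proposal is correct and follows essentially the same route as the paper, which likewise frames the bound as an instance of the general inequality $\lVert\mathbb{E}_P[f]-\mathbb{E}_Q[f]\rVert^2\le\chi^2(P\|Q)\,\mathrm{Var}_Q(f)$ (the paper's Lemma on the Radon--Nikodym/Cauchy--Schwarz bound) with $\chi^2(P_{\mathcal D'}\|P_{\mathcal D})=\tfrac{m}{n-m}$. The only cosmetic difference is that the paper's explicit computation applies Cauchy--Schwarz to $\tfrac1n\sum_i\nabla\ell(\theta;z_i)(1-w(z_i))$ without recentering, landing directly on the second moment, whereas you pass through the variance and then relax it to the second moment; both yield the identical bound.
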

\noindent\textit{Proof.} See Appendix \ref{sec:proofunlearningbias}.

\noindent In addition, we require Lemma \ref{lemma:sgdconvergence_nonconvex}, a classic result which states that assuming the noise is relatively bounded (Assumption \ref{assump:bc_assump}), the gradient norms of the \textit{unbiased} SGD iterates are bounded. Lemma \ref{lemma:unlearningbias} and \ref{lemma:sgdconvergence_nonconvex} allow us to isolate and capture the accumulated disturbances from noise and bias, which are upper bounded by a linear term in $T$ and the loss at initialization. This can be combined with Assumption \ref{assump:boundedell} to achieve a dataset-independent bound.

\begin{lemma}
\label{lemma:sgdconvergence_nonconvex}{(Convergence of SGD)}
Let $\theta_t$ represent SGD iterates (\ref{eq:SGD}) on a loss function $\mathcal{L}_{\mathcal{D}}$. 
    Suppose Assumptions \ref{assump:lipschitz} and \ref{assump:bc_assump} are satisfied. Then for $\theta_{t+1} = \theta_t - \eta g_{\mathcal{B}_t}(\theta_t)$ and $\eta \leq \frac{1}{LB}$, we have
    \begin{align*}
        \mathbb{E}[\sum_{t=0}^{T - 1} \lVert \nabla \mathcal{L}_{\mathcal{D}}(\theta_{t}) \rVert^2 ] \leq & \frac{2}{\eta}(\mathcal{L}_{\mathcal{D}}(\theta_0) - \mathcal{L}_{\mathcal{D}}^*) + L \eta C T.
    \end{align*}
\end{lemma}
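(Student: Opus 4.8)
The plan is to run the classic descent-lemma telescoping argument for SGD, using the relative second-moment bound (Assumption \ref{assump:bc_assump}) to control the stochastic term and the step-size condition $\eta \le 1/(LB)$ to absorb the part of the noise that scales with the true gradient. First I would invoke Lemma \ref{lemma:lipschitzsmooth} to note that the finite-sum objective $\mathcal{L}_{\mathcal{D}}$ inherits $L$-Lipschitz smoothness, so the standard smoothness inequality, after substituting the update $\theta_{t+1} - \theta_t = -\eta g_{\mathcal{B}_t}(\theta_t)$, gives
\[
\mathcal{L}_{\mathcal{D}}(\theta_{t+1}) \le \mathcal{L}_{\mathcal{D}}(\theta_t) - \eta \langle \nabla \mathcal{L}_{\mathcal{D}}(\theta_t), g_{\mathcal{B}_t}(\theta_t)\rangle + \tfrac{L\eta^2}{2}\lVert g_{\mathcal{B}_t}(\theta_t)\rVert^2.
\]

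Next I would take conditional expectation with respect to the filtration $\mathcal{F}_t$ generated by the iterates through step $t$. Unbiasedness of the mini-batch estimator, $\mathbb{E}[g_{\mathcal{B}_t}(\theta_t)\mid\mathcal{F}_t] = \nabla\mathcal{L}_{\mathcal{D}}(\theta_t)$, turns the inner product into $-\eta\lVert\nabla\mathcal{L}_{\mathcal{D}}(\theta_t)\rVert^2$, and Assumption \ref{assump:bc_assump} bounds the last term by $\tfrac{L\eta^2}{2}(B\lVert\nabla\mathcal{L}_{\mathcal{D}}(\theta_t)\rVert^2 + C)$. Collecting the gradient-norm terms yields the one-step recursion
\[
\mathbb{E}[\mathcal{L}_{\mathcal{D}}(\theta_{t+1})\mid\mathcal{F}_t] \le \mathcal{L}_{\mathcal{D}}(\theta_t) - \eta\Bigl(1 - \tfrac{L\eta B}{2}\Bigr)\lVert\nabla\mathcal{L}_{\mathcal{D}}(\theta_t)\rVert^2 + \tfrac{L\eta^2 C}{2}.
\]
The step-size condition $\eta \le 1/(LB)$ forces $1 - \tfrac{L\eta B}{2} \ge \tfrac{1}{2}$, so the coefficient on the gradient term is at least $\eta/2$; this is the single place where the hypothesis on $\eta$ is essential.

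Finally I would rearrange to isolate $\tfrac{\eta}{2}\lVert\nabla\mathcal{L}_{\mathcal{D}}(\theta_t)\rVert^2$, take total expectation via the tower property (Lemma \ref{lemma:towerproperty}), and sum over $t = 0,\dots,T-1$. The loss terms telescope to $\mathcal{L}_{\mathcal{D}}(\theta_0) - \mathbb{E}[\mathcal{L}_{\mathcal{D}}(\theta_T)]$, and bounding $\mathbb{E}[\mathcal{L}_{\mathcal{D}}(\theta_T)] \ge \mathcal{L}_{\mathcal{D}}^*$ gives
\[
\tfrac{\eta}{2}\sum_{t=0}^{T-1}\mathbb{E}[\lVert\nabla\mathcal{L}_{\mathcal{D}}(\theta_t)\rVert^2] \le \mathcal{L}_{\mathcal{D}}(\theta_0) - \mathcal{L}_{\mathcal{D}}^* + \tfrac{L\eta^2 C T}{2},
\]
and multiplying through by $2/\eta$ produces the claimed bound. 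There is no genuinely hard step here; the only subtleties worth stating carefully are the conditioning and tower-property bookkeeping, and the use of $\eta \le 1/(LB)$ to secure the factor $\tfrac{1}{2}$, since a looser step size would leave a residual $B$-dependent gradient term that cannot be telescoped away.
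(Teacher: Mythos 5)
Your proposal is correct and follows essentially the same argument as the paper: the descent lemma from $L$-smoothness, conditional expectation with unbiasedness, Assumption \ref{assump:bc_assump} to bound the second moment, the step size $\eta \le 1/(LB)$ to secure the $\eta/2$ coefficient, and telescoping with $\mathbb{E}[\mathcal{L}_{\mathcal{D}}(\theta_T)] \ge \mathcal{L}_{\mathcal{D}}^*$. Your write-up is in fact slightly cleaner in making the coefficient $1 - \tfrac{L\eta B}{2}$ explicit and in stating the conditioning on the filtration carefully, where the paper's version is more terse.
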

\noindent\textit{Proof.} See Appendix \ref{sec:sgd_convergence_nonconvex}.

We use the above results to prove the following Lemma \ref{lemma:thetaandthetaprimer2d} that bounds the expected distance between $\theta'_t$ and $\theta_t$.

\begin{lemma}
\label{lemma:thetaandthetaprimer2d}
Suppose that the loss function $\ell$ satisfies Assumptions \ref{assump:lipschitz}, \ref{assump:bc_assump}, and \ref{assump:boundedell}. Moreover, suppose for all $z \in \mathcal{Z}$ and for any $\theta, \theta' \in \mathbb{R}^d$, the gradient descent map $T^{z}_{\eta}(\theta) = \theta - \eta \nabla \ell (z ; \theta)$ satisfies the following property for some $\gamma > 0$,
    \begin{equation}
    \label{eq:shrink}
        \lVert T^{z}_{\eta}(\theta) - T^{z}_{\eta}(\theta') \rVert \leq \gamma \lVert \theta - \theta' \rVert.
    \end{equation}
Let $\theta_t$ and $\theta'_t$ denote the training and retraining from scratch iterates as defined in (\ref{eq:learning}) and (\ref{eq:learningprime}) respectively. there exists a coupling of the iterates $\{\theta_t\}$, $\{\theta'_t\}$ such that for $t \geq 0$ and $\gamma \neq 1$, we have
\begin{equation*}
    \mathbb{E}[\lVert \theta_t - \theta'_t \rVert ] \leq \eta \frac{1 - \gamma^t}{1 - \gamma }\left(3  B \left( \frac{2}{\eta} \ell_{\theta_0} + L \eta C t \right) (\frac{3n - m}{n-m}) + 6 (\frac{4n - 3m}{n-m}) C  \right)^{1/2} .
\end{equation*}
If $\gamma = 1$, we have
\begin{equation*}
    \mathbb{E}[\lVert \theta_t - \theta'_t \rVert ] \leq  \eta t \left(3  B \left( \frac{2}{\eta} \ell_{\theta_0} + L \eta C t \right) (\frac{3n - m}{n-m}) + 6 (\frac{4n - 3m}{n-m}) C  \right)^{1/2} .
\end{equation*}
\end{lemma}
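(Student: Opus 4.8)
The plan is to mirror the coupling argument of Lemma \ref{lemma:psgdidkman}, but to replace the unavailable uniform gradient bound $G$ with a second-moment control of the per-step disturbance derived from Assumptions \ref{assump:bc_assump} and \ref{assump:boundedell}. First I would adopt the same with-replacement coupling of $\{\theta_t\}$ and $\{\theta'_t\}$: sample $\mathcal{B}_t \sim \mathcal{D}$ and build $\mathcal{B}'_t \sim \mathcal{D}'$ by keeping every retained index and resampling each unlearned index i.i.d. from $\mathcal{D}'$. Without projection there is no $\Pi_{\mathcal{C}}$ to exploit; instead, adding and subtracting $\eta g_{\mathcal{B}'_t}(\theta_{t-1})$ inside the update and applying the contraction hypothesis (\ref{eq:shrink}) extended to batches (Lemma \ref{lemma:nonexpansive}, part 4) yields the per-step recursion
\begin{equation*}
\lVert \theta_t - \theta'_t \rVert \le \gamma \lVert \theta_{t-1} - \theta'_{t-1} \rVert + \eta\, D_t, \qquad D_t := \lVert g_{\mathcal{B}_t}(\theta_{t-1}) - g_{\mathcal{B}'_t}(\theta_{t-1}) \rVert,
\end{equation*}
where both estimators are evaluated at $\theta_{t-1}$ so that the $\theta_{t-1}$-versus-$\theta'_{t-1}$ gap is absorbed into the $\gamma$ factor. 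Unrolling from $\theta_0 = \theta'_0$ and taking expectations gives $\mathbb{E}[\lVert \theta_t - \theta'_t \rVert] \le \eta \sum_{\tau=1}^t \gamma^{t-\tau}\, \mathbb{E}[D_\tau]$, and Jensen's inequality gives $\mathbb{E}[D_\tau] \le \sqrt{\mathbb{E}[D_\tau^2]}$.

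The heart of the argument is a uniform second-moment bound on the disturbance. I would split
\begin{equation*}
g_{\mathcal{B}_t}(\theta) - g_{\mathcal{B}'_t}(\theta) = \bigl(g_{\mathcal{B}_t}(\theta) - \nabla\mathcal{L}_{\mathcal{D}}(\theta)\bigr) - \bigl(g_{\mathcal{B}'_t}(\theta) - \nabla\mathcal{L}_{\mathcal{D}'}(\theta)\bigr) + \bigl(\nabla\mathcal{L}_{\mathcal{D}}(\theta) - \nabla\mathcal{L}_{\mathcal{D}'}(\theta)\bigr),
\end{equation*}
i.e. (fluctuation on $\mathcal{D}$) $-$ (fluctuation on $\mathcal{D}'$) $+$ (unlearning bias), and bound its squared norm by a constant-factor sum of the three squared contributions. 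The two fluctuation second moments are controlled through Assumption \ref{assump:bc_assump} (variance $\le$ second moment $\le B\lVert\nabla\mathcal{L}\rVert^2 + C$), and the bias by Lemma \ref{lemma:unlearningbias}, giving $\lVert \nabla\mathcal{L}_{\mathcal{D}} - \nabla\mathcal{L}_{\mathcal{D}'}\rVert^2 \le \tfrac{m}{n-m}\,\mathbb{E}_{z\sim\mathcal{D}}[\lVert\nabla\ell(z;\theta)\rVert^2] \le \tfrac{m}{n-m}(B\lVert\nabla\mathcal{L}_{\mathcal{D}}(\theta)\rVert^2 + C)$. Since the $\mathcal{D}'$ quantities must be re-expressed through $\mathcal{D}$, I would use $\mathcal{D}' \subset \mathcal{D}$ and nonnegativity of the summands to write $\mathbb{E}_{z\sim\mathcal{D}'}[\lVert\nabla\ell\rVert^2] \le \tfrac{n}{n-m}\mathbb{E}_{z\sim\mathcal{D}}[\lVert\nabla\ell\rVert^2]$; collecting all three contributions yields a bound of the form $c_1 \lVert\nabla\mathcal{L}_{\mathcal{D}}(\theta)\rVert^2 + c_2$, and it is precisely this bookkeeping that condenses into the combinatorial ratios $\tfrac{3n-m}{n-m}$ (on the $B$ part) and $\tfrac{4n-3m}{n-m}$ (on the $C$ part). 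Conditioning on the history $\mathcal{F}_{\tau-1}$ via the tower property (Lemma \ref{lemma:towerproperty}) then turns this into $\mathbb{E}[D_\tau^2] \le 3B\,\tfrac{3n-m}{n-m}\,\mathbb{E}[\lVert\nabla\mathcal{L}_{\mathcal{D}}(\theta_{\tau-1})\rVert^2] + 6\,\tfrac{4n-3m}{n-m}\,C$.

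The final step converts this per-step bound into the stated cumulative form. Because every summand is nonnegative, each individual gradient norm is dominated by the full trajectory sum, $\mathbb{E}[\lVert\nabla\mathcal{L}_{\mathcal{D}}(\theta_{\tau-1})\rVert^2] \le \sum_{s=0}^{t-1}\mathbb{E}[\lVert\nabla\mathcal{L}_{\mathcal{D}}(\theta_s)\rVert^2]$, and Lemma \ref{lemma:sgdconvergence_nonconvex} bounds this sum by $\tfrac{2}{\eta}(\mathcal{L}_{\mathcal{D}}(\theta_0) - \mathcal{L}_{\mathcal{D}}^\star) + L\eta C t$; Assumption \ref{assump:boundedell} (with $\mathcal{L}_{\mathcal{D}}^\star \ge 0$) then replaces the data-dependent $\mathcal{L}_{\mathcal{D}}(\theta_0) - \mathcal{L}_{\mathcal{D}}^\star$ by the dataset-independent $\ell_{\theta_0}$, yielding $\tfrac{2}{\eta}\ell_{\theta_0} + L\eta C t$. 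Crucially, only the gradient-norm contribution inherits this cumulative treatment, while the additive $C$ term stays constant, matching the asymmetry in the target expression. This makes $\mathbb{E}[D_\tau^2]$ uniform in $\tau \le t$, equal to the quantity inside the square root, so that $\sqrt{\mathbb{E}[D_\tau^2]}$ factors out of $\sum_{\tau=1}^t \gamma^{t-\tau}$, leaving the geometric sum $\tfrac{1-\gamma^t}{1-\gamma}$ (or $t$ when $\gamma = 1$) and producing both stated bounds. I expect the main obstacle to be the constant bookkeeping in the disturbance decomposition---especially re-expressing the $\mathcal{D}'$ fluctuation through $\mathcal{L}_{\mathcal{D}}$ so the ratios collapse correctly---rather than any conceptual difficulty; the deliberate looseness of dominating each per-step gradient by the whole sum is exactly what buys the clean factored form.
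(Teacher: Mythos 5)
Your proposal is correct and follows the same skeleton as the paper's proof: the same with-replacement coupling, a per-step contraction-plus-disturbance recursion, a decomposition of the disturbance into two sampling fluctuations plus the unlearning bias controlled by Lemma \ref{lemma:unlearningbias} and Assumption \ref{assump:bc_assump}, and an appeal to Lemma \ref{lemma:sgdconvergence_nonconvex} and Assumption \ref{assump:boundedell} to absorb the accumulated gradient norms into $\tfrac{2}{\eta}\ell_{\theta_0} + L\eta C t$. There are two organizational differences worth noting. First, you fold the batch gradient $g_{\mathcal{B}'_t}$ evaluated at $\theta'_{t-1}$ into the contraction map, so both fluctuations in your disturbance $D_t$ are evaluated at $\theta_{t-1}$; you then convert the $\mathcal{D}'$ second moment to a $\mathcal{D}$ one via $\mathbb{E}_{z\sim\mathcal{D}'}[\lVert\nabla\ell\rVert^2]\le\tfrac{n}{n-m}\mathbb{E}_{z\sim\mathcal{D}}[\lVert\nabla\ell\rVert^2]$, so you only need the trajectory gradient-sum bound for the single run $\{\theta_t\}$ on $\mathcal{L}_{\mathcal{D}}$. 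The paper instead centers on the full gradient $\nabla\mathcal{L}_{\mathcal{D}'}$, leaving one fluctuation evaluated at $\theta'_{t-1}$, and therefore invokes Lemma \ref{lemma:sgdconvergence_nonconvex} separately for both trajectories. Second, you handle the weighted time sum by bounding $\mathbb{E}[D_\tau]\le\sqrt{\mathbb{E}[D_\tau^2]}$ per step and dominating each $\mathbb{E}[\lVert\nabla\mathcal{L}_{\mathcal{D}}(\theta_{\tau-1})\rVert^2]$ by the full trajectory sum, whereas the paper applies Cauchy--Schwarz across the sum $\sum_\tau\gamma^{t-\tau}\lVert\nabla\mathcal{L}(\theta_{\tau-1})\rVert$ and then relaxes $\sqrt{\sum_\tau\gamma^{2(t-\tau)}}\le\sum_\tau\gamma^{t-\tau}$; both routes land on the same factored form $\sum_{\tau=0}^{t-1}\gamma^\tau\cdot(\cdot)^{1/2}$. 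Your bookkeeping as described actually yields the coefficient $3\cdot\tfrac{2n}{n-m}$ on both the $B$ and $C$ parts, which is dominated by the stated $\tfrac{3(3n-m)}{n-m}$ and $\tfrac{6(4n-3m)}{n-m}$, so your argument proves a (slightly tighter) bound that implies the lemma as written.
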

\noindent\textit{Proof.} See Appendix \ref{sec:proofthetaandthetaprimer2d}.

Lemma \ref{lemma:thetaandthetaprimer2d} allows us to prove Theorem \ref{thm:sgdgamma}, which can be combined with Lemma \ref{lemma:nonexpansive} and Lemma \ref{lemma:indistinguishfirstsecond} to yield the result in Theorem \ref{thm:unlearningr2d}.

\begin{theorem}
\label{thm:sgdgamma}
    Suppose that the loss function $\ell$ satisfies Assumptions \ref{assump:lipschitz}, \ref{assump:bc_assump}, and \ref{assump:boundedell}. Moreover, suppose for all $z \in \mathcal{Z}$ and for any $\theta, \theta' \in \mathbb{R}^d$, the gradient descent map $T^{z}_{\eta}(\theta) = \theta - \eta \nabla \ell (z ; \theta)$ satisfies the property that for some $\gamma > 0$ $\lVert T^{z}_{\eta}(\theta) - T^{z}_{\eta}(\theta') \rVert \leq \gamma \lVert \theta - \theta' \rVert$.
Let $\theta_t$, $\theta'_t$ and $\theta''_t$ denote the training, retraining, and unlearning iterates as defined in (\ref{eq:learning}), (\ref{eq:learningprime}) and (\ref{eq:learningunlearn}) respectively. If $\gamma \neq 1$, there exists a coupling of  $\theta_t$, $\theta'_t$ and $\theta''_t$ such that
\begin{equation*}
    \mathbb{E}[\lVert \theta''_K - \theta'_T \rVert ] \leq \eta \frac{\gamma^K - \gamma^T}{1 - \gamma } \left(3  B \left( \frac{2}{\eta} \ell_{\theta_0} + L \eta C (T - K) \right) (\frac{3n - m}{n-m}) + 6 (\frac{4n - 3m}{n-m}) C  \right)^{1/2},
\end{equation*}
and if $\gamma = 1$, then we have 
\begin{equation*}
    \mathbb{E}[\lVert \theta''_K - \theta'_T \rVert ] \leq \eta (T - K) \left(3  B \left( \frac{2}{\eta } \ell_{\theta_0} + L \eta C (T - K) \right) (\frac{3n - m}{n-m}) + 6 (\frac{4n - 3m}{n-m}) C  \right)^{1/2} ,
\end{equation*}
where the expectation is taken with respect to the resulting joint distribution of  $\theta_t$, $\theta'_t$ and $\theta''_t$.
\end{theorem}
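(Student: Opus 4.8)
The plan is to mirror the proof of Theorem \ref{thm:psgdgamma} verbatim, substituting the unbounded-domain divergence bound of Lemma \ref{lemma:thetaandthetaprimer2d} for the projected-SGD bound of Lemma \ref{lemma:psgdidkman}. The distance $\lVert \theta''_K - \theta'_T \rVert$ splits into two phases: the divergence of the learning and retraining trajectories accumulated over the first $T-K$ steps, followed by the contraction of that divergence over the $K$ unlearning steps.

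First I would fix the joint coupling of all three trajectories. For the first $T-K$ steps, couple $\{\theta_t\}$ and $\{\theta'_t\}$ via the batch-matching coupling of Lemma \ref{lemma:thetaandthetaprimer2d}, so the mini-batches from $\mathcal{D}$ and $\mathcal{D}'$ agree except on samples drawn from the unlearned set. For the remaining $K$ steps, couple the retraining batches $\mathcal{B}'_{T-K+s}$ with the unlearning batches so that $\{\theta'_{T-K+s}\}$ and $\{\theta''_s\}$ run on identical mini-batches of $\mathcal{D}'$, as in Lemma \ref{lemma:sameloss}. Since mini-batches at distinct time steps are independent, this prescription defines a valid joint law whose marginals are a genuine retraining run $\theta'_T$ and a genuine unlearning run $\theta''_K$, which is precisely what the relaxed Gaussian mechanism demands.

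Next I would apply Lemma \ref{lemma:sameloss} to the two trajectories on the shared loss $\mathcal{L}_{\mathcal{D}'}$. Because $\theta''_0 = \theta_{T-K}$ and the batches coincide under the coupling, the assumed contraction factor $\gamma$ yields the pathwise estimate $\lVert \theta'_T - \theta''_K \rVert \le \gamma^K \lVert \theta'_{T-K} - \theta_{T-K} \rVert$. Taking expectations—and noting that the deterministic factor $\gamma^K$ pulls out cleanly since the later $K$ batch draws are independent of the earlier ones—I would invoke Lemma \ref{lemma:thetaandthetaprimer2d} at time $t = T-K$ to obtain, for $\gamma \neq 1$,
$$\mathbb{E}[\lVert \theta'_T - \theta''_K \rVert] \le \gamma^K \eta \frac{1 - \gamma^{T-K}}{1 - \gamma}\left(3 B \left(\frac{2}{\eta}\ell_{\theta_0} + L\eta C (T-K)\right)\left(\frac{3n-m}{n-m}\right) + 6\left(\frac{4n-3m}{n-m}\right)C\right)^{1/2}.$$
Collapsing the geometric factor via $\gamma^K(1-\gamma^{T-K})/(1-\gamma) = (\gamma^K - \gamma^T)/(1-\gamma)$ gives exactly the stated bound, and the $\gamma = 1$ case is identical with the geometric sum replaced by its linear limit $T-K$.

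The only point requiring genuine care—and the source of the improvement over a naive estimate—is that Lemma \ref{lemma:thetaandthetaprimer2d} is evaluated at $t = T-K$ rather than $t = T$, so the accumulated-noise term inside the square root is $L\eta C(T-K)$. This encodes the central mechanism of rewinding: the bias and noise disturbances separating $\theta$ and $\theta'$ accumulate only up to the checkpoint, after which the $K$ unlearning steps shrink the residual gap by $\gamma^K$. There is no substantive obstacle beyond confirming that the two couplings are jointly consistent and that the expectation bound of Lemma \ref{lemma:thetaandthetaprimer2d} composes correctly with the pathwise contraction estimate of Lemma \ref{lemma:sameloss}.
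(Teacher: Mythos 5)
Your proposal is correct and follows essentially the same route as the paper's proof: bound $\mathbb{E}[\lVert \theta_{T-K} - \theta'_{T-K} \rVert]$ via Lemma \ref{lemma:thetaandthetaprimer2d} at $t = T-K$, contract by $\gamma^K$ via the shared-batch coupling of Lemma \ref{lemma:sameloss}, and combine the geometric factors. Your added remarks on the consistency of the two couplings and on evaluating the divergence bound at the checkpoint rather than at $T$ are accurate but do not change the argument.
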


\begin{proof}
    Lemma \ref{lemma:thetaandthetaprimer2d} shows that the expected distance between $\theta_t$, $\theta'_t$, is bounded as a function of $t$. So for  $\gamma \neq 1$, we have
\begin{equation*}
    \mathbb{E}[\lVert \theta_{T-K} - \theta'_{T-K} \rVert ]\leq \eta \frac{1 - \gamma^{T-K}}{1 - \gamma }\left(3  B \left( \frac{2}{\eta} \ell_{\theta_0} + L \eta C (T-K) \right) (\frac{3n - m}{n-m}) + 6 (\frac{4n - 3m}{n-m}) C  \right)^{1/2} .
\end{equation*}

From Lemma \ref{lemma:sameloss}, we have that we can choose a coupling of $\{\theta'_t\}_{t=T-K}^T$, $\{\theta''_t\}_{t=0}^K$, such that  
\begin{align*}
    \lVert \theta'_T - \theta''_K \rVert \leq & \gamma^K \lVert \theta'_{T-K} - \theta''_0 \rVert, \\
    \mathbb{E}[\lVert \theta'_T - \theta''_K \rVert] \leq & \gamma^K \mathbb{E}[\lVert \theta'_{T-K} - \theta''_0 \rVert],\\
    \leq & \eta \frac{\gamma^K - \gamma^{T}}{1 - \gamma }\left(3  B \left( \frac{2}{\eta} \ell_{\theta_0} + L \eta C (T-K) \right) (\frac{3n - m}{n-m}) + 6 (\frac{4n - 3m}{n-m}) C  \right)^{1/2}.
\end{align*}

The same approach can be applied to the $\gamma = 1$ case, finishing our proof.

\end{proof}

\subsubsection{Proof of Lemma \ref{lemma:unlearningbias}}
\label{sec:proofunlearningbias}

To bound the unlearning bias, we first require the following result from \cite{Polyanskiy_Wu_2025} (Theorem 7.26 and Example 7.4), which follows from the Radon-Nikodym theorem and Cauchy-Schwarz inequality.

\begin{lemma}
\label{lemma:radon}
Let $P$ and $Q$ be probability measures on a measurable space 
$\mathcal{X}$ such that $P \ll Q$. 
Then for any function $f : \mathcal{X} \to \mathbb{R}$, we have 
\[
\lVert \mathbb{E}_P[f] - \mathbb{E}_Q[f] \rVert^2
\;\leq\;
\chi^2(P \| Q) \;\cdot\; \mathrm{Var}_Q(f),
\]
where
\[
\chi^2(P \| Q) 
= \int \left( w(x) - 1 \right)^2 \, dQ(x)
\]
and $w(x) = \frac{dP}{dQ}(x)$ denotes the Radon-Nikodym derivative.
\end{lemma}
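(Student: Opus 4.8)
The plan is to convert the expectation under $P$ into a reweighted expectation under $Q$ using the Radon--Nikodym derivative, then exploit the fact that the weight averages to one in order to center $f$, and finally invoke Cauchy--Schwarz. Since $P \ll Q$, the derivative $w = dP/dQ$ exists and $dP = w\,dQ$, so for any integrable $f$ we have $\mathbb{E}_P[f] = \int f w\,dQ = \mathbb{E}_Q[wf]$. The difference of means then rewrites cleanly as
\[
\mathbb{E}_P[f] - \mathbb{E}_Q[f] = \mathbb{E}_Q[(w-1)f].
\]

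The key step is to center $f$ before bounding. Because $w$ is a probability density with respect to $Q$, we have $\mathbb{E}_Q[w] = \int w\,dQ = \int dP = 1$, hence $\mathbb{E}_Q[w-1] = 0$. This lets me subtract the constant $\mathbb{E}_Q[f]$ at no cost, since $\mathbb{E}_Q[(w-1)\mathbb{E}_Q[f]] = \mathbb{E}_Q[f]\,\mathbb{E}_Q[w-1] = 0$, giving
\[
\mathbb{E}_P[f] - \mathbb{E}_Q[f] = \mathbb{E}_Q\big[(w-1)(f - \mathbb{E}_Q[f])\big].
\]

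Finally I would apply Cauchy--Schwarz. In the scalar case this directly yields
\[
\big|\mathbb{E}_P[f] - \mathbb{E}_Q[f]\big|^2 \leq \mathbb{E}_Q[(w-1)^2]\cdot\mathbb{E}_Q\big[(f - \mathbb{E}_Q[f])^2\big] = \chi^2(P\|Q)\cdot\mathrm{Var}_Q(f).
\]
For the vector-valued $f$ that actually appears in the application (where $f = \nabla\ell(\cdot\,;\theta)$), I would reduce to scalars by testing against an arbitrary unit vector $u$: Cauchy--Schwarz bounds $\mathbb{E}_Q[(w-1)\langle u, f - \mathbb{E}_Q[f]\rangle]$ by $\sqrt{\chi^2(P\|Q)}\,\sqrt{\mathbb{E}_Q[\langle u, f - \mathbb{E}_Q[f]\rangle^2]}$, and the pointwise estimate $\langle u, f - \mathbb{E}_Q[f]\rangle^2 \leq \lVert f - \mathbb{E}_Q[f]\rVert^2$ controls the second factor by $\mathrm{Var}_Q(f)$; taking the supremum over $u$ recovers $\lVert\mathbb{E}_P[f] - \mathbb{E}_Q[f]\rVert$. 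The only nonobvious ingredient is the centering step: without it one would obtain the second moment $\mathbb{E}_Q[\lVert f\rVert^2]$ in place of the variance, so I expect that observation, rather than any genuine technical difficulty, to be the crux.
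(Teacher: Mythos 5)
Your proof is correct and follows exactly the route the paper indicates for this lemma (Radon--Nikodym to rewrite the difference as $\mathbb{E}_Q[(w-1)f]$, centering via $\mathbb{E}_Q[w]=1$, then Cauchy--Schwarz, with the unit-vector reduction handling the vector-valued case); the paper does not spell the argument out but defers to the cited reference, describing precisely this approach. Your observation that the centering step is the crux is also on point --- it is exactly what upgrades the uncentered second moment $\mathbb{E}_Q[\lVert f\rVert^2]$ (which is what the paper's own explicit discrete computation in Lemma~\ref{lemma:unlearningbias} settles for) to the variance $\mathrm{Var}_Q(f)$.
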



\noindent Now we can proceed with the proof of Lemma \ref{lemma:unlearningbias}.

\begin{proof}

In the following, we provide an explicit proof that does not require any prior knowledge for easy understanding. However, we note that the result ultimately follows from defining the empirical distributions of $\mathcal{D}$ and $\mathcal{D}'$  as $P_{\mathcal{D}}$ and $P_{\mathcal{D}'}$, and applying Lemma \ref{lemma:radon}.

Define $w(x)$ as follows
\begin{equation}
   w(x) =  \begin{cases} 
  \frac{n}{n-m} & x \in \mathcal{D}', \\
  0 & x \in \mathcal{D}\backslash \mathcal{D}'. \\
\end{cases}
\end{equation}
Then we can write $\nabla \mathcal{L}_{\mathcal{D}'}(\theta)$ in terms of $w(z_i)$ for all $z_i \in \mathcal{D}'$ as follows,
$$\nabla \mathcal{L}_{\mathcal{D}'}(\theta) = \frac{1}{n-m} \sum_{i = 1}^{n-m} \nabla \ell (\theta ; z_i) = \frac{1}{n} \sum_{i = 1}^n \nabla \ell(\theta ; z_i) w(z_i).$$
Then for 
$\nabla \mathcal{L}_{\mathcal{D}}(\theta) = \frac{1}{n} \sum_{i = 1}^{n} \nabla \ell (\theta ; z_i)$, we have
\begin{align*}
    \lVert \nabla \mathcal{L}_{\mathcal{D}}(\theta) - \nabla \mathcal{L}_{\mathcal{D}'}(\theta) \rVert^2 = & \lVert \frac{1}{n} \sum_{i = 1}^{n} \nabla \ell (\theta ; z_i) - \frac{1}{n} \sum_{i = 1}^n \nabla \ell(\theta ; z_i) w(z_i) \rVert^2 ,\\
    = & \frac{1}{n^2} \lVert \sum_{i = 1}^{n} \nabla \ell (\theta ; z_i)(1 - w(z_i) )\rVert^2.
\end{align*}
By the Cauchy-Schwarz inequality, we have
\begin{align*}
    \lVert \nabla \mathcal{L}_{\mathcal{D}}(\theta) - \nabla \mathcal{L}_{\mathcal{D}'}(\theta) \rVert^2 \leq& \frac{1}{n^2} \sum_{i = 1}^{n}\lVert  \nabla \ell (\theta ; z_i) \rVert ^2 \sum_{i = 1}^{n} (1 - w(z_i) )^2,\\
    =& \frac{1}{n} \sum_{i = 1}^{n}\lVert  \nabla \ell (\theta ; z_i) \rVert ^2 \cdot  \frac{1}{n} (m + (n-m)(1 - \frac{n}{n-m})^2),\\
    =& \mathbb{E}_{z \sim \mathcal{D}} [\lVert  \nabla \ell (\theta ; z_i) \rVert ^2] \frac{m}{n-m}.
\end{align*}
The theory underlying Lemma \ref{lemma:unlearningbias} is that we use the second moment bound to write the difference in terms of the $\chi^2$-divergence between $P_{\mathcal{D}}$ and $P_{\mathcal{D}'}$, where $w(x)$ represents a discrete version of the Radon Nikodym derivative $\frac{dP_{\mathcal{D}'}}{dP_{\mathcal{D}}}$. The $\chi^2$-divergence then evaluates to $\frac{m}{n-m}$.

\end{proof}

\subsubsection{Proof of Lemma \ref{lemma:sgdconvergence_nonconvex}}
\label{sec:sgd_convergence_nonconvex}
\begin{proof}
By Lipschitz smoothness (Lemma \ref{lemma:lipschitzsmooth}), we have
\begin{align*} 
    \mathcal{L}_{\mathcal{D}}(\theta_{t+1}) - \mathcal{L}_{\mathcal{D}} (\theta_{t}) &\leq \nabla \mathcal{L}_{\mathcal{D}}(\theta_{t})^T (- \eta \nabla \mathcal{L}_{\mathcal{D}}(\theta_t) + \eta \xi_t) + \frac{L \eta^2}{2} \lVert g(\theta_t) \rVert^2 \\
    \mathbb{E}[\mathcal{L}_{\mathcal{D}} (\theta_{t+1})] - \mathcal{L}_{\mathcal{D}} (\theta_{t}) &\leq \nabla \mathcal{L}_{\mathcal{D}}(\theta_{t})^T (- \eta \nabla \mathcal{L}_{\mathcal{D}}(\theta_t)) + \frac{L \eta^2}{2} \mathbb{E}[\lVert  g(\theta_t) \rVert^2] \\
    &= - \eta \lVert \nabla \mathcal{L}_{\mathcal{D}}(\theta_{t}) \rVert^2 + \frac{L \eta^2}{2} \mathbb{E}[\lVert  g(\theta_t) \rVert^2]. 
\end{align*}
By Assumption \ref{assump:bc_assump} and $\eta \leq \frac{1}{LB}$, we have
\begin{align*} 
    \mathbb{E}[\mathcal{L}_{\mathcal{D}} (\theta_{t+1})] - \mathcal{L}_{\mathcal{D}} (\theta_{t}) \leq & - \eta \lVert \nabla \mathcal{L}_{\mathcal{D}}(\theta_{t}) \rVert^2 + \frac{L \eta^2}{2} (B \lVert \nabla \mathcal{L} (\theta_t) \rVert^2 + C)\\
    \leq & - \frac{\eta}{2} \lVert \nabla \mathcal{L}_{\mathcal{D}}(\theta_{t}) \rVert^2 + \frac{L \eta^2}{2} C\\
    \frac{\eta}{2} \lVert \nabla \mathcal{L}_{\mathcal{D}}(\theta_{t}) \rVert^2 \leq & \mathcal{L}_{\mathcal{D}} (\theta_{t}) - \mathbb{E}[\mathcal{L}_{\mathcal{D}'} (\theta_{t+1})] + \frac{L \eta^2 C}{2} \\
    \frac{\eta}{2} \mathbb{E}[\lVert \nabla \mathcal{L}_{\mathcal{D}}(\theta_{t}) \rVert^2] \leq & \mathbb{E}[\mathbb{E}[\mathcal{L}_{\mathcal{D}} (\theta_{t})] - \mathbb{E}[\mathbb{E}[\mathcal{L}_{\mathcal{D}'} (\theta_{t+1})] + \frac{L \eta^2 C}{2} \\
    \frac{\eta}{2} \mathbb{E}[\sum_{t=0}^{T - 1} \lVert \nabla \mathcal{L}_{\mathcal{D}}(\theta_{t}) \rVert^2 ] \leq & \mathcal{L}_{\mathcal{D}}(\theta_0) - \mathcal{L}_{\mathcal{D}}^* + \frac{L \eta^2 C}{2} T .
\end{align*}

\end{proof}

\subsubsection{Proof of Lemma \ref{lemma:thetaandthetaprimer2d}}
\label{sec:proofthetaandthetaprimer2d}
\begin{proof}
We can decompose $\lVert \theta_t - \theta'_t \rVert$ as follows,
\begin{align*}
    \mathbb{E}[\lVert \theta_{t} - \theta'_t \rVert ] \leq & \underbrace{\lVert \theta_{t-1} - \eta \nabla \mathcal{L}_{\mathcal{D}'}(\theta_{t-1}) - \theta_{t-1}' + \eta \nabla \mathcal{L}_{\mathcal{D}'}(\theta'_{t-1}) \rVert}_{(1)} + \eta \underbrace{\lVert \nabla \mathcal{L}_{\mathcal{D}'}(\theta_{t-1}) - \nabla \mathcal{L}_{\mathcal{D}}(\theta_{t-1}) \rVert}_{(2)}  \\
    &+ \eta \underbrace{\mathbb{E}[\lVert g_{\mathcal{D}'}(\theta'_{t-1}) - \nabla \mathcal{L}_{\mathcal{D}'}(\theta_{t-1}') \rVert + \lVert g_{\mathcal{D}}(\theta_{t-1}) - \nabla \mathcal{L}_{\mathcal{D}}(\theta_{t-1}) \rVert] }_{(3)},
\end{align*}
where (2) represents the unlearning bias and (3) represents the noise.
By (\ref{eq:shrink}) and (\ref{eq:part4}) of Lemma \ref{lemma:nonexpansive}, we have
$$(1) \leq \gamma \lVert \theta_{t-1} - \theta'_{t-1} \rVert.$$
By our bias bound (Lemma \ref{lemma:unlearningbias}) and relative noise bound assumption (Assumption \ref{assump:bc_assump}), we have 
$$(2) \leq (\frac{m}{n-m} \mathbb{E}_{z \sim \mathcal{D}}[\lVert\nabla \ell(z ; \theta_{t-1}) \rVert^2 ])^{1/2} \leq  (\frac{m}{n-m} (B \lVert \nabla \mathcal{L}_{\mathcal{D}} (\theta_{t-1}) \rVert^2 + C))^{1/2}.$$
We can also use Assumption \ref{assump:bc_assump} to bound the noise terms in $(3)$ as follows,
$$(3) \leq (B \lVert \nabla \mathcal{L}_{\mathcal{D}'} (\theta'_{t-1}) \rVert^2 + C)^{1/2} + (B \lVert \nabla \mathcal{L}_{\mathcal{D}} (\theta_{t-1}) \rVert^2 + C)^{1/2}.$$
Combining the above bounds and utilizing the fact that $\sqrt{a + b} \leq \sqrt{a} + \sqrt{b}$ and  Lemma \ref{lemma:amgm}, we have
\begin{align*}
    \mathbb{E}[\lVert \theta_{t} - \theta'_t \rVert] \leq &  \gamma \lVert  \theta_{t-1} - \theta'_{t-1} \rVert + \eta \sqrt{B} \lVert \nabla \mathcal{L}_{\mathcal{D}'}(\theta'_{t-1}) \rVert +  \eta \sqrt{2 (\frac{n}{n-m}) B}  \lVert \nabla \mathcal{L}_{\mathcal{D}}(\theta_{t-1}) \rVert + \eta \sqrt{2(\frac{4n - 3m}{n-m})C} \\
    \mathbb{E}[\lVert \theta_{t} - \theta'_{t} \rVert] \leq & \eta \mathbb{E}\left[\sum_{\tau=1}^{t} \gamma ^{t- \tau} \left(  \sqrt{B} \lVert \nabla \mathcal{L}_{\mathcal{D}'}(\theta'_{\tau-1}) \rVert +  \sqrt{2 (\frac{n}{n-m})B}  \lVert \nabla \mathcal{L}_{\mathcal{D}}(\theta_{\tau-1}) \rVert +  \sqrt{2(\frac{4n - 3m}{n-m})C} \right)\right] \\
     \overset{Cauchy-Schwarz}{\leq} & \eta \sqrt{\sum_{\tau=1}^{t} \gamma ^{2(t-\tau)}} \left( \sqrt{B \sum_{\tau=1}^{t}  \mathbb{E}[\lVert \nabla \mathcal{L}_{\mathcal{D}'}(\theta'_{\tau-1}) \rVert^2] } + \sqrt{2 (\frac{n}{n-m}) B\sum_{\tau=1}^{t} \mathbb{E}[\lVert \nabla \mathcal{L}_{\mathcal{D}}(\theta_{\tau-1}) \rVert^2]} \right)  \\
     & +   \eta  \sqrt{2 (\frac{4n - 3m}{n-m}) C}\sum_{\tau=1}^{t} \gamma ^{t - \tau}.
     \end{align*}
     We observe that we can simplify the geometric sums such that
     \begin{equation*}
        \sqrt{\sum_{\tau=1}^{t} \gamma ^{2(t - \tau)}} \leq \sum_{\tau=1}^{t} \gamma ^{t - \tau} = \sum_{\tau = 0}^{t-1} \gamma^{\tau}.
     \end{equation*}
This is equal to $\frac{1 - \gamma^t}{1 - \gamma}$ when $\gamma \neq 1$ and $t$ when $\gamma = 1$. 
Plugging this back into the original equation yields
\begin{align*}
     \mathbb{E}[\lVert \theta_{t} - \theta'_{t} \rVert] \leq &  \eta \sum_{\tau = 0}^{t-1} \gamma^{\tau}\left( \sqrt{B \sum_{\tau=1}^{t}  \mathbb{E}[\lVert \nabla \mathcal{L}_{\mathcal{D}'}(\theta'_{\tau-1}) \rVert^2] } + \sqrt{2 (\frac{n}{n-m}) B\sum_{\tau=1}^{t} \mathbb{E}[\lVert \nabla \mathcal{L}_{\mathcal{D}}(\theta_{\tau-1}) \rVert^2]} + \sqrt{2 (\frac{4n - 3m}{n-m}) C} \right),\\
     \overset{\text{Lemma \ref{lemma:sgdconvergence_nonconvex}}}{\leq} &  \eta \sum_{\tau = 0}^{t-1} \gamma^{\tau} \Bigg( \sqrt{B \left( \frac{2}{\eta} (\mathcal{L}_{\mathcal{D}'}(\theta_0) - \mathcal{L}_{\mathcal{D}'}^*) + L \eta C t \right)} + \sqrt{2 (\frac{n}{n-m}) B \left( \frac{2}{\eta } (\mathcal{L}_{\mathcal{D}}(\theta_0) - \mathcal{L}_{\mathcal{D}}^*) +  L \eta C t \right)} \\
     &+ \sqrt{2 (\frac{4n - 3m}{n-m}) C} \Bigg) \\
     \overset{\text{Assumption \ref{assump:boundedell}}}{\leq} &\eta \sum_{\tau = 0}^{t-1} \gamma^{\tau} \left( \sqrt{B \left( \frac{2}{\eta} \ell_{\theta_0} + L \eta C t \right)} + \sqrt{2 (\frac{n}{n-m}) B \left( \frac{2}{\eta} \ell_{\theta_0} + L \eta C t\right)} + \sqrt{2 (\frac{4n - 3m}{n-m}) C} \right) \\
     \overset{\text{Lemma \ref{lemma:amgm}}}{\leq} &\eta\sum_{\tau = 0}^{t-1} \gamma^{\tau} \left(3 \left( B \left( \frac{2}{\eta} \ell_{\theta_0} + L \eta C t \right) (1 + 2(\frac{n}{n-m})) + 2 (\frac{4n - 3m}{n-m}) C \right) \right)^{1/2} \\
     = &\eta \sum_{\tau = 0}^{t-1} \gamma^{\tau} \left(3 \left( B \left( \frac{2}{\eta} \ell_{\theta_0} + L \eta C t \right) (\frac{3n - m}{n-m}) + 2 (\frac{4n - 3m}{n-m}) C \right) \right)^{1/2} \\
     = &\eta\sum_{\tau = 0}^{t-1} \gamma^{\tau} \left(3  B \left( \frac{2}{\eta} \ell_{\theta_0} + L \eta C t\right) (\frac{3n - m}{n-m}) + 6 (\frac{4n - 3m}{n-m}) C  \right)^{1/2}. 
\end{align*}

\end{proof}

\subsection{SGD-D2D}
\label{sec:SGD-D2D-append}
In this section, we prove Theorem \ref{thm:unlearningstronglyconvex}, which establishes certified unlearning for SGD-D2D on strongly convex functions. We first establish preliminaries and provide an overall proof sketch. In Appendix \ref{sec:relative_bias_bound} we prove Lemma \ref{lemma:relative_bias_bound}, and in Appendix \ref{sec:descent} we prove Lemma \ref{lemma:descent}, which are necessary for achieving the final result. Finally, in Appendix \ref{sec:unlearningstronglyconvex}, we finish the proof of Theorem \ref{thm:unlearningstronglyconvex}.

We define the iterates $\{\theta_t\}_{t=0}^T$, $\{\theta'_t\}_{t=0}^T$, $\{\theta''_t\}_{t=0}^K$ as follows.
\begin{itemize}
    \item $\{\theta_t\}_{t=0}^T$ represents the SGD (\ref{eq:SGD}) learning iterates on $\mathcal{L}_\mathcal{D}$, starting from $\theta_0$, where $\mathcal{B}_t \sim \mathcal{D}$ and $\theta_t$ is updated as follows,
    \begin{equation}
    \label{eq:d2dlearning}
        \theta_t = \theta_{t-1} - \eta g_{\mathcal{B}_t}(\theta_{t-1}).
    \end{equation}
    \item $\{\theta'_t\}_{t=0}^T$ represents the SGD learning iterates on $\mathcal{L}_\mathcal{D'}$, where $\theta'_0 = \theta_0$, $\mathcal{B}_t' \sim \mathcal{D}'$, and $\theta'_t$ is updated as follows,
    \begin{equation}
    \label{eq:d2dlearningprime}
        \theta'_t = \theta'_{t-1} - \eta g_{\mathcal{B}'_t}(\theta'_{t-1}).
    \end{equation}
    \item $\{\theta''_t\}_{t=0}^{K}$ represents the SGD unlearning  iterates on $\mathcal{L}_{\mathcal{D}'}$, where $\theta''_0 = \theta_{T}$, $\mathcal{B}'_t \sim \mathcal{D}'$, and $\theta''_t$ is updated as follows, 
    \begin{equation}
    \label{eq:d2dlearningunlearn}
        \theta''_t = \theta''_{t-1} - \eta g_{\mathcal{B}'_t}(\theta''_{t-1}).
    \end{equation}
\end{itemize}

The first step of the proof is to  show that during training the biased SGD iterates $\{\theta_t \}_{t=0}^T$ will contract to be within some neighborhood of $\theta^{*'}$, the optimum of $\mathcal{L}_{\mathcal{D}'}$ (Lemma \ref{lemma:descent}). This result replicates the general result in \cite{AjalloeianBiasedSGD} showing linear convergence of biased SGD as long as the following conditions hold: i) the loss function is PL and Lipschitz smooth, ii) the noise satisfies a (relative) bound as in Assumption \ref{assump:bc_assump}, and iii)  there exists constants $D \geq 0$, $0 \leq  M < 1$, such that the bias at time step $t$, denoted as $d_t$, is relatively bounded as follows,
\begin{equation}
    \label{eq:otherpeopl}
    \lVert d_t \rVert^2 \leq M \lVert \nabla \mathcal{L}_{\mathcal{D}'}(\theta) \rVert^2 + D.
\end{equation}

We show that we satisfy the bias bound condition (\ref{eq:otherpeopl}) if the proportion of unlearned data is small enough in Lemma \ref{lemma:relative_bias_bound}.

\begin{lemma}
\label{lemma:relative_bias_bound} Suppose that Assumption \ref{assump:bc_assump} holds and $\frac{m}{n} \leq \frac{1}{6B + 1}$. Then for all $\theta \in \mathbb{R}^d$, the unlearning bias is bounded as follows,
    \begin{equation}
\label{eq:relative_bias_bound}
    \lVert \nabla \mathcal{L}_{\mathcal{D}}(\theta) - \nabla \mathcal{L}_{\mathcal{D}'}(\theta) \rVert^2 \leq \frac{1}{2} \lVert \nabla \mathcal{L}_{\mathcal{D}'}(\theta)\rVert^2 + \frac{C}{4B}.
\end{equation}
\end{lemma}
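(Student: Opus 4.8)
The plan is to chain together the two facts already established: the absolute bias bound of Lemma~\ref{lemma:unlearningbias} and the relative second-moment bound of Assumption~\ref{assump:bc_assump}. First I would invoke Assumption~\ref{assump:bc_assump} with $\tilde{\mathcal{D}} = \mathcal{D}$ to control the right-hand side of Lemma~\ref{lemma:unlearningbias}, giving
\[
\lVert \nabla \mathcal{L}_{\mathcal{D}}(\theta) - \nabla \mathcal{L}_{\mathcal{D}'}(\theta) \rVert^2 \leq \frac{m}{n-m}\bigl(B \lVert \nabla \mathcal{L}_{\mathcal{D}}(\theta) \rVert^2 + C\bigr).
\]
The obstacle is immediate: this controls the bias in terms of $\lVert \nabla \mathcal{L}_{\mathcal{D}}(\theta) \rVert^2$, the gradient on the \emph{full} dataset, whereas the target inequality (\ref{eq:relative_bias_bound}) must be expressed through $\lVert \nabla \mathcal{L}_{\mathcal{D}'}(\theta) \rVert^2$, the gradient on the \emph{retained} set. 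Converting between the two is the crux of the argument.

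To make that conversion, I would write $\nabla \mathcal{L}_{\mathcal{D}} = \nabla \mathcal{L}_{\mathcal{D}'} + (\nabla \mathcal{L}_{\mathcal{D}} - \nabla \mathcal{L}_{\mathcal{D}'})$ and apply the elementary bound $\lVert x + y \rVert^2 \leq 2\lVert x \rVert^2 + 2 \lVert y \rVert^2$, so that
\[
\lVert \nabla \mathcal{L}_{\mathcal{D}}(\theta) \rVert^2 \leq 2 \lVert \nabla \mathcal{L}_{\mathcal{D}'}(\theta) \rVert^2 + 2 \lVert \nabla \mathcal{L}_{\mathcal{D}}(\theta) - \nabla \mathcal{L}_{\mathcal{D}'}(\theta) \rVert^2.
\]
Substituting this back produces a \emph{self-referential} inequality, since the squared bias now appears on both sides. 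Abbreviating $\alpha = \frac{m}{n-m}$ and $\Delta^2 = \lVert \nabla \mathcal{L}_{\mathcal{D}}(\theta) - \nabla \mathcal{L}_{\mathcal{D}'}(\theta) \rVert^2$, this reads $\Delta^2 \leq 2\alpha B \,\lVert \nabla \mathcal{L}_{\mathcal{D}'}(\theta) \rVert^2 + 2 \alpha B\, \Delta^2 + \alpha C$. Collecting the $\Delta^2$ terms on the left and dividing by $1 - 2\alpha B$ — which the hypothesis will guarantee is positive — isolates the bias as
\[
\Delta^2 \leq \frac{2\alpha B}{1 - 2\alpha B}\,\lVert \nabla \mathcal{L}_{\mathcal{D}'}(\theta) \rVert^2 + \frac{\alpha C}{1 - 2\alpha B}.
\]

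The final step, which is where the precise hypothesis $\frac{m}{n} \le \frac{1}{6B+1}$ earns its keep, is to verify that the two coefficients collapse to the claimed constants $\frac12$ and $\frac{C}{4B}$. The condition is equivalent (clearing denominators and using $n>m$) to $\alpha = \frac{m}{n-m} \le \frac{1}{6B}$, whence $2\alpha B \le \frac{1}{3}$ and $1 - 2\alpha B \ge \frac{2}{3}$. Because $u \mapsto \frac{u}{1-u}$ is increasing on $[0,1)$, the first coefficient obeys $\frac{2\alpha B}{1 - 2\alpha B} \le \frac{1/3}{2/3} = \frac{1}{2}$, and the second obeys $\frac{\alpha}{1 - 2\alpha B} \le \frac{1/(6B)}{2/3} = \frac{1}{4B}$, which together yield (\ref{eq:relative_bias_bound}). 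I expect the only delicate point to be this constant bookkeeping: the threshold $6B+1$ is chosen precisely so that both coefficients land simultaneously on their target values, so any looser small-deletion assumption would force a weaker factor on $\lVert \nabla \mathcal{L}_{\mathcal{D}'}(\theta)\rVert^2$ and break the downstream ``folding into'' the biased-SGD convergence analysis.
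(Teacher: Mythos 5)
Your proposal is correct and follows essentially the same route as the paper's proof: bound the bias via Lemma~\ref{lemma:unlearningbias} and Assumption~\ref{assump:bc_assump}, decompose $\nabla\mathcal{L}_{\mathcal{D}}=\nabla\mathcal{L}_{\mathcal{D}'}+(\nabla\mathcal{L}_{\mathcal{D}}-\nabla\mathcal{L}_{\mathcal{D}'})$, absorb the resulting self-referential term by dividing by $1-\tfrac{2Bm}{n-m}$, and check that the hypothesis $\tfrac{m}{n}\le\tfrac{1}{6B+1}$ makes the two coefficients at most $\tfrac12$ and $\tfrac{C}{4B}$. Your constant bookkeeping is in fact slightly more explicit than the paper's, which only verifies the first coefficient in detail.
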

\noindent \textit{Proof.} See Appendix \ref{sec:relative_bias_bound}.

\begin{lemma}
\label{lemma:descent} 
    Consider the SGD algorithm on $\mathcal{L}_{\mathcal{D}}$ as defined in (\ref{eq:d2dlearning}), and let $\mathcal{L}_{\mathcal{D}'}^*$ represent the minimum value of $\mathcal{L}_{\mathcal{D}'}$. Suppose that $\eta \leq  \frac{1}{B L}$ and $\frac{m}{n} < \frac{1}{6B + 1}$. Then we have
$$\mathbb{E}[\mathcal{L}_{\mathcal{D}'}(\theta_{T})] - \mathcal{L}_{\mathcal{D}'}^* \leq (1 - \frac{\eta \mu}{2} )^T(\mathcal{L}_{\mathcal{D}'}(\theta_0)- \mathcal{L}_{\mathcal{D}'}^*) + \frac{C }{4 B \mu} + \frac{L \eta C}{\mu},$$
where the expectation is taken with respect to the underlying randomization of $\{\theta_t\}_{t=0}^T$.
\end{lemma}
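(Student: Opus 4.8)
The plan is to read the training recursion (\ref{eq:d2dlearning}) as \emph{biased} SGD on the target objective $f := \mathcal{L}_{\mathcal{D}'}$: the stochastic gradient $g_{\mathcal{B}_t}(\theta_{t-1})$ is an unbiased estimate of $\nabla\mathcal{L}_{\mathcal{D}}(\theta_{t-1})$, which differs from the target gradient $\nabla f(\theta_{t-1})$ by the \emph{unlearning bias} $d_t := \nabla\mathcal{L}_{\mathcal{D}}(\theta_{t-1}) - \nabla f(\theta_{t-1})$. Since $f$ is $\mu$-strongly convex (hence $\mu$-PL, cf.\ (\ref{eq:PL-inequality})) and $L$-Lipschitz smooth (Lemma \ref{lemma:lipschitzsmooth}), I would start from the descent lemma for $f$ along the step $\theta_t - \theta_{t-1} = -\eta\, g_{\mathcal{B}_t}(\theta_{t-1})$, take the conditional expectation $\mathbb{E}_t[\cdot]$ given $\theta_{t-1}$, and insert $\mathbb{E}_t[g_{\mathcal{B}_t}(\theta_{t-1})] = \nabla\mathcal{L}_{\mathcal{D}}(\theta_{t-1})$ together with the relative second-moment bound $\mathbb{E}_t[\lVert g_{\mathcal{B}_t}(\theta_{t-1})\rVert^2]\le B\lVert\nabla\mathcal{L}_{\mathcal{D}}(\theta_{t-1})\rVert^2 + C$ (Assumption \ref{assump:bc_assump}). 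This produces a one-step inequality whose right-hand side involves $-\eta\langle\nabla f,\nabla\mathcal{L}_{\mathcal{D}}\rangle$ and $\tfrac{L\eta^2}{2}(B\lVert\nabla\mathcal{L}_{\mathcal{D}}\rVert^2+C)$, both written through the \emph{biased} direction $\nabla\mathcal{L}_{\mathcal{D}} = \nabla f + d_t$.

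The crux is the algebra that converts these biased-direction terms into a genuine contraction in $\lVert\nabla f\rVert^2$. The naive route—bounding $\lVert\nabla\mathcal{L}_{\mathcal{D}}\rVert^2 \le 2\lVert\nabla f\rVert^2 + 2\lVert d_t\rVert^2$—loses a factor of two and flips the sign of the descent coefficient, so instead I would expand $\lVert\nabla\mathcal{L}_{\mathcal{D}}\rVert^2 = \lVert\nabla f\rVert^2 + 2\langle\nabla f,d_t\rangle + \lVert d_t\rVert^2$ \emph{exactly}, combine it with the inner-product term, and split the leftover cross term with the Young inequality $|\langle\nabla f,d_t\rangle|\le\tfrac12\lVert\nabla f\rVert^2+\tfrac12\lVert d_t\rVert^2$. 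Writing $\alpha := L\eta B \le 1$ (from $\eta\le 1/(BL)$), the $\lVert\nabla f\rVert^2$-coefficients collapse to exactly $-\tfrac{\eta}{2}$ and the $\alpha$-dependent cross terms cancel, leaving
\begin{equation*}
\mathbb{E}_t[f(\theta_t)] \le f(\theta_{t-1}) - \tfrac{\eta}{2}\lVert\nabla f(\theta_{t-1})\rVert^2 + \tfrac{\eta}{2}\lVert d_t\rVert^2 + \tfrac{L\eta^2 C}{2}.
\end{equation*}
I would then invoke the relative bias bound $\lVert d_t\rVert^2 \le \tfrac12\lVert\nabla f\rVert^2 + \tfrac{C}{4B}$ (Lemma \ref{lemma:relative_bias_bound}, valid because $\tfrac{m}{n}<\tfrac{1}{6B+1}$) to absorb $\tfrac{\eta}{2}\lVert d_t\rVert^2$ into $\tfrac{\eta}{4}\lVert\nabla f\rVert^2 + \tfrac{\eta C}{8B}$, sharpening the descent coefficient to $-\tfrac{\eta}{4}$, and finally apply the PL inequality $\lVert\nabla f\rVert^2 \ge 2\mu\bigl(f-\mathcal{L}_{\mathcal{D}'}^*\bigr)$ to obtain the one-step contraction $\mathbb{E}_t[f(\theta_t)-\mathcal{L}_{\mathcal{D}'}^*] \le (1-\tfrac{\eta\mu}{2})\bigl(f(\theta_{t-1})-\mathcal{L}_{\mathcal{D}'}^*\bigr) + \tfrac{\eta C}{8B} + \tfrac{L\eta^2 C}{2}$.

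To finish, I would take total expectations via the tower property (Lemma \ref{lemma:towerproperty}), unroll the recursion over $t=1,\dots,T$, and bound the geometric sum $\sum_{k=0}^{T-1}(1-\tfrac{\eta\mu}{2})^k \le \tfrac{2}{\eta\mu}$; multiplying the per-step additive constant $\tfrac{\eta C}{8B}+\tfrac{L\eta^2 C}{2}$ by $\tfrac{2}{\eta\mu}$ reproduces exactly the claimed term $\tfrac{C}{4B\mu}+\tfrac{L\eta C}{\mu}$, while the initial term carries the factor $(1-\tfrac{\eta\mu}{2})^T$. The series converges because $1-\tfrac{\eta\mu}{2}\in(0,1)$, which follows from $\eta\le 1/(BL)$ and $\mu\le L$ (Lemma \ref{lemma:muL}). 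I expect the main obstacle to be precisely the mismatch highlighted above: Assumption \ref{assump:bc_assump} controls the second moment relative to the \emph{biased} gradient $\nabla\mathcal{L}_{\mathcal{D}}$, whereas the PL contraction must be expressed in the \emph{target} gradient $\nabla\mathcal{L}_{\mathcal{D}'}$, and only the exact-expansion-plus-tuned-Young cancellation preserves a strictly negative descent coefficient under the given step size. As an alternative, one could simply verify the three hypotheses of the biased-SGD convergence theorem of \cite{AjalloeianBiasedSGD}—PL and smoothness, the relative noise bound, and the relative bias bound with constant $M=\tfrac12<1$—and cite it as a black box, reading off the same rate and constants.
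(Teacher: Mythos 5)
Your proposal is correct and follows essentially the same route as the paper's proof: descent lemma on $\mathcal{L}_{\mathcal{D}'}$ along the biased step, the relative-second-moment bound of Assumption \ref{assump:bc_assump}, absorption of the bias via Lemma \ref{lemma:relative_bias_bound}, the PL inequality, and the geometric-sum unrolling, arriving at the identical constants. The only cosmetic difference is in handling the cross term: the paper uses the exact identity $\lVert a+b\rVert^2 - 2a^{T}(a+b) = -\lVert a\rVert^2 + \lVert b\rVert^2$ after first bounding $\tfrac{L\eta^2 B}{2}\le\tfrac{\eta}{2}$, whereas you keep $\alpha = L\eta B$ general and apply Young's inequality, which collapses to the same one-step bound $-\tfrac{\eta}{2}\lVert\nabla\mathcal{L}_{\mathcal{D}'}\rVert^2 + \tfrac{\eta}{2}\lVert d_t\rVert^2 + \tfrac{L\eta^2 C}{2}$.
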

\noindent \textit{Proof.} See Appendix \ref{sec:descent}.


The second step is to show that the \textit{unbiased} SGD iterates $\{\theta'_t \}_{t=0}^T$ also converge close to $\theta^{*'}$ through the classic convergence analysis. Finally, the last step is to show that during unlearning, the \textit{unbiased} SGD iterates $\{ \theta_{t}''\}_{t=0}^{K}$ will converge closer $\theta^{*'}$, closing the gap up to some neighborhood determined by the stochasticity. By tracking the progress of the loss value $\mathcal{L}_{\mathcal{D}'}$ and leveraging the quadratic growth condition (\ref{eq:qg}), we achieve a second-moment bound on $\lVert \theta'_T - \theta''_K \rVert$, which can be combined with Lemma \ref{lemma:indistinguishfirstsecond} to yield $(\varepsilon, 
\delta)$-indistinguishability.  In Appendix \ref{sec:unlearningstronglyconvex}, we carry out these proof components to achieve the result in Theorem \ref{thm:unlearningstronglyconvex}. This approach is unique to strongly convex functions and cannot be achieved for convex and nonconvex functions.

\subsubsection{Proof of Lemma \ref{lemma:relative_bias_bound}}
\label{sec:relative_bias_bound}

\begin{proof}

Combining Lemma \ref{lemma:unlearningbias} with Assumption \ref{assump:bc_assump}, we have 
\begin{align*}
    \lVert \nabla \mathcal{L}_{\mathcal{D}}(\theta) - \nabla \mathcal{L}_{\mathcal{D}'}(\theta) \rVert^2 \leq& \frac{m}{n - m} (B \lVert  \nabla \mathcal{L}_{\mathcal{D}}(\theta) \rVert^2 + C),\\
    =& \frac{m}{n - m} (B \lVert  \nabla \mathcal{L}_{\mathcal{D}}(\theta) - \nabla \mathcal{L}_{\mathcal{D}'}(\theta) + \nabla \mathcal{L}_{\mathcal{D}'}(\theta) \rVert^2 +C),\\
    \overset{\lVert a + b \rVert^2 \leq 2 \lVert a \rVert^2 + 2 \lVert b \rVert^2}{\leq}&\frac{m}{n - m} (2 B \lVert  \nabla \mathcal{L}_{\mathcal{D}'}(\theta) \rVert^2 + 2 B \lVert \nabla \mathcal{L}_{\mathcal{D}}(\theta) - \nabla \mathcal{L}_{\mathcal{D}'}(\theta) \rVert^2 + C)\\
    (1 - \frac{2 B m }{n-m})\lVert \nabla \mathcal{L}_{\mathcal{D}}(\theta) - \nabla \mathcal{L}_{\mathcal{D}'}(\theta) \rVert^2 \leq&\frac{2 B m }{n-m }  \lVert  \nabla \mathcal{L}_{\mathcal{D}'}(\theta) \rVert^2 + \frac{C m}{n-m}\\
    \lVert \nabla \mathcal{L}_{\mathcal{D}}(\theta) - \nabla \mathcal{L}_{\mathcal{D}'}(\theta) \rVert^2 \leq&\frac{2B m }{n - m - 2Bm}  \lVert  \nabla \mathcal{L}_{\mathcal{D}'}(\theta) \rVert^2 + \frac{C m }{n - m - 2Bm},
\end{align*}
where in the last step, the inequality is maintained after dividing both sides by $1 - \frac{2Bm}{n - m}$, which is positive since $\frac{m}{n} < \frac{1}{6B + 1} \leq \frac{1}{2B + 1}$. We note that $\frac{2 B m}{n - m - 2 B m} \leq \frac{1}{2}$, which can be used to simplify to the final result.

\end{proof}

\subsubsection{Proof of Lemma \ref{lemma:descent}}
\label{sec:descent}

\begin{proof}
We define the bias term $d_t$ and zero-mean noise term $\xi_t$ as
\begin{align*}
    d_t &=  \nabla \mathcal{L}_{\mathcal{D}'} (\theta_t) - \nabla \mathcal{L}_{\mathcal{D}} (\theta_t)\\
    \xi_t &= \nabla \mathcal{L}_{\mathcal{D}} (\theta_t) - g(\theta_t).
\end{align*}
Let $\{\mathcal{F}_t\}_{t\geq 0} = \{\sigma(\xi_0,...,\xi_{t-1}) \}_{t \geq 0}$ denote the natural filtration adapted to $\xi_t$ such that $\theta_t$ is $\mathcal{F}_t$-measurable and we have
$$\mathbb{E}[\xi_t \,|\, \mathcal{F}_t] = \mathbb{E}[\nabla \mathcal{L}_{\mathcal{D}}(\theta_t) - g(\theta_t) \,|\, \mathcal{F}_t] = 0.$$

By Lipschitz smoothness (Lemma \ref{lemma:lipschitzsmooth}), we have
\begin{align*} 
    \mathcal{L}_{\mathcal{D}'}(\theta_{t+1}) - \mathcal{L}_{\mathcal{D}'} (\theta_{t}) \leq & \nabla \mathcal{L}_{\mathcal{D}'}(\theta_{t})^T (- \eta \nabla \mathcal{L}_{\mathcal{D}'}(\theta_t) + \eta d_t + \eta \xi_t) + \frac{L \eta^2}{2} \lVert g(\theta_t) \rVert^2, \\
   \mathbb{E}[\mathcal{L}_{\mathcal{D}'} (\theta_{t+1}) \,|\, \mathcal{F}_t] - \mathcal{L}_{\mathcal{D}'} (\theta_{t}) \leq & \nabla \mathcal{L}_{\mathcal{D}'}(\theta_{t})^T (- \eta \nabla \mathcal{L}_{\mathcal{D}'}(\theta_t) + \eta d_t) + \frac{L \eta^2}{2} \mathbb{E}[\lVert  g(\theta_t) \rVert^2 \,|\, \mathcal{F}_t],\\
    \overset{\text{Assumption \ref{assump:bc_assump}}}{\leq} & \nabla \mathcal{L}_{\mathcal{D}'}(\theta_{t})^T (- \eta \nabla \mathcal{L}_{\mathcal{D}'}(\theta_t) + \eta d_t) + \frac{L \eta^2}{2} (B \lVert \nabla \mathcal{L}_{\mathcal{D}}(\theta_t) \rVert^2 +  C), \\
    = & \nabla \mathcal{L}_{\mathcal{D}'}(\theta_{t})^T (- \eta \nabla \mathcal{L}_{\mathcal{D}'}(\theta_t) + \eta d_t) + \frac{L \eta^2}{2} (B \lVert \nabla \mathcal{L}_{\mathcal{D}'}(\theta_t) + d_t \rVert^2 +  C),\\
    = & - \eta \nabla \mathcal{L}_{\mathcal{D}'}(\theta_{t})^T ( \nabla \mathcal{L}_{\mathcal{D}'}(\theta_t) +  d_t) + \frac{L \eta^2 B}{2} \lVert  \nabla \mathcal{L}_{\mathcal{D}'}(\theta_t) + d_t \rVert^2 + \frac{L \eta^2 C}{2}, \\
    \overset{\eta \leq \frac{1}{BL}}{\leq} & - \eta \nabla \mathcal{L}_{\mathcal{D}'}(\theta_{t})^T ( \nabla \mathcal{L}_{\mathcal{D}'}(\theta_t) +  d_t) + \frac{\eta}{2} \lVert  \nabla \mathcal{L}_{\mathcal{D}'}(\theta_t) + d_t \rVert^2 + \frac{L \eta^2 C}{2}.
    \end{align*}
    By the fact that $\lVert a + b \rVert^2  - 2 a^T (a + b) = - \lVert a \rVert^2 + \lVert b \rVert^2$, the bias cross term $\nabla \mathcal{L}_{\mathcal{D}'}(\theta_t)^T d_t$ can get ``folded" into the norm squared of $d_t$, which is bounded away with Lemma \ref{lemma:relative_bias_bound}. We have
    \begin{align*}
    \mathbb{E}[\mathcal{L}_{\mathcal{D}'} (\theta_{t+1}) \,|\, \mathcal{F}_t] - \mathcal{L}_{\mathcal{D}'} (\theta_{t}) \leq & \frac{\eta}{2} (- 2\nabla \mathcal{L}_{\mathcal{D}'}(\theta_{t})^T ( \nabla \mathcal{L}_{\mathcal{D}'}(\theta_t) +  d_t) + \lVert  \nabla \mathcal{L}_{\mathcal{D}'}(\theta_t) + d_t \rVert^2) + \frac{L \eta^2 C}{2}, \\
    =& -\frac{\eta}{2} \lVert \nabla \mathcal{L}_{\mathcal{D}'}(\theta_t) \rVert^2 + \frac{\eta}{2} \lVert d_t \rVert^2 + \frac{L \eta^2 C}{2}.
\end{align*}

Assuming that $\frac{m}{n} \leq \frac{1}{1 + 6B}$, we can bound the bias term by  Lemma \ref{lemma:relative_bias_bound} as follows,
\begin{align*}
    \mathbb{E}[\mathcal{L}_{\mathcal{D}'} (\theta_{t+1}) \,|\, \mathcal{F}_t] - \mathcal{L}_{\mathcal{D}'} (\theta_{t}) \overset{ \text{Lemma \ref{lemma:relative_bias_bound}}}{\leq}& -\frac{\eta}{2} \lVert \nabla \mathcal{L}_{\mathcal{D}'}(\theta_t) \rVert^2 + \frac{\eta}{2} (\frac{1}{2} \lVert \mathcal{L}_{\mathcal{D}'}(\theta)\rVert^2 + \frac{C}{4B}) + \frac{L \eta^2 C}{2},\\
    \leq& -\frac{\eta}{4} \lVert \nabla \mathcal{L}_{\mathcal{D}'}(\theta_t) \rVert^2 + \frac{ \eta C }{8B} + \frac{L \eta^2 C}{2}.
\end{align*}
We can use the Polyak--\L{}ojasiewicz (PL) property of strongly convex functions (Assumption \ref{assump:stronglyconvex} and Definition \ref{def:PL}) to bound in terms of $\mathcal{L}^*_{\mathcal{D}'} = \inf_{\theta \in \mathbb{R}^d} \mathcal{L}_{\mathcal{D}'}(\theta)$.
\begin{align*}
     \mathbb{E}[\mathcal{L}_{\mathcal{D}'} (\theta_{t+1}) \,|\, \mathcal{F}_t] - \mathcal{L}_{\mathcal{D}'} (\theta_{t})  \overset{\text{Assumption \ref{assump:stronglyconvex}}}{\leq}& -\frac{\eta}{2} \mu (\mathcal{L}_{\mathcal{D}'}(\theta_t)- \mathcal{L}_{\mathcal{D}'}^*) +  \frac{\eta C }{8B} + \frac{L \eta^2 C}{2},\\
    \mathbb{E}[\mathcal{L}_{\mathcal{D}'} (\theta_{t+1}) \,|\, \mathcal{F}_t] - \mathcal{L}_{\mathcal{D}'}^* \leq& (1 -\frac{\eta \mu}{2} )(\mathcal{L}_{\mathcal{D}'}(\theta_t)- \mathcal{L}_{\mathcal{D}'}^*) + \frac{\eta C }{8B} + \frac{L \eta^2 C}{2},\\
    \mathbb{E}[\mathcal{L}_{\mathcal{D}'} (\theta_{t})] - \mathcal{L}_{\mathcal{D}'}^* \overset{\text{Lemma \ref{lemma:towerproperty}}}{\leq}& (1 - \frac{\eta \mu }{2})^t(\mathcal{L}_{\mathcal{D}'}(\theta_0)- \mathcal{L}_{\mathcal{D}'}^*) 
    +  \eta (\frac{C }{8B} + \frac{L \eta C}{2}) \sum_{i=0}^{t-1} (1 - \frac{\eta \mu}{2})^i,\\
    \leq& (1 - \frac{\eta \mu}{2} )^t(\mathcal{L}_{\mathcal{D}'}(\theta_0)- \mathcal{L}_{\mathcal{D}'}^*) +  \frac{2}{\mu } (\frac{C }{8B} + \frac{L \eta C}{2}),\\
    \leq& (1 - \frac{\eta \mu}{2} )^t(\mathcal{L}_{\mathcal{D}'}(\theta_0)- \mathcal{L}_{\mathcal{D}'}^*) + \frac{C }{4B \mu} + \frac{L \eta C}{\mu},
\end{align*}
where in the second to last step we upper bound the geometric series $\sum_{i=0}^{t-1} (1 - \frac{\eta \mu}{2})^i$.
\end{proof}

\subsubsection{Proof of Theorem \ref{thm:unlearningstronglyconvex}}
\label{sec:unlearningstronglyconvex}
\begin{proof}
 
We can analyze the linear convergence of $\theta'_t$ on $\mathcal{L}_{\mathcal{D}'}$ as follows,
\begin{align*} 
    \mathcal{L}_{\mathcal{D}'}(\theta'_{t+1}) - \mathcal{L}_{\mathcal{D}'} (\theta'_{t}) \leq & \nabla \mathcal{L}_{\mathcal{D}'}(\theta'_{t})^T (- \eta g_{\mathcal{B}'}(\theta'_t)) + \frac{L \eta^2}{2} \lVert g_{\mathcal{B}'}(\theta'_t) \rVert^2 \\
    \mathbb{E}[\mathcal{L}_{\mathcal{D}'} (\theta_{t+1}) \,|\, \mathcal{F}_t] - \mathcal{L}_{\mathcal{D}'} (\theta_{t}) \leq & \nabla \mathcal{L}_{\mathcal{D}'}(\theta_{t})^T (- \eta \nabla \mathcal{L}_{\mathcal{D}'}(\theta_t)) + \frac{L \eta^2}{2} \mathbb{E}[\lVert  g_{\mathcal{B}'}(\theta_t) \rVert^2 \,|\, \mathcal{F}_t]. \\
\overset{\text{Assumption \ref{assump:bc_assump}}}{\leq} & \nabla \mathcal{L}_{\mathcal{D}'}(\theta_{t})^T (- \eta \nabla \mathcal{L}_{\mathcal{D}'}(\theta_t)) + \frac{L \eta^2}{2} ( B \lVert \nabla \mathcal{L}_{\mathcal{D}'}(\theta) \rVert^2 + C) \\
    = & - \eta \lVert \nabla \mathcal{L}_{\mathcal{D}'}(\theta_{t}) \rVert^2 + \frac{L \eta^2 B}{2}   \lVert \nabla \mathcal{L}_{\mathcal{D}'}(\theta) \rVert^2 + \frac{L \eta^2 C}{2}.
\end{align*}
Let $\eta\leq \frac{1}{B L} $, then we have
\begin{align*} 
    \mathbb{E}[\mathcal{L}_{\mathcal{D}'} (\theta'_{t+1}) \,|\, \mathcal{F}_t] - \mathcal{L}_{\mathcal{D}'} (\theta'_{t}) \leq &  - \frac{\eta}{2} \lVert \nabla \mathcal{L}_{\mathcal{D}'}(\theta'_{t}) \rVert^2 + \frac{L \eta^2 C}{2},\\
    \overset{\text{Definition \ref{def:PL}}}{\leq} &  - \eta \mu (\mathcal{L}_{\mathcal{D}'}(\theta'_t) - \mathcal{L}_{\mathcal{D}'}^*) + \frac{L \eta^2 C}{2}, \\
    \mathbb{E}[\mathcal{L}_{\mathcal{D}'} (\theta'_{t})] - \mathcal{L}_{\mathcal{D}'}^* \leq & (1 - \eta \mu)^{t} (\mathcal{L}_{\mathcal{D}'}(\theta'_0) - \mathcal{L}_{\mathcal{D}'}^*) + \frac{L \eta^2 C}{2} \sum_{i = 0}^{t-1} (1 - \eta \mu )^i, \\
    \mathbb{E}[\mathcal{L}_{\mathcal{D}'} (\theta'_{t})] - \mathcal{L}_{\mathcal{D}'}^* \leq & (1 - \eta \mu)^{t} (\mathcal{L}_{\mathcal{D}'}(\theta'_0) - \mathcal{L}_{\mathcal{D}'}^*) + \frac{L \eta C}{2 \mu}, \\
    \leq &  (1 - \frac{\eta \mu}{2})^{t} (\mathcal{L}_{\mathcal{D}'}(\theta'_0) - \mathcal{L}_{\mathcal{D}'}^*) + \frac{L \eta C}{2 \mu}, \\
    \mathbb{E}[\mathcal{L}_{\mathcal{D}'} (\theta'_{T})] - \mathcal{L}_{\mathcal{D}'}^* \overset{\text{Assumption 
    \ref{assump:boundedell}}}{\leq} & (1 - \frac{\eta \mu}{2})^{T} 
    \ell_{\theta_0} + \frac{L \eta C}{2 \mu}. 
    \end{align*}
   Now we analyze the linear convergence of $\theta''_t$ on $\mathcal{L}_{\mathcal{D}'}$, leading to a similar result depending on $\mathbb{E}[\mathcal{L}_{\mathcal{D}'}(\theta''_0)] - \mathcal{L}_{\mathcal{D}'}^*$. This allows us to plug in the results from Lemma \ref{lemma:descent}. We have
    \begin{align*}
    \mathbb{E}[\mathcal{L}_{\mathcal{D}'} (\theta''_{K})] - \mathcal{L}_{\mathcal{D}'}^* \leq & (1 - \eta \mu)^{K} (\mathbb{E}[\mathcal{L}_{\mathcal{D}'}(\theta''_0)] - \mathcal{L}_{\mathcal{D}'}^*) + \frac{L \eta C}{2\mu},\\
    = & (1 - \eta \mu)^{K} (\mathbb{E}[\mathcal{L}_{\mathcal{D}'}(\theta_T)] - \mathcal{L}_{\mathcal{D}'}^*) + \frac{L \eta C}{2 \mu},\\
    \overset{\text{Lemma \ref{lemma:descent}}}{\leq} & (1 - \eta \mu)^{K} ((1 - \frac{\eta \mu}{2} )^T(\mathcal{L}_{\mathcal{D}'}(\theta_0)- \mathcal{L}_{\mathcal{D}'}^*) + \frac{C}{4 B \mu}+ \frac{L \eta C}{\mu}) + \frac{L \eta C}{2 \mu}, \\
    \overset{\eta \leq \frac{1}{BL}}{\leq} & (1 - \frac{\eta \mu}{2})^{K} ((1 -\frac{\eta \mu}{2} )^T(\mathcal{L}_{\mathcal{D}'}(\theta_0)- \mathcal{L}_{\mathcal{D}'}^*) + \frac{5C}{4 B \mu}) + \frac{L \eta C}{2\mu}, \\
    \overset{\text{Assumption 
    \ref{assump:boundedell}}}{\leq} & (1 - \frac{\eta \mu}{2})^{K} ((1 -\frac{\eta \mu}{2} )^T \ell_{\theta_0} + \frac{5C}{4 B \mu}) + \frac{L \eta C}{2\mu} .
\end{align*}
Let
$$T = K + \frac{\log(\ell_{\theta_0}) - \log(\frac{5C}{4 B \mu})}{\log(\frac{1}{1 - \eta \mu/2})},$$
then we have 
\begin{align*}
    \mathbb{E}[\mathcal{L}_{\mathcal{D}'} (\theta''_{K})] - \mathcal{L}_{\mathcal{D}'} ^*
    &\leq (\frac{5C}{4 B \mu})((1 - \frac{\eta \mu}{2})^{2K} +(1 - \frac{\eta \mu}{2})^{K}) +  \frac{L \eta C}{2 \mu}, \\
    \mathbb{E}[\mathcal{L}_{\mathcal{D}'} (\theta'_{T})] - \mathcal{L}_{\mathcal{D}'}^*
    &\leq (\frac{5C}{4 B \mu})(1 - \frac{\eta \mu}{2})^{K} +  \frac{L \eta C}{2 \mu} .
\end{align*}
By quadratic growth (Lemma \ref{lemma:qg}), we have
\begin{align*}
    \mathbb{E}[\lVert \theta''_K - \theta'_T \rVert^2] \leq & 2\mathbb{E}[\lVert \theta''_K - \theta^{*'} \rVert^2]  + 2\mathbb{E}[\lVert \theta'_T - \theta^{*'} \rVert^2], \\
    \leq & \frac{4}{\mu}(\mathbb{E}[\mathcal{L}_{\mathcal{D}'} (\theta''_{K})] - \mathcal{L}_{\mathcal{D}'}) + \frac{4}{\mu}(\mathbb{E}[\mathcal{L}_{\mathcal{D}'} (\theta'_{T})] - \mathcal{L}_{\mathcal{D}'}), \\
    \leq & \frac{4}{\mu} ((\frac{5C}{4 B \mu}) ((1 - \frac{\eta \mu}{2})^{2K} + 2(1 - \frac{\eta \mu}{2})^{K}) +  \frac{L \eta C}{\mu}) ,
\end{align*}
where the expectation is taken with respect to the random implementations of the unlearning algorithm producing $\theta''_K$ and the learning algorithm $\theta'_T$. 
Let 
$$\Sigma^2 =  \frac{5C}{B \mu^2}((1 - \frac{\eta \mu}{2})^{2K} + 2(1 - \frac{\eta \mu}{2})^{K}) +  \frac{4 L \eta C}{\mu^2} .$$
Then by Lemma \ref{lemma:indistinguishfirstsecond}, we obtain $(\varepsilon, 2 \delta)$-certified unlearning if we add Gaussian noise with standard deviation 
$$\sigma = \frac{\Sigma}{\varepsilon} \sqrt{\frac{2 \log (1.25/\delta)}{\delta}}.$$
\end{proof}

\section{Experiments}
\label{sec:experiments}

\subsection{Implementation Details}

We follow the experimental setup for unlearning detailed in Section 4.1 and Appendix B of \cite{mu2025rewindtodeletecertifiedmachineunlearning}, including dataset preparation, model architecture, unlearning procedure, and MIA implementations. We implement PSGD by projecting iterates onto a ball of radius $R$ centered on the origin. We use the  hyperparameters listed in Table \ref{tab:hyperparameters}. Code is open-sourced at the anonymous GitHub repository \url{https://anonymous.4open.science/r/r2d2-3753/}.

All experiments were run using PyTorch 2.5.0 and CUDA 12.1, on an Intel(R) Core(TM) i7-6850K CPU (3.60GHz) with an NVIDIA GeForce GTX 1080 GPU (8 GB VRAM) or on an Intel(R) Xeon(R) Silver 4208 CPU (2.10GHz) with an  NVIDIA RTX A6000 GPU (48 GB).

\begin{table}[ht]
\centering
\caption{R2D Experiment parameters for the eICU and Lacuna-100 datasets.}
\begin{tabular}{lll}
\toprule
\textbf{Parameter}                  & \textbf{eICU and MLP} & \textbf{Lacuna-100 and ResNet-18} \\ 
\midrule
Size of training dataset $n$        &94449      &32000        \\ 
Number of users &119282      &100        \\ 
Percent data unlearned &$\sim$ 1\%      &$\sim$ 2\% \\ 
Number of model parameters $d$ &136386      &11160258         \\ 
Batch size                 & 64  & 64    \\ 
$L$                        &0.059955      &       \\ 
$G$                        &0.820322       &        \\ 
$\eta$                     &0.001     &   0.01    \\ 
Number of training epochs                     &48      &   43   \\
$R$ & 10 & 50\\
\bottomrule
\end{tabular}
\label{tab:hyperparameters}
\end{table}

\subsection{Privacy-utility-complexity tradeoff}

To demonstrate the real-world tradeoffs implied by the derived guarantees in Theorem \ref{thm:unlearningr2d}, we vary $\varepsilon$, compute the required $\sigma$, and examine the model performance and MIA success. We estimate the constants $G$ and $L$ by sampling uniformly from the parameter space with radius $R$. Figure \ref{fig:eicuPUC} demonstrates the privacy-utility-complexity tradeoff of R2D on the nonconvex loss function on the eICU dataset, where ``Rewind Percent" is computed as $\frac{K}{T} \time 100 \%$. As expected, the behavior trends in accordance with the derived relationship between $\varepsilon$, $K$, and $\sigma$.

\subsection{Baseline Comparison} We are also interested in comparing R2D with the existing Privacy Amplification By Iteration PABI) algorithm \cite{koloskova2025certified} because of its nonconvex unlearning guarantees. We do not compare against Langevin Unlearning \cite{chien2024langevin} because, as stated in their work, ``the non-convex unlearning bound... currently is not tight enough to be applied in practice due to its exponential dependence on various hyperparameters," including the projection radius.

PABI follows a D2D-like structure but involves clipping and added noise at every step. One advantage of this approach is that it leverages  amplification to stop noise injections early, as long as the privacy budget is achieved. After this point the algorithm continues with noiseless finetuning steps to improve accuracy.

We implement PABI and R2D with $\varepsilon=1e7$ and $\delta=0.2$ and the same learning rate, batch size, projection radius, and number of iterations, and we compare their unlearning performance. We use a clipping radius of $1$ and $0.01$ weight decay.

Table \ref{tab:eicu_full} displays the results of our experiments. We observe that PABI achieves a better model performance due to a lower noise requirement and the noiseless finetuning steps. However, as with D2D, these finetuning steps also improve the performance on $\mathcal{D}_{unlearn}$, which may even increase after unlearning. In contrast, the noise for R2D is larger and degrades model performance, but it also reduces the gap between the model performance on $\mathcal{D}_{unlearn}$ and the performance on the other sets. As for the membership inference attacks, R2D defends more successfully. However, it is worth noting that MIAs are a somewhat flawed metric for unlearning and their success correlates with model performance.

\begin{figure}[h]
    \centering
    \includegraphics[width=0.99\linewidth]{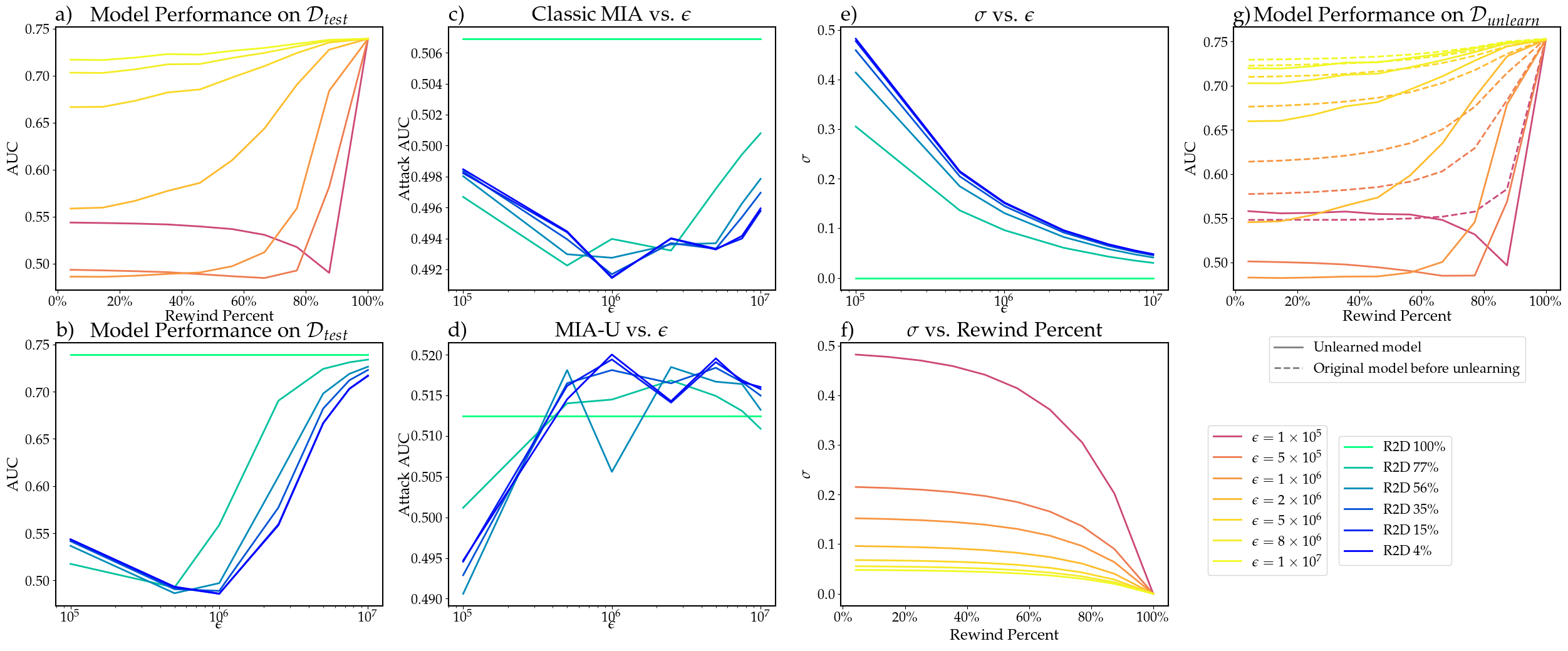}
    \caption{Privacy-utility-complexity tradeoff of PSGD R2D $(\varepsilon, \delta)$-unlearning on the eICU dataset. Rewind Percent is computed as $\frac{K}{T} \times 100\%$, where $K$ is the number of unlearning iterations and $T$ is the total number of training iterations.}
    \label{fig:eicuPUC}
\end{figure}

\begin{table}[]
    \small
    \centering
    \caption{Comparison of Certified Unlearning Algorithms on the eICU Dataset}
\begin{tabular}{lcccccc}
\toprule
 & \multicolumn{4}{c}{Model AUC} & \multicolumn{2}{c}{MIA AUC} \\
\cmidrule(lr){2-5} \cmidrule(lr){6-7}
Algorithm & $\mathcal{D}_{ood}$ & $\mathcal{D}_{retain}$ & $\mathcal{D}_{unlearn}$ & $\mathcal{D}_{test}$ & Classic MIA & MIA-U \\
\midrule
\textbf{Original Model} \\
  & 0.728316 & 0.741235 & 0.753042 & 0.739546 & &  \\

\midrule
\textbf{Certified Unlearning Algorithms} \\
R2D 14\% & 0.706665 & 0.713928 & 0.719341 & 0.716691 & 0.495783{\tiny$\pm$0.025725} & 0.515777{\tiny$\pm$0.025309} \\
R2D 35\% & 0.713061 & 0.720680 & 0.726063 & 0.723026 & 0.496954{\tiny$\pm$0.026146} & 0.514982{\tiny$\pm$0.025385} \\
PABI (14\%) & 0.728882 & 0.741641 & 0.753509 & 0.739931 & 0.507510{\tiny$\pm$0.028092} & 0.511495{\tiny$\pm$0.024315} \\
PABI (35\%) & 0.728660	 & 0.741511 & 0.752976	 & 0.739329 & 0.506902 {\tiny$\pm$0.028156} & 0.5125	{\tiny$\pm$0.025596	} \\

\midrule 
\textbf{Noiseless Retrain} (R2D 100\%) & 0.728034 & 0.741064 & 0.752053 & 0.7321 & 0.5060{\tiny$\pm$0.0285} & 0.5064{\tiny$\pm$0.0241} \\
\midrule

\bottomrule
\end{tabular}
\label{tab:eicu_full}
\end{table}

\subsection{Hyperparameters $R$ and $b$}

We conduct additional experiments to ensure appropriate choices of projection radius $R$ and batch size $b$. For $R$, following the precedent in \cite{zhang2024towards}, we implement projected SGD on a ball of radius $R$ to maintain the strictness of our results while choosing $R$ large enough to minimize impact on model utility. We assess this by examining the training loss and error for varying choices of $R$. For these experiments, we train the model (with batch size $b = 64$) and perform model selection of the parameters with lowest validation loss. As shown in Table \ref{tab:radius}, the choices of $R = 10$ and $R = 50$ for eICU and Lacuna-100 are have minimal impact on model performance.

For the batch size $b$, we desire a batch size that is small enough to highlight the effects of stochasticity but large enough so that training is stable enough to yield good model performance. We consider the training and test loss for varying choices of $b$, where for each the model is trained with the same number of \textit{iterations}. Tables \ref{tab:batchsize1} and \ref{tab:batchsize2} demonstrate the impact of varying batch sizes on the performance on the validation and training sets.

\begin{table}[]
\centering
    \caption{Choice of projection radius $R$ vs. model performance for eICU (left) and Lacuna-100 (right).}

\begin{tabular}{lcccc}
\toprule
\multicolumn{5}{c}{eICU Dataset} \\
\midrule
$R$ & Train Error & Train Loss & $G$ & $L$ \\
\midrule
1 & 0.403646 & 0.679236 & 0.722412 & 0.077918 \\
2 & 0.394975 & 0.647507 & 0.726842 & 0.076906 \\
5 & 0.313513 & 0.588481 & 0.745951 & 0.071007 \\
10 & 0.308939 & 0.581918 & 0.820322 & 0.059955 \\
15 & 0.308907 & 0.581422 & 1.084669 & 0.057535 \\
20 & 0.308907 & 0.581422 & 1.919703 & 0.082583 \\
\bottomrule
\end{tabular}
\hspace{0.5 in}
\begin{tabular}{lrr}
\toprule
\multicolumn{3}{c}{Lacuna-100 Dataset} \\
\midrule
$R$ & Train Error & Train Loss \\
\midrule
20 & 0.502000 & 0.693194 \\
30 & 0.059719 & 0.148735 \\
40 & 0.003563 & 0.013695 \\
50 & 0.020094 & 0.058069 \\
60 & 0.012969 & 0.045947 \\
70 & 0.012969 & 0.045947 \\
\bottomrule
\end{tabular}

\label{tab:radius}
\end{table}

\begin{table}[]
\centering
    \caption{Choice of batch size $b$ vs. model performance for the eICU dataset.}
\begin{tabular}{lccccc}
\toprule
 & Batch Size & Train Loss & Train Error & Validation Loss & Validation Error \\
\midrule
5 & 8 & 0.416120 & 0.250000 & 0.583045 & 0.308373 \\
4 & 16 & 0.513511 & 0.250000 & 0.582881 & 0.308373 \\
3 & 32 & 0.493724 & 0.250000 & 0.581867 & 0.307314 \\
2 & 64 & 0.551778 & 0.296875 & 0.581444 & 0.306467 \\
1 & 128 & 0.557833 & 0.265625 & 0.581424 & 0.306425 \\
0 & 256 & 0.588656 & 0.313111 & 0.588421 & 0.309813 \\
\bottomrule
\end{tabular}

\label{tab:batchsize1}
\end{table}

\begin{table}[H]
\centering
    \caption{Choice of batch size $b$ vs. model performance for the Lacuna-100 dataset.}
\begin{tabular}{lccccc}
\toprule
 & Batch Size & Train Loss & Train Error & Validation Loss & Validation Error \\
\midrule
5 & 8 & 0.315790 & 0.250000 & 0.244619 & 0.096125 \\
4 & 16 & 0.098530 & 0.000000 & 0.184725 & 0.074375 \\
3 & 32 & 0.020336 & 0.000000 & 0.201231 & 0.056375 \\
2 & 64 & 0.010567 & 0.000000 & 0.220710 & 0.049125 \\
1 & 128 & 0.000255 & 0.000000 & 0.342793 & 0.051500 \\
0 & 256 & 0.126300 & 0.048500 & 0.230133 & 0.093125 \\
\bottomrule
\end{tabular}

\label{tab:batchsize2}
\end{table}

\section{Potential Future Directions}

There are several potential future directions of this work. First, we note that other certified unlearning algorithms, such as \cite{chien2024stochastic} and \cite{koloskova2025certified}, follow a D2D-like framework with clipped or noisy gradient updates, instead of vanilla SGD. Combining their approaches with R2D instead could yield improved guarantees, especially on nonconvex functions. Second, our results for PSGD rely on  sampling \textit{with replacement} to achieve tighter bounds. Whenever a sample from $Z$ is selected during the generation of $\theta_t$, we can replace it with an i.i.d. sample from $\mathcal{D}'$ for producing $\theta'_t$, maintaining a ``valid" run of $\theta'_t$. However, if sampling occurs without replacement, then drawing one sample affects the probability of the next. Developing the corresponding theory for sampling without replacement, which is more commonly implemented in practice, would be a useful future direction.

\section{Additional Discussion of Related Work}
\label{section:morerelatedwork}

\paragraph{Differential Privacy.} The concept of certified unlearning is directly built on the existing mathematical framework of differential privacy (DP) \cite{dwork2006}. DP formalizes privacy guarantees by leveraging noise injection to limit the impact of the inclusion or exclusion of any one data sample on the output of the algorithm. The level of privacy is controlled by the parameters $\varepsilon$ and $\delta$, as stated in the definition below.

\begin{definition}
    \cite{dwork2014algorithmic} A randomized algorithm $\mathcal{M}$ with domain $\mathbb{N}^{|\mathcal{X}|}$ is $(\varepsilon, \delta)$-differentially private if for all $S \subset Range(\mathcal{M})$ and for all adjacent datasets $\mathcal{D}, \mathcal{D}' \in \mathbb{N}^{|\mathcal{X}|}$
    $$\mathbb{P}[\mathcal{M}(\mathcal{D}) \in S] \leq e^{\varepsilon} \mathbb{P}[\mathcal{M}(\mathcal{D}') \in S] + \delta.$$
\end{definition}
A common DP technique is the Gaussian mechanism, which scales Gaussian noise to the \textit{deterministic} global sensitivity, a worst-case bound on how much the algorithm output shifts when the underlying dataset is changed \cite{dwork2014algorithmic}. This technique has been applied to privatizing deterministic functions such as dataset queries or full-batch gradient descent, as well as \textit{randomized} algorithms such as SGD \cite{abadiDP, wublackboxdp, zhangDPijcai2017p548}. In particular, to bound the sensitivity, the well-known DP-SGD algorithm \cite{abadiDP} clips the gradient at each step to minimize the impact of any single data sample. Alternatively, the black-box SGD algorithm \cite{wublackboxdp} requires a uniformly bounded gradient and strongly convex loss function.



\paragraph{Biased SGD.} Our work relies on interpreting the certified unlearning problem as a form of biased or disturbed gradient descent. There are two complementary viewpoints in the literature. First, several works analyze biased SGD from a classic optimization perspective to achieve convergence to a minimum or stationary point. Typically, these works establish minimal viable assumptions on the bias or noise to achieve convergence. For example, \cite{demidovich2023guidezoobiasedsgd} summarizes biased SGD results for nonconvex and strongly convex (or PL) loss functions. In addition, \cite{pmlr-v51-hu16b} considers bandit convex optimization with absolutely bounded bias and on a bounded convex domain, and \cite{driggsbiasedSGD} considers biased SGD schemes that satisfy a specific ``memory-biased" property, like SARAH and SVRG. In our work, we utilize the groundwork laid in \cite{AjalloeianBiasedSGD} to show unlearning for D2D on strongly convex functions.

The second viewpoint of biased SGD is through the lens of nonlinear contraction theory \cite{kozachkovgeneralization} \cite{sontag2021remarks}, characterizing the bias at every step as disturbances to a noisy gradient system. The trajectory stability, which determines how far a gradient trajectory will diverge when disturbed, can be used to bound the distance between SGD training on two different loss functions. We leverage well-known properties of these systems to analyze stochastic R2D unlearning.


\end{document}